\theoremstyle{plain}
\newtheorem{thm}{\protect\theoremname}
\newenvironment{proof}[1][\protect\proofname]{\par
\normalfont\topsep6\p@\@plus6\p@\relax
\trivlist
\itemindent\parindent
\item[\hskip\labelsep\scshape #1]\ignorespaces
}{%
\endtrivlist\@endpefalse
}
\providecommand{\proofname}{Proof}
\theoremstyle{plain}
\newtheorem{cor}[thm]{\protect\corollaryname}
\theoremstyle{plain}
\newtheorem{lem}[thm]{\protect\lemmaname}
\theoremstyle{remark}
\newtheorem{rem}[thm]{\protect\remarkname}
\providecommand{\corollaryname}{Corollary}
\providecommand{\lemmaname}{Lemma}
\providecommand{\remarkname}{Remark}
\providecommand{\theoremname}{Theorem}
\begin{document}

\title{Maximum Likelihood Learning With Arbitrary Treewidth via Fast-Mixing
Parameter Sets}

\author{Justin Domke\\
NICTA, Australian National University\\
\texttt{justin.domke@nicta.com.au}}
\maketitle
\begin{abstract}
Inference is typically intractable in high-treewidth undirected graphical
models, making maximum likelihood learning a challenge. One way to
overcome this is to restrict parameters to a tractable set, most typically
the set of tree-structured parameters. This paper explores an alternative
notion of a tractable set, namely a set of ``fast-mixing parameters''
where Markov chain Monte Carlo (MCMC) inference can be guaranteed
to quickly converge to the stationary distribution. While it is common
in practice to approximate the likelihood gradient using samples obtained
from MCMC, such procedures lack theoretical guarantees. This paper
proves that for any exponential family with bounded sufficient statistics,
(not just graphical models) when parameters are constrained to a fast-mixing
set, gradient descent with gradients approximated by sampling will
approximate the maximum likelihood solution inside the set with high-probability.
When unregularized, to find a solution $\epsilon$-accurate in log-likelihood
requires a total amount of effort cubic in $1/\epsilon$, disregarding
logarithmic factors. When ridge-regularized, strong convexity allows
a solution $\epsilon$-accurate in parameter distance with effort
quadratic in $1/\epsilon$. Both of these provide of a fully-polynomial
time randomized approximation scheme.
\end{abstract}

\section{Introduction}

In undirected graphical models, maximum likelihood learning is intractable
in general. For example, Jerrum and Sinclair \citep{Jerrum1993Polynomialtimeapproximation}
show that evaluation of the partition function (which can easily be
computed from the likelihood) for an Ising model is \#P-complete,
and that even the existence of a fully-polynomial time randomized
approximation scheme (FPRAS) for the partition function would imply
that RP = NP.

If the model is well-specified (meaning that the target distribution
falls in the assumed family) then there exist several methods that
can efficiently recover correct parameters, among them the pseudolikelihood
\citep{Besag1975StatisticalAnalysisNon}, score matching \citep{Hyvarinen2005EstimationNonNormalized,Marlin2011AsymptoticEfficiencyDeterministic},
composite likelihoods \citep{Lindsay1988CompositeLikelihoodMethods,Varin2011OverviewCompositeLikelihood},
Mizrahi et al.'s \citep{Mizrahi2014LinearandParallel} method
based on parallel learning in local clusters of nodes and Abbeel et
al.'s \citep{Abbeel2006LearningFactorGraphs} method based on
matching local probabilities. While often useful, these methods have
some drawbacks. First, these methods typically have inferior sample
complexity to the likelihood. Second, these all assume a well-specified
model. If the target distribution is not in the assumed class, the
maximum-likelihood solution will converge to the M-projection (minimum
of the KL-divergence), but these estimators do not have similar guarantees.
Third, even when these methods succeed, they typically yield a distribution
in which inference is still intractable, and so it may be infeasible
to actually make use of the learned distribution.

Given these issues, a natural approach is to restrict the graphical
model parameters to a tractable set $\Theta$, in which learning and
inference can be performed efficiently. The gradient of the likelihood
is determined by the marginal distributions, whose difficulty is typically
determined by the treewidth of the graph. Thus, probably the most
natural tractable family is the set of tree-structured distributions,
where $\Theta=\{\theta:\exists\text{tree}~T,\forall(i,j)\not\in T,\theta_{ij}=0\}.$
The Chow-Liu algorithm \citep{Chow1968Approximatingdiscreteprobability}
provides an efficient method for finding the maximum likelihood parameter
vector $\theta$ in this set, by computing the mutual information
of all empirical pairwise marginals, and finding the maximum spanning
tree. Similarly, Heinemann and Globerson \citep{Heinemann2014InferningwithHigh}
give a method to efficiently learn high-girth models where correlation
decay limits the error of approximate inference, though this will
not converge to the M-projection when the model is mis-specified.

This paper considers a fundamentally different notion of tractability,
namely a guarantee that Markov chain Monte Carlo (MCMC) sampling will
quickly converge to the stationary distribution. Our fundamental result
is that if $\Theta$ is such a set, and one can project onto $\Theta$,
then there exists a FPRAS for the maximum likelihood solution inside
$\Theta$. While inspired by graphical models, this result works entirely
in the exponential family framework, and applies generally to any
exponential family with bounded sufficient statistics.

The existence of a FPRAS is established by analyzing a common existing
strategy for maximum likelihood learning of exponential families,
namely gradient descent where MCMC is used to generate samples and
approximate the gradient. It is natural to conjecture that, if the
Markov chain is fast mixing, is run long enough, and enough gradient
descent iterations are used, this will converge to nearly the optimum
of the likelihood inside $\Theta$, with high probability. This paper
shows that this is indeed the case. A separate analysis is used for
the ridge-regularized case (using strong convexity) and the unregularized
case (which is merely convex).

\section{Setup\label{sec:Setup}}

Though notation is introduced when first used, the most important
symbols are given here for more reference.
\begin{itemize}
\item $\theta$ - parameter vector to be learned
\item $\mathbb{M}_{\theta}$ - Markov chain operator corresponding to $\theta$
\item $\theta_{k}$ - estimated parameter vector at $k$-th gradient descent
iteration
\item $q_{k}=\mathbb{M}_{\theta_{k-1}}^{v}r$ - approximate distribution
sampled from at iteration $k$. ($v$ iterations of the Markov chain
corresponding to $\theta_{k-1}$ from arbitrary starting distribution
$r$.)
\item $\Theta$ - constraint set for $\theta$
\item $f$ - negative log-likelihood on training data
\item $L$ - Lipschitz constant for the gradient of $f$.
\item $\theta^{*}=\arg\min_{\theta\in\Theta}f(\theta)$ - minimizer of likelihood
inside of $\Theta$
\item $K$ - total number of gradient descent steps
\item $M$ - total number of samples drawn via MCMC
\item $N$ - length of vector $x$.
\item $v$ - number of Markov chain transitions applied for each sample
\item $C,\alpha$ - parameters determining the mixing rate of the Markov
chain. (Equation \ref{eq:markov-chain-speed})
\item $R_{a}$ - sufficient statistics norm bound.
\item $\epsilon_{f}$ - desired optimization accuracy for $f$
\item $\epsilon_{\theta}$ - desired optimization accuracy for $\theta$
\item $\delta$ - permitted probability of failure to achieve a given approximation
accuracy
\end{itemize}
This paper is concerned with an exponential family of the form
\[
p_{\theta}(x)=\exp(\theta\cdot t(x)-A(\theta)),
\]
where $t(x)$ is a vector of sufficient statistics, and the log-partition
function $A(\theta)$ ensures normalization. An undirected model can
be seen as an exponential family where $t$ consists of indicator
functions for each possible configuration of each clique \citep{Wainwright2008GraphicalModelsExponential}.
While such graphical models motivate this work, the results are most
naturally stated in terms of an exponential family and apply more
generally.

\begin{figure}
\hspace{-20pt}%
\parbox[t]{0.6\columnwidth}{%
\begin{itemize}
\item Initialize $\theta_{0}=0$.
\item For $k=1,2,...,K$

\begin{itemize}
\item Draw samples. For $i=1,...,M$, sample\vspace{-2bp}
\[
x_{i}^{k-1}\sim q_{k-1}:=\mathbb{M}_{\theta_{k-1}}^{v}r.
\]
\vspace{-14bp}

\item Estimate the gradient as\vspace{-5bp}
\[
f'(\theta_{k-1})+e_{k}\leftarrow\frac{1}{M}\sum_{i=1}^{M}t(x_{i}^{k-1})-\bar{t}+\lambda\theta.
\]
\vspace{-9bp}

\item Update the parameter vector as\vspace{-4bp}
\[
\theta_{k}\leftarrow\Pi_{\Theta}\left[\theta_{k-1}-\frac{1}{L}\left(f'(\theta_{k-1})+e_{k})\right)\right].
\]
\vspace{-8bp}

\end{itemize}
\item Output $\theta_{K}$ or $\frac{1}{K}\sum_{k=1}^{K}\theta_{k}$.\end{itemize}
}%
\parbox[t][1\totalheight][c]{0.44\columnwidth}{%
\begin{center}
\vspace{10bp}
\includegraphics[width=0.44\columnwidth]{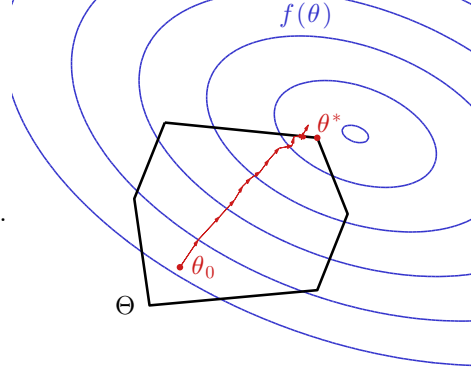}
\par\end{center}%
}

\caption{Left: \textbf{Algorithm 1}, approximate gradient descent with gradients
approximated via MCMC, analyzed in this paper. Right: A cartoon of
the desired performance, stochastically finding a solution near $\theta^{*}$,
the minimum of the regularized negative log-likelihood $f(\theta)$
in the set $\Theta$.\label{fig:outline-and-cartoon}}
\end{figure}

We are interested in performing maximum-likelihood learning, i.e.
minimizing, for a dataset $z_{1},...,z_{D}$,
\begin{equation}
f(\theta)=-\frac{1}{D}\sum_{i=1}^{D}\log p_{\theta}(z_{i})+\frac{\lambda}{2}\Vert\theta\Vert_{2}^{2}=A(\theta)-\theta\cdot\bar{t}+\frac{\lambda}{2}\Vert\theta\Vert_{2}^{2},\label{eq:likelihood}
\end{equation}
where we define $\bar{t}=\frac{1}{D}\sum_{i=1}^{D}t(z_{i}).$ It is
easy to see that the gradient of $f$ takes the form
\[
f'(\theta)=\mathbb{E}_{p_{\theta}}[t(X)]-\bar{t}+\lambda\theta.
\]

If one would like to optimize $f$ using a gradient-based method,
computing the expectation of $t(X)$ with respect to $p_{\theta}$
can present a computational challenge. With discrete graphical models,
the expected value of $t$ is determined by the marginal distributions
of each factor in the graph. Typically, the computational difficulty
of computing these marginal distributions is determined by the treewidth
of the graph-- if the graph is a tree, (or close to a tree) the marginals
can be computed by the junction-tree algorithm \citep{Koller2009ProbabilisticGraphicalModels:}.
One option, with high treewidth, is to approximate the marginals with
a variational method. This can be seen as exactly optimizing a ``surrogate
likelihood'' approximation of Eq. \ref{eq:likelihood} \citep{Wainwright2006EstimatingWrongGraphical}.

Another common approach is to use Markov chain Monte Carlo (MCMC)
to compute a sample $\{x_{i}\}_{i=1}^{M}$ from a distribution close
to $p_{\theta}$, and then approximate $\mathbb{E}_{p_{\theta}}[t(X)]$
by $(1/M)\sum_{i=1}^{M}t(x_{i})$. This strategy is widely used, varying
in the model type, the sampling algorithm, how samples are initialized,
the details of optimization, and so on \citep{Geyer1991MarkovchainMonte,Salakhutdinov2009LearninginMarkov,Schmidt2010GenerativePerspectiveMRFs,Papandreou2011PerturbandMAP,XavierDescombes1996EstimationMarkovRandom,Zhu1998FiltersRandomFields,Gu2001Maximumlikelihoodestimation,Asuncion2010LearningwithBlocks,Tieleman2008TrainingrestrictedBoltzmann,nan2005ContrastiveDivergenceLearning}.
Recently, Steinhardt and Liang \citep{Steinhardt2015LearningFastMixing}
proposed learning in terms of the stationary distribution obtained
from a chain with a nonzero restart probability, which is fast-mixing
by design.

While popular, such strategies generally lack theoretical guarantees.
If one were able to exactly sample from $p_{\theta}$, this could
be understood simply as stochastic gradient descent. But, with MCMC,
one can only sample from a distribution approximating $p_{\theta}$,
meaning the gradient estimate is not only noisy, but also biased.
In general, one can ask how should the step size, number of iterations,
number of samples, and number of Markov chain transitions be set to
achieve a convergence level.

The gradient descent strategy analyzed in this paper, in which one
updates a parameter vector $\theta_{k}$ using approximate gradients
is outlined and shown as a cartoon in Figure \ref{fig:outline-and-cartoon}.
Here, and in the rest of the paper, we use $p_{k}$ as a shorthand
for $p_{\theta_{k}}$, and we let $e_{k}$ denote the difference between
the estimated gradient and the true gradient $f'(\theta_{k-1})$.
The projection operator is defined by $\Pi_{\Theta}[\phi]=\arg\min_{\theta\in\Theta}||\theta-\phi||_{2}$.

We assume that the parameter set $\theta$ is constrained to a set
$\Theta$ such that MCMC is guaranteed to mix at a certain rate (Section
\ref{sub:Mixing-times-and}). With convexity, this assumption can
bound the mean and variance of the errors at each iteration, leading
to a bound on the sum of errors. With strong convexity, the error
of the gradient at each iteration is bounded with high probability.
Then, using results due to \citep{Schmidt2011ConvergenceRatesInexact}
for projected gradient descent with errors in the gradient, we show
a schedule the number of iterations $K$, the number of samples $M$,
and the number of Markov transitions $v$ such that with high probability,
\[
f\left(\frac{1}{K}\sum_{k=1}^{K}\theta_{k}\right)-f\left(\theta^{*}\right)\leq\epsilon_{f}\text{ \,\,or\,\, }\Vert\theta_{K}-\theta^{*}\Vert_{2}\leq\epsilon_{\theta},
\]
for the convex or strongly convex cases, respectively, where $\theta^{*}\in\arg\min_{\theta\in\Theta}f(\theta)$.
The total number of Markov transitions applied through the entire
algorithm, $KMv$ grows as $(1/\epsilon_{f})^{3}\log(1/\epsilon_{f})$
for the convex case, $(1/\epsilon_{\theta}^{2})\log(1/\epsilon_{\theta}^{2})$
for the strongly convex case, and polynomially in all other parameters
of the problem.

\section{Background}

\subsection{Mixing times and Fast-Mixing Parameter Sets\label{sub:Mixing-times-and}}

This Section discusses some background on mixing times for MCMC. Typically,
mixing times are defined in terms of the \textbf{total-variation distance}
$\Vert p-q\Vert_{TV}=\max_{A}\vert p(A)-q(A)\vert,$ where the maximum
ranges over the sample space. For discrete distributions, this can
be shown to be equivalent to $\Vert p-q\Vert_{TV}=\frac{1}{2}\sum_{x}\vert p(x)-q(x)\vert.$

We assume that a sampling algorithm is known, a single iteration of
which can be thought of an operator $\mathbb{M}_{\theta}$ that transforms
some starting distribution into another. The stationary distribution
is $p_{\theta}$, i.e. $\lim_{v\rightarrow\infty}\mathbb{M}_{\theta}^{v}q=p_{\theta}$
for all $q$. Informally, a Markov chain will be fast mixing if the
total variation distance between the starting distribution and the
stationary distribution decays rapidly in the length of the chain.
This paper assumes that a convex set $\Theta$ and constants $C$
and $\alpha$ are known such that for all $\theta\in\Theta$ and all
distributions $q$,\vspace{-10bp}

\begin{equation}
\Vert\mathbb{M}_{\theta}^{v}q-p_{\theta}\Vert_{TV}\leq C\alpha^{v}.\label{eq:mixing-assumption}
\end{equation}
This means that the distance between an arbitrary starting distribution
$q$ and the stationary distribution $p_{\theta}$ decays geometrically
in terms of the number of Markov iterations $v$. This assumption
is justified by the Convergence Theorem \citep[Theorem 4.9]{Levin2006Markovchainsand},
which states that if $\mathbb{M}$ is irreducible and aperiodic with
stationary distribution $p$, then there exists constants $\alpha\in(0,1)$
and $C>0$ such that
\begin{equation}
d(v):=\sup_{q}\Vert\mathbb{M}^{v}q-p\Vert_{TV}\leq C\alpha^{v}.\label{eq:markov-chain-speed}
\end{equation}
\vspace{-10bp}

Many results on mixing times in the literature, however, are stated
in a less direct form. Given a constant $\epsilon$, the \textbf{mixing
time} is defined by $\tau(\epsilon)=\min\{v:d(v)\leq\epsilon\}.$
It often happens that bounds on mixing times are stated as something
like $\tau(\epsilon)\leq\left\lceil a+b\ln\frac{1}{\epsilon}\right\rceil $
for some constants $a$ and $b$. It follows from this that $\Vert\mathbb{M}^{v}q-p\Vert_{TV}\leq C\alpha^{v}$
with $C=\exp(a/b)$ and $\alpha=\exp(-1/b).$

A simple example of a fast-mixing exponential family is the Ising
model, defined for $x\in\{-1,+1\}^{N}$ as\vspace{-15bp}

\[
p(x|\theta)=\exp\left(\sum_{(i,j)\in\text{Pairs}}\theta_{ij}x_{i}x_{j}+\sum_{i}\theta_{i}x_{i}-A(\theta)\right).
\]
A simple result for this model is that, if the maximum degree of any
node is $\Delta$ and $\vert\theta_{ij}\vert\leq\beta$ for all $(i,j)$,
then for univariate Gibbs sampling with random updates, $\tau(\epsilon)\leq\lceil\frac{N\log(N/\epsilon)}{1-\Delta\tanh(\beta)}\rceil$
\citep{Levin2006Markovchainsand}. The algorithm discussed in this
paper needs the ability to project some parameter vector $\phi$ onto
$\Theta$ to find $\arg\min_{\theta\in\Theta}||\theta-\phi||_{2}.$
Projecting a set of arbitrary parameters onto this set of fast-mixing
parameters is trivial-- simply set $\theta_{ij}=\beta$ for $\theta_{ij}>\beta$
and $\theta_{ij}\leftarrow-\beta$ for $\theta_{ij}<-\beta$.

For more dense graphs, it is known \citep{Hayes2006simpleconditionimplying,Dyer2009Matrixnormsand}
that, for a matrix norm $\Vert\cdot\Vert$ that is the spectral norm
$\Vert\cdot\Vert_{2}$, or induced $1$ or infinity norms, 
\begin{equation}
\tau(\epsilon)\leq\left\lceil \frac{N\log(N/\epsilon)}{1-\Vert R(\theta)\Vert}\right\rceil \label{eq:mixing-time}
\end{equation}
where $R_{ij}(\theta)=|\theta_{ij}|.$ Domke and Liu \citep{Domke2013ProjectingIsingModel}
show how to perform this projection for the Ising model when $\Vert\cdot\Vert$
is the spectral norm $\Vert\cdot\Vert_{2}$ with a convex optimization
utilizing the singular value decomposition in each iteration.

Loosely speaking, the above result shows that univariate Gibbs sampling
on the Ising model is fast-mixing, as long as the interaction strengths
are not too strong. Conversely, Jerrum and Sinclair \citep{Jerrum1993Polynomialtimeapproximation}
exhibited an alternative Markov chain for the Ising model that is
rapidly mixing for \emph{arbitrary} interaction strengths, provided
the model is ferromagnetic, i.e. that all interaction strengths are
positive with $\theta_{ij}\geq0$ and that the field is unidirectional.
This Markov chain is based on sampling in different ``subgraphs world''
state-space. Nevertheless, it can be used to estimate derivatives
of the Ising model log-partition function with respect to parameters,
which allows estimation of the gradient of the log-likelihood. Huber
\citep{Huber2012SimulationReductionsIsing} provided a simulation
reduction to obtain an Ising model sample from a subgraphs world sample.

More generally, Liu and Domke \citep{Liu2014ProjectingMarkovRandom}
consider a pairwise Markov random field, defined as
\[
p(x|\theta)=\exp\left(\sum_{i,j}\theta_{ij}(x_{i},x_{j})+\sum_{i}\theta_{i}(x_{i})-A(\theta)\right),
\]
and show that, if one defines $R_{ij}(\theta)=\max_{a,b,c}\frac{1}{2}\vert\theta_{ij}(a,b)-\theta_{ij}(a,c)\vert,$
then again Equation \ref{eq:mixing-time} holds. An algorithm for
projecting onto the set $\Theta=\{\theta:\Vert R(\theta)\Vert\leq c\}$
exists.

There are many other mixing-time bounds for different algorithms,
and different types of models \citep{Levin2006Markovchainsand}. The
most common algorithms are univariate Gibbs sampling (often called
Glauber dynamics in the mixing time literature) and Swendsen-Wang
sampling. The Ising model and Potts models are the most common distributions
studied, either with a grid or fully-connected graph structure. Often,
the motivation for studying these systems is to understand physical
systems, or to mathematically characterize phase-transitions in mixing
time that occur as interactions strengths vary. As such, many existing
bounds assume uniform interaction strengths. For all these reasons,
these bounds typically require some adaptation for a learning setting.

\section{Main Results}

\subsection{Lipschitz Gradient}

For lack of space, detailed proofs are postponed to the appendix.
However, informal proof sketches are provided to give some intuition
for results that have longer proofs. Our first main result is that
the regularized log-likelihood has a Lipschitz gradient.
\begin{thm}
\label{thm:Lipschitz-constant}The regularized log-likelihood gradient
is $L$-Lipschitz with $L=4R_{2}^{2}+\lambda$, i.e. 
\[
\Vert f'(\theta)-f'(\phi)\Vert_{2}\leq(4R_{2}^{2}+\lambda)\Vert\theta-\phi\Vert_{2}.
\]
\end{thm}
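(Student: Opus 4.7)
The plan is to prove the bound at the level of the Hessian and then convert it to a gradient Lipschitz bound by the fundamental theorem of calculus. Since $f(\theta) = A(\theta) - \theta \cdot \bar t + \frac{\lambda}{2}\|\theta\|_2^2$ and the quadratic regularizer contributes exactly $\lambda I$ to the Hessian, everything reduces to bounding $\|A''(\theta)\|_2$ by $4R_2^2$.

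First, I would invoke the standard exponential-family identity $A''(\theta) = \mathrm{Cov}_{p_\theta}[t(X)]$, so that $f''(\theta) = \mathrm{Cov}_{p_\theta}[t(X)] + \lambda I$. Writing $\mu = \mathbb{E}_{p_\theta}[t(X)]$, for any unit vector $v \in \mathbb{R}^{\dim\theta}$ I have
\[
v^\top f''(\theta)\, v \;=\; \mathbb{E}_{p_\theta}\!\left[\bigl(v \cdot (t(X) - \mu)\bigr)^{2}\right] + \lambda.
\]
By Cauchy--Schwarz, $(v \cdot (t(X) - \mu))^{2} \leq \|v\|_2^{2}\,\|t(X) - \mu\|_2^{2} = \|t(X) - \mu\|_2^{2}$. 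The pointwise bound $\|t(x)\|_2 \leq R_2$ together with the triangle inequality yields $\|t(X) - \mu\|_2 \leq 2R_2$ almost surely, so $v^\top f''(\theta)\, v \leq 4R_2^{2} + \lambda$. Since $f''(\theta)$ is symmetric positive semidefinite (plus $\lambda I$), taking the supremum over unit $v$ gives $\|f''(\theta)\|_2 \leq 4R_2^{2} + \lambda$ uniformly in $\theta$.

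Second, to pass from a uniform Hessian bound to a Lipschitz bound on $f'$, I would use
\[
f'(\theta) - f'(\phi) \;=\; \int_{0}^{1} f''\bigl(\phi + s(\theta - \phi)\bigr)\,(\theta - \phi)\, ds,
\]
take norms inside the integral, and apply the Hessian bound to obtain $\|f'(\theta) - f'(\phi)\|_2 \leq (4R_2^{2} + \lambda)\|\theta - \phi\|_2$, as claimed.

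There is no real obstacle here; the only place that requires any care is the choice of bound on $\|\mathrm{Cov}_{p_\theta}[t(X)]\|_2$. A tighter argument (using $\mathrm{Var}(v\cdot t(X)) \leq \mathbb{E}[(v\cdot t(X))^{2}] \leq R_2^{2}$) would yield $L = R_2^{2} + \lambda$, but the factor-of-$4$ bound coming from Cauchy--Schwarz on $\|t(X) - \mu\|_2 \leq 2R_2$ is the cleanest one matching the stated constant.
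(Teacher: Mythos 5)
Your proof is correct, but it takes a genuinely different route from the paper's. The paper never touches the Hessian: it splits off the $\lambda\Vert\theta-\phi\Vert_{2}$ term by the triangle inequality and then controls $\Vert\nabla A(\theta)-\nabla A(\phi)\Vert_{2}$ through total-variation distance, first proving $\Vert\mathbb{E}_{p_{\theta}}[t(X)]-\mathbb{E}_{p_{\phi}}[t(X)]\Vert_{2}\leq2R_{2}\Vert p_{\theta}-p_{\phi}\Vert_{TV}$ and then $\Vert p_{\theta}-p_{\phi}\Vert_{TV}\leq2R_{2}\Vert\theta-\phi\Vert_{2}$ (the latter via a H\"older bound on the log-density ratio and the elementary inequality $\frac{1}{2}\vert1-e^{c}\vert\leq c\frac{e-1}{2}$ for $c\in[0,1]$, with the case $2R_{2}\Vert\theta-\phi\Vert_{2}>1$ handled trivially since $\Vert\cdot\Vert_{TV}\leq1$). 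That detour is not gratuitous: the first of those two lemmas is reused verbatim to bound the MCMC bias $\Vert\mathbb{E}_{q_{k}}[t(X)]-\mathbb{E}_{p_{\theta_{k}}}[t(X)]\Vert_{2}\leq2R_{2}C\alpha^{v}$, so the paper obtains the Lipschitz constant essentially for free from machinery it needs anyway. Your second-order argument via $A''(\theta)=\mathrm{Cov}_{p_{\theta}}[t(X)]$ plus the fundamental theorem of calculus is more standard, more self-contained, and---as you correctly note---actually yields the sharper constant $R_{2}^{2}+\lambda$ if one bounds $\mathrm{Var}(v\cdot t(X))\leq\mathbb{E}[(v\cdot t(X))^{2}]\leq R_{2}^{2}$ directly instead of passing through $\Vert t(X)-\mu\Vert_{2}\leq2R_{2}$; the only hypothesis you use implicitly is twice-differentiability of $A$, which is automatic for an exponential family with bounded sufficient statistics. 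Both arguments are valid: the paper's recycles its lemmas, yours gives the cleaner derivation and the better constant.
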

\begin{proof}
[Proof sketch] It is easy, by the triangle inequality, that $\Vert f'(\theta)-f'(\phi)\Vert_{2}\leq\Vert\frac{dA}{d\theta}-\frac{dA}{d\phi}\Vert_{2}+\lambda\Vert\theta-\phi\Vert_{2}$.
Next, using the assumption that $\Vert t(x)\Vert_{2}\leq R_{2}$,
one can bound that $\Vert\frac{dA}{d\theta}-\frac{dA}{d\phi}\Vert_{2}\leq2R_{2}\Vert p_{\theta}-p_{\phi}\Vert_{TV}$.
Finally, some effort can bound that $\Vert p_{\theta}-p_{\phi}\Vert_{TV}\leq2R_{2}\Vert\theta-\phi\Vert_{2}.$
\end{proof}

\subsection{Convex convergence}

Now, our first major result is a guarantee on the convergence that
is true both in the regularized case where $\lambda>0$ and the unregularized
case where $\lambda=0.$ 
\begin{thm}
\label{thm:convergence-convex}With probability at least $1-\delta$,
at long as $M\geq3K/\log(\frac{1}{\delta})$, Algorithm 1 will satisfy
\[
f\left(\frac{1}{K}\sum_{k=1}^{K}\theta_{k}\right)-f(\theta^{*})\leq\frac{8R_{2}^{2}}{KL}\left(\frac{L\Vert\theta_{0}-\theta^{*}\Vert_{2}}{4R_{2}}+\log\frac{1}{\delta}+\frac{K}{\sqrt{M}}+KC\alpha^{v}\right)^{2}.
\]
\end{thm}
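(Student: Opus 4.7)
The plan is to combine a bias--variance decomposition of the stochastic gradient error $e_{k}$ with a high-probability bound on $\sum_{k}\|e_{k}\|_{2}$, and then plug that bound into the Schmidt--Le Roux--Bach convergence guarantee for inexact projected gradient descent. First I would write $e_{k}=u_{k}+b_{k}$, with conditional-mean (bias) part
\begin{equation*}
b_{k}:=\mathbb{E}_{q_{k-1}}[t(X)]-\mathbb{E}_{p_{k-1}}[t(X)]
\end{equation*}
and fluctuation part $u_{k}:=\frac{1}{M}\sum_{i}t(x_{i}^{k-1})-\mathbb{E}_{q_{k-1}}[t(X)]$. The elementary inequality $\|\mathbb{E}_{p}[t]-\mathbb{E}_{q}[t]\|_{2}\leq 2R_{2}\|p-q\|_{TV}$ that already underlies the sketch of Theorem~\ref{thm:Lipschitz-constant}, combined with the mixing assumption~(\ref{eq:mixing-assumption}), yields $\|b_{k}\|_{2}\leq 2R_{2}C\alpha^{v}$ deterministically. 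Because the $M$ samples at step $k$ are i.i.d.\ conditional on the history $\mathcal{F}_{k-1}$ and the sufficient statistics have norm at most $R_{2}$, we have $\|u_{k}\|_{2}\leq 2R_{2}$ a.s., $\mathbb{E}[\|u_{k}\|_{2}^{2}\mid\mathcal{F}_{k-1}]\leq R_{2}^{2}/M$, and hence $\mathbb{E}[\|u_{k}\|_{2}\mid\mathcal{F}_{k-1}]\leq R_{2}/\sqrt{M}$ by Jensen.

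Next I would treat $\|u_{k}\|_{2}-\mathbb{E}[\|u_{k}\|_{2}\mid\mathcal{F}_{k-1}]$ as a scalar, bounded martingale-difference sequence and apply Freedman's (Bernstein-for-martingales) inequality. This produces, with probability at least $1-\delta$, a bound of the shape
\begin{equation*}
\sum_{k=1}^{K}\|u_{k}\|_{2}\;\leq\;\frac{KR_{2}}{\sqrt{M}}+c\,R_{2}\log\frac{1}{\delta}+R_{2}\sqrt{\frac{c'K\log(1/\delta)}{M}}.
\end{equation*}
The hypothesis $M\geq 3K/\log(1/\delta)$ is precisely what is needed for AM--GM to absorb the cross term into the first two, leaving
\begin{equation*}
\sum_{k=1}^{K}\|e_{k}\|_{2}\;\leq\;2R_{2}\left(\log\frac{1}{\delta}+\frac{K}{\sqrt{M}}+KC\alpha^{v}\right)
\end{equation*}
after adding the deterministic bias contribution $2KR_{2}C\alpha^{v}$.

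Finally, $f$ is convex with $L$-Lipschitz gradient by Theorem~\ref{thm:Lipschitz-constant}, so the Schmidt--Le Roux--Bach analysis of projected gradient descent with gradient errors and step size $1/L$ gives
\begin{equation*}
f\!\left(\frac{1}{K}\sum_{k=1}^{K}\theta_{k}\right)-f(\theta^{*})\;\leq\;\frac{L}{2K}\left(\|\theta_{0}-\theta^{*}\|_{2}+\frac{2}{L}\sum_{k=1}^{K}\|e_{k}\|_{2}\right)^{2}.
\end{equation*}
Substituting the high-probability bound for $\sum_{k}\|e_{k}\|_{2}$ and factoring $L/(4R_{2})$ out of the square converts the prefactor $L/(2K)$ into $8R_{2}^{2}/(KL)$ and rewrites the quantity inside the square exactly as in the theorem statement.

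The main obstacle is the concentration step. The $u_{k}$ are vector-valued and, because $\theta_{k-1}$ itself depends on earlier samples, they are only a martingale in the sense that $\mathbb{E}[u_{k}\mid\mathcal{F}_{k-1}]=0$; in particular $\|u_{k}\|_{2}$ is not a martingale difference and has a positive conditional drift $\mathbb{E}[\|u_{k}\|_{2}\mid\mathcal{F}_{k-1}]\leq R_{2}/\sqrt{M}$ that must be separated out before a Bernstein-type bound can be applied. Centering by this conditional mean and tuning the constants so that the mild requirement $M\geq 3K/\log(1/\delta)$ is enough to collapse the three-term concentration bound into the clean $\log(1/\delta)+K/\sqrt{M}$ form is the delicate part of the argument; everything else is bias-variance bookkeeping and a direct citation of the inexact gradient-descent lemma.
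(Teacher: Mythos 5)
Your proposal follows essentially the same route as the paper: the same decomposition of $e_{k}$ into a sampling fluctuation plus a Markov-chain bias bounded by $2R_{2}C\alpha^{v}$, a Jensen bound on the conditional mean of the fluctuation norm, a Bernstein-type concentration bound on $\sum_{k}\Vert e_{k}\Vert$ whose cross term is absorbed using $M\geq3K/\log(1/\delta)$, and a final appeal to Schmidt et al.'s inexact projected gradient descent result. Your two substitutions are if anything mild improvements: the direct conditional second-moment bound $\mathbb{E}[\Vert u_{k}\Vert_{2}^{2}\mid\mathcal{F}_{k-1}]\leq R_{2}^{2}/M$ replaces the paper's Efron--Stein variance argument, and Freedman's martingale version of Bernstein correctly handles the dependence of $q_{k}$ on earlier samples, which the paper's application of the i.i.d.\ Bernstein inequality glosses over.
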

\begin{proof}
[Proof sketch]First, note that $f$ is convex, since the Hessian
of $f$ is the covariance of $t(X)$ when $\lambda=0$ and $\lambda>0$
only adds a quadratic. Now, define the quantity $d_{k}=\frac{1}{M}\sum_{m=1}^{M}t(X_{m}^{k})-\mathbb{E}_{q_{k}}[t(X)]$
to be the difference between the estimated expected value of $t(X)$
under $q_{k}$ and the true value. An elementary argument can bound
the expected value of $\Vert d_{k}\Vert$, while the Efron-Stein inequality
can bounds its variance. Using both of these bounds in Bernstein's
inequality can then show that, with probability $1-\delta$, $\sum_{k=1}^{K}\Vert d_{k}\Vert\leq2R_{2}(K/\sqrt{M}+\log\frac{1}{\delta}).$
Finally, we can observe that $\sum_{k=1}^{K}\Vert e_{k}\Vert\leq\sum_{k=1}^{K}\Vert d_{k}\Vert+\sum_{k=1}^{K}\Vert\mathbb{E}_{q_{k}}[t(X)]-\mathbb{E}_{p_{\theta_{k}}}[t(X)]\Vert_{2}$.
By the assumption on mixing speed, the last term is bounded by $2KR_{2}C\alpha^{v}$.
And so, with probability $1-\delta$, $\sum_{k=1}^{K}\Vert e_{k}\Vert\leq2R_{2}(K/\sqrt{M}+\log\frac{1}{\delta})+2KR_{2}C\alpha^{v}.$
Finally, a result due to Schmidt et al. \citep{Schmidt2011ConvergenceRatesInexact}
on the convergence of gradient descent with errors in estimated gradients
gives the result.
\end{proof}
Intuitively, this result has the right character. If $M$ grows on
the order of $K^{2}$ and $v$ grows on the order of $\log K/(-\log\alpha)$,
then all terms inside the quadratic will be held constant, and so
if we set $K$ of the order $1/\epsilon$, the sub-optimality will
on the order of $\epsilon$ with a total computational effort roughly
on the order of $(1/\epsilon)^{3}\log(1/\epsilon).$ The following
results pursue this more carefully. Firstly, one can observe that
a minimum amount of work must be performed.
\begin{thm}
For $a,b,c,\alpha>0,$ if $K,M,v>0$ are set so that $\frac{1}{K}(a+b\frac{K}{\sqrt{M}}+Kc\alpha^{v})^{2}\leq\epsilon$,
then
\[
KMv\geq\frac{a^{4}b^{2}}{\epsilon^{3}}\frac{\log\frac{ac}{\epsilon}}{(-\log\alpha)}.
\]
Since it must be true that $a/\sqrt{K}+b\sqrt{K/M}+\sqrt{K}c\alpha^{v}\leq\sqrt{\epsilon}$,
each of these three terms must also be at most $\sqrt{\epsilon},$
giving lower-bounds on $K$, $M$, and $v$. Multiplying these gives
the result. 
\end{thm}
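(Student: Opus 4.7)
The plan is to follow the hint given in the statement verbatim: first take square roots to decouple the three contributions inside $(a+bK/\sqrt{M}+Kc\alpha^{v})^{2}/K$, then extract a lower bound on each of $K$, $M$, $v$ separately, and finally multiply. The starting point is the observation that, since $a,b,c,K,M,v,\alpha^{v}>0$, the hypothesis $\tfrac{1}{K}(a+bK/\sqrt{M}+Kc\alpha^{v})^{2}\leq\epsilon$ is equivalent (after taking a square root and distributing the $1/\sqrt{K}$) to
\[
\frac{a}{\sqrt{K}}+b\sqrt{\tfrac{K}{M}}+\sqrt{K}\,c\alpha^{v}\le\sqrt{\epsilon}.
\]
Each summand on the left is nonnegative, so each summand is individually at most $\sqrt{\epsilon}$.

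Next I would translate each of those three inequalities into a lower bound. The first, $a/\sqrt{K}\le\sqrt{\epsilon}$, gives $K\ge a^{2}/\epsilon$. The second, $b\sqrt{K/M}\le\sqrt{\epsilon}$, gives $M\ge b^{2}K/\epsilon$. The third, $\sqrt{K}\,c\alpha^{v}\le\sqrt{\epsilon}$, rearranges to $\alpha^{v}\le\sqrt{\epsilon}/(c\sqrt{K})$, so taking logs (and using $\log\alpha<0$) yields $v\ge\log(c\sqrt{K}/\sqrt{\epsilon})/(-\log\alpha)$. Multiplying the three lower bounds gives
\[
KMv\;\ge\;K\cdot\frac{b^{2}K}{\epsilon}\cdot\frac{\log\!\bigl(c\sqrt{K}/\sqrt{\epsilon}\bigr)}{-\log\alpha}\;=\;\frac{b^{2}K^{2}}{\epsilon}\cdot\frac{\log\!\bigl(c\sqrt{K}/\sqrt{\epsilon}\bigr)}{-\log\alpha}.
\]

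Finally I would use the bound $K\ge a^{2}/\epsilon$ in the remaining $K$'s: the factor $K^{2}$ becomes at least $a^{4}/\epsilon^{2}$, and inside the logarithm $c\sqrt{K}/\sqrt{\epsilon}\ge ca/\epsilon$, so $\log(c\sqrt{K}/\sqrt{\epsilon})\ge\log(ac/\epsilon)$. Substituting yields
\[
KMv\;\ge\;\frac{a^{4}b^{2}}{\epsilon^{3}}\cdot\frac{\log(ac/\epsilon)}{-\log\alpha},
\]
as desired.

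The proof is essentially a routine arithmetic exercise once the first inequality is taken under a square root, so there is no real obstacle; the only subtlety is making sure that the final log factor comes out as $\log(ac/\epsilon)$ rather than a weaker $\log(c\sqrt{K}/\sqrt{\epsilon})$, which is why the bound $K\ge a^{2}/\epsilon$ must be substituted inside the logarithm (and not only in the polynomial factor) at the end. One should also implicitly assume $ac/\epsilon\ge 1$ (equivalently $\log(ac/\epsilon)\ge 0$) for the bound to be nonvacuous; this is the regime of interest since otherwise even the trivial choice $v=1$ suffices.
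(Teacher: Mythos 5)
Your proposal is correct and follows essentially the same route as the paper's own proof: take the square root to get $a/\sqrt{K}+b\sqrt{K/M}+\sqrt{K}c\alpha^{v}\leq\sqrt{\epsilon}$, bound each nonnegative term by $\sqrt{\epsilon}$ to obtain $K\geq a^{2}/\epsilon$, $M\geq b^{2}K/\epsilon$, $v\geq\log(c\sqrt{K/\epsilon})/(-\log\alpha)$, and then substitute the lower bound on $K$ before multiplying (the paper does the substitution in the individual bounds on $M$ and $v$ rather than after forming the product, which is immaterial). Your closing remark that the bound is only nonvacuous when $\log(ac/\epsilon)\geq 0$ is a fair observation that the paper leaves implicit.
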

Next, an explicit schedule for $K,$ $M$, and $v$ is possible, in
terms of a convex set of parameters $\beta_{1},\beta_{2},\beta_{3}$.
Comparing this to the lower-bound above shows that this is not too
far from optimal.
\begin{thm}
\label{thm:parameterized-convex-ub}Suppose that $a,b,c,\alpha>0.$
If $\beta_{1}+\beta_{2}+\beta_{3}=1$, $\beta_{1},\beta_{2},\beta_{3}>0$,
then setting $K=\frac{a^{2}}{\beta_{1}^{2}\epsilon},\,M=(\frac{ab}{\beta_{1}\beta_{2}\epsilon})^{2},\,v=\log\frac{ac}{\beta_{1}\beta_{3}\epsilon}/(-\log\alpha)$
is sufficient to guarantee that $\frac{1}{K}(a+b\frac{K}{\sqrt{M}}+Kc\alpha^{v})^{2}\leq\epsilon$
with a total work of\vspace{-5pt}
\[
KMv=\frac{1}{\beta_{1}^{4}\beta_{2}^{2}}\frac{a^{4}b^{2}}{\epsilon^{3}}\frac{\log\frac{ac}{\beta_{1}\beta_{3}\epsilon}}{(-\log\alpha)}.
\]
Simply verify that the $\epsilon$ bound holds, and multiply the terms
together.
\end{thm}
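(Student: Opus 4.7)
The plan is to directly verify the bound by substitution. The key observation is that the expression $\frac{1}{K}(a + bK/\sqrt{M} + Kc\alpha^v)^2$ factors cleanly once one pulls the $1/\sqrt{K}$ inside the square:
\[
\frac{1}{K}\Bigl(a + b\frac{K}{\sqrt{M}} + Kc\alpha^v\Bigr)^2 = \Bigl(\frac{a}{\sqrt{K}} + b\sqrt{\tfrac{K}{M}} + \sqrt{K}\,c\alpha^v\Bigr)^2.
\]
Hence it suffices to show that each of the three summands inside the square equals $\beta_i\sqrt{\epsilon}$ for $i=1,2,3$, since then the sum is $(\beta_1+\beta_2+\beta_3)\sqrt{\epsilon} = \sqrt{\epsilon}$ by hypothesis, and squaring yields exactly $\epsilon$.

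I would then substitute the prescribed schedule term by term. Plugging $K = a^2/(\beta_1^2\epsilon)$ gives $a/\sqrt{K} = \beta_1\sqrt{\epsilon}$. Using $M = (ab/(\beta_1\beta_2\epsilon))^2$ together with the value of $K$ yields $b\sqrt{K/M} = \beta_2\sqrt{\epsilon}$. For the third term, the prescription $v = \log(ac/(\beta_1\beta_3\epsilon))/(-\log\alpha)$ is precisely the $v$ for which $\alpha^v = \beta_1\beta_3\epsilon/(ac)$, so $\sqrt{K}\,c\alpha^v = (a/(\beta_1\sqrt{\epsilon}))\cdot c \cdot \beta_1\beta_3\epsilon/(ac) = \beta_3\sqrt{\epsilon}$. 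This completes the verification of the bound on $\frac{1}{K}(a + bK/\sqrt{M} + Kc\alpha^v)^2$.

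Finally, for the work count, I would multiply the three quantities directly: $KM = (a^2/(\beta_1^2\epsilon))\cdot(ab/(\beta_1\beta_2\epsilon))^2 = a^4 b^2/(\beta_1^4\beta_2^2\epsilon^3)$, and multiplying by $v = \log(ac/(\beta_1\beta_3\epsilon))/(-\log\alpha)$ yields the stated formula. There is no real obstacle here—the construction of $K$, $M$, $v$ has been reverse-engineered precisely so that each summand hits its allotted fraction $\beta_i\sqrt{\epsilon}$ of the error budget, so the proof is essentially mechanical substitution and amounts to checking that the three algebraic identities listed above hold exactly.
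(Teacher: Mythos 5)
Your proof is correct and is essentially the same as the paper's: both are direct substitution of the prescribed $K,M,v$ showing the three summands split the error budget in the ratio $\beta_1:\beta_2:\beta_3$ (the paper factors out $a/\beta_1$ from the unnormalized sum rather than writing each term as $\beta_i\sqrt{\epsilon}$, a purely cosmetic difference), followed by multiplying the three quantities for the work bound.
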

For example, setting $\beta_{1}=0.66$, $\beta_{2}=0.33$ and $\beta_{3}=0.01$
gives that $KMv\approx48.4\frac{a^{4}b^{2}}{\epsilon^{3}}\frac{\log\frac{ac}{\epsilon}+5.03}{(-\log\alpha)}.$
Finally, we can give an explicit schedule for $K$, $M$, and $v$,
and bound the total amount of work that needs to be performed.
\begin{thm}
If $D\geq\max\left(\Vert\theta_{0}-\theta^{*}\Vert_{2},\frac{4R_{2}}{L}\log\frac{1}{\delta}\right)$,
then for all $\epsilon$ there is a setting of $K,M,v$ such that
$f(\frac{1}{K}\sum_{k=1}^{K}\theta_{k})-f(\theta^{*})\leq\epsilon_{f}$
with probability $1-\delta$ and 
\begin{eqnarray*}
KMv & \leq & \frac{32LR_{2}^{2}D^{4}}{\beta_{1}^{4}\beta_{2}^{2}\epsilon_{f}^{3}(1-\alpha)}\log\frac{4DR_{2}C}{\beta_{1}\beta_{3}\epsilon_{f}}.
\end{eqnarray*}
[Proof sketch] This follows from setting $K$, $M$, and $v$ as
in Theorem \ref{thm:parameterized-convex-ub} with $a=L\Vert\theta_{0}-\theta^{*}\Vert_{2}/(4R_{2})+\log\frac{1}{\delta}$,
$b=1$, $c=C$, and $\epsilon=\epsilon_{f}L/(8R_{2}^{2})$.
\end{thm}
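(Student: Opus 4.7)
The plan is to match the convex-convergence bound in Theorem \ref{thm:convergence-convex} against the parametric schedule in Theorem \ref{thm:parameterized-convex-ub}, then use the assumption on $D$ to get rid of the algorithm-dependent quantities. Concretely, I would first rewrite the right-hand side of Theorem \ref{thm:convergence-convex} as
\[
\frac{8R_2^2}{L}\cdot\frac{1}{K}\bigl(a+b\tfrac{K}{\sqrt M}+Kc\alpha^{v}\bigr)^{2},
\]
with the identification $a=L\Vert\theta_{0}-\theta^{*}\Vert_{2}/(4R_{2})+\log(1/\delta)$, $b=1$, $c=C$. To force this below $\epsilon_{f}$, it suffices to make the inner quadratic expression at most $\epsilon:=\epsilon_{f}L/(8R_{2}^{2})$, which is precisely the setting Theorem \ref{thm:parameterized-convex-ub} addresses. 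Plugging these $a,b,c,\epsilon$ into that theorem yields concrete values of $K$, $M$, $v$ and a total work
\[
KMv\leq\frac{1}{\beta_{1}^{4}\beta_{2}^{2}}\cdot\frac{a^{4}b^{2}}{\epsilon^{3}}\cdot\frac{\log\frac{ac}{\beta_{1}\beta_{3}\epsilon}}{-\log\alpha}.
\]

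The second step is to replace $a$ by a bound involving only the problem data. Using the hypothesis $D\geq\max(\Vert\theta_{0}-\theta^{*}\Vert_{2},\,\tfrac{4R_{2}}{L}\log\tfrac{1}{\delta})$, each of the two summands in $a$ is bounded by $LD/(4R_{2})$, hence $a\leq LD/(2R_{2})$. Substituting and simplifying gives
\[
\frac{a^{4}b^{2}}{\epsilon^{3}}\leq\Bigl(\frac{LD}{2R_{2}}\Bigr)^{4}\Bigl(\frac{8R_{2}^{2}}{\epsilon_{f}L}\Bigr)^{3}=\frac{32LR_{2}^{2}D^{4}}{\epsilon_{f}^{3}},
\]
which is exactly the leading factor in the claimed bound. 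Similarly, $ac/(\beta_{1}\beta_{3}\epsilon)\leq 4DR_{2}C/(\beta_{1}\beta_{3}\epsilon_{f})$, matching the argument of the logarithm.

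The last step is cosmetic but worth flagging: the bound in the theorem statement has $1/(1-\alpha)$ in place of $1/(-\log\alpha)$. This is the elementary inequality $-\log\alpha\geq 1-\alpha$ for $\alpha\in(0,1)$, which only weakens the bound. Combining everything delivers the stated inequality. The schedule for $K,M,v$ that achieves this is, of course, inherited verbatim from Theorem \ref{thm:parameterized-convex-ub} under the substitutions above, so no further construction is needed.

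I do not foresee a serious obstacle; the main thing to be careful about is keeping the algebra of constants straight (particularly the factor $L/(8R_{2}^{2})$ absorbed into $\epsilon$), and confirming that the probability-of-failure bookkeeping from Theorem \ref{thm:convergence-convex} propagates unchanged once we fix $M\geq 3K/\log(1/\delta)$ -- this last requirement is implied by the schedule in Theorem \ref{thm:parameterized-convex-ub} for all sufficiently small $\epsilon_{f}$, since $M$ scales as $1/\epsilon_{f}^{2}$ while $K$ only scales as $1/\epsilon_{f}$.
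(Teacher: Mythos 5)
Your proposal is correct and follows essentially the same route as the paper: identify $a=L\Vert\theta_{0}-\theta^{*}\Vert_{2}/(4R_{2})+\log\frac{1}{\delta}$, $b=1$, $c=C$, $\epsilon=\epsilon_{f}L/(8R_{2}^{2})$, apply the parametric schedule, bound $a\leq LD/(2R_{2})$ via the hypothesis on $D$, and convert $-\log\alpha$ to $1-\alpha$. Your closing remark about verifying $M\geq3K/\log\frac{1}{\delta}$ from the schedule is a point the paper's own proof leaves implicit, and your argument for it is sound.
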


\subsection{Strongly Convex Convergence}

This section gives the main result for convergence that is true only
in the regularized case where $\lambda>0.$ Again, the main difficulty
in this proof is showing that the sum of the errors of estimated gradients
at each iteration is small. This is done by using a concentration
inequality to show that the error of each estimated gradient is small,
and then applying a union bound to show that the sum is small. The
main result is as follows.
\begin{thm}
When the regularization constant obeys $\lambda>0$, with probability
at least $1-\delta$ Algorithm 1 will satisfy
\[
\Vert\theta_{K}-\theta^{*}\Vert_{2}\leq(1-\frac{\lambda}{L})^{K}\Vert\theta_{0}-\theta^{*}\Vert_{2}+\frac{L}{\lambda}\left(\sqrt{\frac{R_{2}}{2M}}\left(1+\sqrt{2\log\frac{K}{\delta}}\right)+2R_{2}C\alpha^{v}\right).
\]
\end{thm}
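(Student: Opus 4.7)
The plan is to reduce the theorem to a uniform high-probability bound on the per-iteration gradient errors $\Vert e_{k}\Vert_{2}$ and then invoke the inexact projected gradient descent analysis of Schmidt et al.~\citep{Schmidt2011ConvergenceRatesInexact} in the strongly convex setting. The deterministic ingredients are immediate: $f$ is $\lambda$-strongly convex because its Hessian equals the covariance of $t(X)$ (positive semidefinite) plus $\lambda I$, and it is $L$-smooth by Theorem~\ref{thm:Lipschitz-constant}. Applied with step $1/L$, Schmidt's inexact contraction then produces a bound of the form $\Vert\theta_{K}-\theta^{*}\Vert_{2}\leq(1-\lambda/L)^{K}\Vert\theta_{0}-\theta^{*}\Vert_{2}+(L/\lambda)\cdot\max_{k}\Vert e_{k}\Vert_{2}$, which already has the right structure; all of the remaining work is probabilistic.

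For the error bound I would reuse the decomposition from the proof sketch of Theorem~\ref{thm:convergence-convex}, writing $e_{k}=d_{k}+b_{k}$, where $d_{k}=\frac{1}{M}\sum_{m=1}^{M}t(x_{m}^{k-1})-\mathbb{E}_{q_{k-1}}[t(X)]$ is pure sampling noise and $b_{k}=\mathbb{E}_{q_{k-1}}[t(X)]-\mathbb{E}_{p_{\theta_{k-1}}}[t(X)]$ is the bias from finite mixing. The bias is handled deterministically: since $\Vert t(x)\Vert_{2}\leq R_{2}$ and $\Vert q_{k-1}-p_{\theta_{k-1}}\Vert_{TV}\leq C\alpha^{v}$ by the mixing assumption, a standard total-variation coupling gives $\Vert b_{k}\Vert_{2}\leq 2R_{2}C\alpha^{v}$ uniformly in $k$, which already supplies the final $2R_{2}C\alpha^{v}$ term in the parenthesis.

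The main obstacle is the per-iteration high-probability control of $\Vert d_{k}\Vert_{2}$; unlike the convex proof, which aggregates errors via Bernstein on the sum, here strong convexity only tolerates a uniform-in-$k$ bound, so I must use a sharper per-iteration concentration and union bound rather than averaging. Because the $M$ samples at iteration $k$ are i.i.d.\ draws from $q_{k-1}$ and each $t(x_{m})$ has norm at most $R_{2}$, I would apply McDiarmid's bounded-differences inequality to the scalar map $(x_{1},\ldots,x_{M})\mapsto\Vert\frac{1}{M}\sum_{m}t(x_{m})-\mathbb{E}_{q_{k-1}}[t]\Vert_{2}$, whose coordinate differences are at most $2R_{2}/M$. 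Combined with a Jensen-style control $\mathbb{E}\Vert d_{k}\Vert_{2}\leq\sqrt{\mathbb{E}\Vert d_{k}\Vert_{2}^{2}}$ on the centering term, this yields, with probability $1-\delta'$,
\[
\Vert d_{k}\Vert_{2}\leq\sqrt{\frac{R_{2}}{2M}}\left(1+\sqrt{2\log\frac{1}{\delta'}}\right).
\]
Setting $\delta'=\delta/K$ and taking a union bound over the $K$ iterations upgrades this to a simultaneous bound on $\max_{k}\Vert d_{k}\Vert_{2}$ with $\log(K/\delta)$ in place of $\log(1/\delta')$. Adding the deterministic bias bound via the triangle inequality and substituting the resulting bound on $\max_{k}\Vert e_{k}\Vert_{2}$ into the Schmidt contraction from the first paragraph yields the claim. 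The most delicate piece is matching the $\sqrt{R_{2}/(2M)}$ constant and the additive $1+\sqrt{2\log(K/\delta)}$ form in the vector-valued concentration step, where most of the careful arithmetic will be concentrated.
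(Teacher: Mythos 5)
Your proposal matches the paper's proof essentially step for step: the same decomposition of $e_{k}$ into sampling noise plus mixing bias, the deterministic $2R_{2}C\alpha^{v}$ bound on the bias via total variation, a per-iteration vector concentration bound (the paper packages your McDiarmid-plus-Jensen argument as the Hoeffding-type bound of Theorem~\ref{thm:Hoeffding-fixed-delta-form}), a union bound with $\delta'=\delta/K$, and substitution into the uniform-error corollary of Schmidt et al.\ (Corollary~\ref{cor:optimization-strongly-convex}). The only point of friction is the one you flag yourself: your bounded-differences calculation naturally yields a leading factor of order $R_{2}/\sqrt{M}$ rather than $\sqrt{R_{2}/(2M)}$, a constant the paper itself states inconsistently between its lemma and the theorem, so this is a cosmetic rather than substantive discrepancy.
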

\begin{proof}
[Proof sketch]When $\lambda=0$, $f$ is convex (as in Theorem \ref{thm:convergence-convex})
and so is strongly convex when $\lambda>0$. The basic proof technique
here is to decompose the error in a particular step as $\Vert e_{k+1}\Vert_{2}\leq\Vert\frac{1}{M}\sum_{i=1}^{M}t(x_{i}^{k})-\mathbb{E}_{q_{k}}[t(X)]\Vert_{2}+\Vert\mathbb{E}_{q_{k}}[t(X)]-\mathbb{E}_{p_{\theta_{k}}}[t(X)]\Vert_{2}.$
A multidimensional variant of Hoeffding's inequality can bound the
first term, with probability $1-\delta'$ by $R_{2}(1+\sqrt{2\log\frac{1}{\delta}})/\sqrt{M}$,
while our assumption on mixing speed can bound the second term by
$2R_{2}C\alpha^{v}$. Applying this to all iterations using $\delta'=\delta/K$
gives that all errors are simultaneously bounded as before. This can
then be used in another result due to Schmidt et al. \citep{Schmidt2011ConvergenceRatesInexact}
on the convergence of gradient descent with errors in estimated gradients
in the strongly convex case.
\end{proof}
A similar proof strategy could be used for the convex case where,
rather than directly bounding the sum of the norm of errors of all
steps using the Efron-Stein inequality and Bernstein's bound, one
could simply bound the error of each step using a multidimensional
Hoeffding-type inequality, and then apply this with probability $\delta/K$
to each step. This yields a slightly weaker result than that shown
in Theorem \ref{thm:convergence-convex}. The reason for applying
a uniform bound on the errors in gradients here is that Schmidt et
al.'s bound \citep{Schmidt2011ConvergenceRatesInexact} on the convergence
of proximal gradient descent on strongly convex functions depends
not just on the sum of the norms of gradient errors, but a non-uniform
weighted variant of these.

Again, we consider how to set parameters to guarantee that $\theta_{K}$
is not too far from $\theta^{*}$ with a minimum amount of work. Firstly,
we show a lower-bound.
\begin{thm}
\label{thm:strongly-convex-lb}Suppose $a,b,c>0$. Then for any $K,M,v$
such that $\gamma^{K}a+\frac{b}{\sqrt{M}}\sqrt{\log(K/\delta)}+c\alpha^{v}\leq\epsilon.$
it must be the case that\vspace{-5pt}
\[
KMv\geq\frac{b^{2}}{\epsilon^{2}}\frac{\log\frac{a}{\epsilon}\log\frac{c}{\epsilon}}{(-\log\gamma)(-\log\alpha)}\log\left(\frac{\log\frac{a}{\epsilon}}{\delta(-\log\gamma)}\right).
\]
[Proof sketch] This is established by noticing that $\gamma^{K}a$,
$\frac{b}{\sqrt{M}}\sqrt{\log\frac{K}{\delta}}$, and $c\alpha^{v}$
must each be less than $\epsilon$, giving lower bounds on $K$, $M$,
and $v$.
\end{thm}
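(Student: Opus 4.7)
The plan is to exploit the non-negativity of each of the three terms on the left-hand side of the inequality constraint. Since each of $\gamma^{K}a$, $\frac{b}{\sqrt{M}}\sqrt{\log(K/\delta)}$, and $c\alpha^{v}$ is non-negative and their sum is bounded by $\epsilon$, each individual term must itself be at most $\epsilon$. This gives three independent inequalities that can be solved to produce lower bounds on $K$, $M$, and $v$ respectively, and multiplying these bounds together yields the claim.

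First I would handle $K$ and $v$, which are straightforward. From $\gamma^{K}a \leq \epsilon$, taking logarithms and using $\gamma \in (0,1)$ so $-\log\gamma > 0$, I get $K \geq \log(a/\epsilon)/(-\log\gamma)$. Symmetrically, from $c\alpha^{v} \leq \epsilon$ I obtain $v \geq \log(c/\epsilon)/(-\log\alpha)$. These two steps are routine.

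The step requiring a little more care is the lower bound on $M$. From $\frac{b}{\sqrt{M}}\sqrt{\log(K/\delta)} \leq \epsilon$, rearranging yields $M \geq \frac{b^{2}}{\epsilon^{2}}\log(K/\delta)$. This still involves $K$, so to produce a bound purely in terms of the problem constants I substitute the previously established lower bound on $K$ into the logarithm. Since $\log$ is monotone increasing and $K \geq \log(a/\epsilon)/(-\log\gamma)$, it follows that $\log(K/\delta) \geq \log\!\left(\frac{\log(a/\epsilon)}{\delta(-\log\gamma)}\right)$, and hence $M \geq \frac{b^{2}}{\epsilon^{2}} \log\!\left(\frac{\log(a/\epsilon)}{\delta(-\log\gamma)}\right)$. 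Finally, multiplying the three lower bounds on $K$, $M$, and $v$ produces exactly the stated expression for $KMv$.

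The main conceptual subtlety, rather than obstacle, is the validity of substituting the lower bound on $K$ into the $M$ bound: one might worry about which direction of monotonicity is needed. The logic is that for any admissible $K$, we have $K \geq K_{\min} := \log(a/\epsilon)/(-\log\gamma)$, so $\log(K/\delta) \geq \log(K_{\min}/\delta)$, and hence the lower bound $M \geq (b^{2}/\epsilon^{2})\log(K/\delta)$ implies $M \geq (b^{2}/\epsilon^{2})\log(K_{\min}/\delta)$. Everything else in the proof is elementary algebra once the three constraints have been decoupled.
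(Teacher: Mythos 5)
Your proposal is correct and follows essentially the same route as the paper's own proof: decouple the three non-negative terms, derive the lower bounds $K\geq\log(a/\epsilon)/(-\log\gamma)$, $M\geq(b^{2}/\epsilon^{2})\log(K/\delta)\geq(b^{2}/\epsilon^{2})\log\bigl(\log(a/\epsilon)/(\delta(-\log\gamma))\bigr)$, and $v\geq\log(c/\epsilon)/(-\log\alpha)$, then multiply. Your explicit justification of substituting $K_{\min}$ into the logarithm via monotonicity is a point the paper passes over silently, and it is handled correctly.
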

Next, we can give an explicit schedule that is not too far off from
this lower-bound.
\begin{thm}
Suppose that $a,b,c,\alpha>0.$ If $\beta_{1}+\beta_{2}+\beta_{3}=1$,
$\beta_{i}>0$, then setting $K=\log(\frac{a}{\beta_{1}\epsilon})/(-\log\gamma),\,M=\frac{b^{2}}{\epsilon^{2}\beta_{2}^{2}}\left(1+\sqrt{2\log(K/\delta)}\right)^{2}$
and $v=\log\left(\frac{c}{\beta_{3}\epsilon}\right)/(-\log\alpha)$
is sufficient to guarantee that $\gamma^{K}a+\frac{b}{\sqrt{M}}(1+\sqrt{2\log(K/\delta)})+c\alpha^{v}\leq\epsilon$
with a total work of at most 
\[
KMV\leq\frac{b^{2}}{\epsilon^{2}\beta_{2}^{2}}\frac{\log\left(\frac{a}{\beta_{1}\epsilon}\right)\log\left(\frac{c}{\beta_{3}\epsilon}\right)}{(-\log\gamma)(-\log\alpha)}\left(1+\sqrt{2\log\frac{\log(\frac{a}{\beta_{1}\epsilon})}{\delta(-\log\gamma)}}\right)^{2}.
\]
\vspace{-15pt}
\end{thm}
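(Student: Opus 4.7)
The plan is to mimic the argument of Theorem \ref{thm:parameterized-convex-ub}: the claimed schedule makes each of the three terms in the error bound equal to $\beta_i\epsilon$, so the sum equals $\epsilon$, and then the total work $KMv$ is obtained by direct multiplication. There is no real machinery here, just substitution; the only subtlety is that $M$ depends on $K$ through $\log(K/\delta)$, which has to be unwound at the end.

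For the first claim, I would verify each term separately. With $K=\log(a/(\beta_1\epsilon))/(-\log\gamma)$, we have $\gamma^K=\exp(K\log\gamma)=\exp(-\log(a/(\beta_1\epsilon)))=\beta_1\epsilon/a$, so $\gamma^K a=\beta_1\epsilon$. With $v=\log(c/(\beta_3\epsilon))/(-\log\alpha)$, the identical computation yields $c\alpha^v=\beta_3\epsilon$. For the middle term, $M$ is chosen precisely so that $\sqrt{M}=\frac{b}{\epsilon\beta_2}\bigl(1+\sqrt{2\log(K/\delta)}\bigr)$, giving $\frac{b}{\sqrt{M}}(1+\sqrt{2\log(K/\delta)})=\beta_2\epsilon$. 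Summing and using $\beta_1+\beta_2+\beta_3=1$ yields the desired bound of $\epsilon$.

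For the work bound, multiply the three quantities. The two logarithmic and the $1/\epsilon^2$ factors come out immediately. To eliminate $K$ from the expression for $M$, substitute the chosen $K$ into $\log(K/\delta)$ to obtain $\log\bigl(\log(a/(\beta_1\epsilon))/(\delta(-\log\gamma))\bigr)$, which is exactly the quantity appearing under the square root in the stated bound. Collecting factors reproduces the claim.

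The only minor point requiring care, as in Theorem \ref{thm:parameterized-convex-ub}, is that $K$ and $v$ must be integers, so in a fully rigorous statement one would take ceilings; this costs at most a constant additive correction inside each logarithm and does not affect the form of the bound. There is no genuine obstacle in this proof, which is why the authors announce it as a multiplication.
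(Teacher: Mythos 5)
Your proposal is correct and follows essentially the same route as the paper: the authors likewise define the schedule by forcing each of the three error terms to equal $\beta_{i}\epsilon$, solve for $K$, $M$, $v$, and obtain the work bound by direct multiplication, substituting the chosen $K$ into $\log(K/\delta)$. Your remark about taking ceilings for integrality is a small point the paper omits but does not change the argument.
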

For example, if you choose $\beta_{2}=1/\sqrt{2}$ and $\beta_{1}=\beta_{3}=(1-1/\sqrt{2})/2\approx0.1464$,
then this varies from the lower-bound in Theorem \ref{thm:strongly-convex-lb}
by a factor of two, and a multiplicative factor of $1/\beta_{3}\approx6.84$
inside the logarithmic terms. 
\begin{cor}
\label{cor:explicit-construction-strongly-convex}If we choose $K\geq\frac{L}{\lambda}\log\left(\frac{\Vert\theta_{0}-\theta\Vert_{2}}{\beta_{1}\epsilon}\right),$
$M\geq\frac{L^{2}R_{2}}{2\epsilon^{2}\beta_{2}^{2}\lambda^{2}}\left(1+\sqrt{2\log(K/\delta)}\right)^{2},$
and $v\geq\frac{1}{1-\alpha}\log\left(2LR_{2}C/(\beta_{3}\epsilon\lambda)\right)$,
then $\Vert\theta_{K}-\theta^{*}\Vert_{2}\leq\epsilon_{\theta}$ with
probability at least $1-\delta$, and the total amount of work is
bounded by\vspace{-5pt}
\[
KMv\leq\frac{L^{3}R_{2}}{2\epsilon_{\theta}^{2}\beta_{2}^{2}\lambda^{3}(1-\alpha)}\log\left(\frac{\Vert\theta_{0}-\theta\Vert_{2}}{\beta_{1}\epsilon_{\theta}}\right)\left(1+\sqrt{2\log\left(\frac{L}{\lambda\delta}\log\left(\frac{\Vert\theta_{0}-\theta\Vert_{2}}{\beta_{1}\epsilon_{\theta}}\right)\right)}\right)^{2}.
\]
\vspace{-20pt}
\end{cor}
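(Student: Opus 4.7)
The plan is to reduce this corollary to a direct instantiation of the parameterized schedule from the theorem immediately above it, combined with the main strongly convex convergence bound. The strongly convex convergence bound has exactly the form
\[
\gamma^{K}a + \frac{b}{\sqrt{M}}\bigl(1+\sqrt{2\log(K/\delta)}\bigr) + c\alpha^{v},
\]
with the identifications $\gamma = 1-\lambda/L$, $a = \Vert\theta_{0}-\theta^{*}\Vert_{2}$, $b = (L/\lambda)\sqrt{R_{2}/2}$, and $c = 2LR_{2}C/\lambda$. So the first step is simply to read off these substitutions and note that a guarantee of the form $\Vert\theta_{K}-\theta^{*}\Vert_{2}\leq \epsilon_{\theta}$ is obtained whenever the sum above is at most $\epsilon_{\theta}$.

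Next I would invoke the parameterized schedule theorem with $\epsilon = \epsilon_{\theta}$ and the above $a,b,c,\gamma$. That theorem gives exact values $K_{0} = \log(a/(\beta_{1}\epsilon))/(-\log\gamma)$, $M_{0} = (b^{2}/(\beta_{2}^{2}\epsilon^{2}))(1+\sqrt{2\log(K_{0}/\delta)})^{2}$, and $v_{0} = \log(c/(\beta_{3}\epsilon))/(-\log\alpha)$ that suffice. To convert these into the cleaner forms stated in the corollary, I would use the two elementary inequalities $-\log(1-x)\geq x$ for $x\in(0,1)$ and $-\log\alpha \geq 1-\alpha$ for $\alpha\in(0,1)$. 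The first gives $1/(-\log(1-\lambda/L)) \leq L/\lambda$, so the stated lower bound $K\geq (L/\lambda)\log(\Vert\theta_{0}-\theta\Vert_{2}/(\beta_{1}\epsilon))$ implies $K\geq K_{0}$. The second gives $v \geq v_{0}$ from the stated lower bound $v \geq (1-\alpha)^{-1}\log(2LR_{2}C/(\beta_{3}\epsilon\lambda))$. The $M$ bound stated is already $M_{0}$ after substituting $b^{2} = L^{2}R_{2}/(2\lambda^{2})$. Because the schedule theorem's error bound is monotonically non-increasing in $K$, $M$, and $v$, enlarging any of them preserves the guarantee.

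Finally, for the total-work bound I would multiply the three upper substitutes. Using $L/\lambda$ in place of $1/(-\log\gamma)$ and $1/(1-\alpha)$ in place of $1/(-\log\alpha)$ gives exactly
\[
KMv \leq \frac{L}{\lambda}\log\!\Bigl(\tfrac{\Vert\theta_{0}-\theta\Vert_{2}}{\beta_{1}\epsilon_{\theta}}\Bigr)\cdot\frac{L^{2}R_{2}}{2\beta_{2}^{2}\lambda^{2}\epsilon_{\theta}^{2}}\bigl(1+\sqrt{2\log(K/\delta)}\bigr)^{2}\cdot\frac{1}{1-\alpha}\log\!\Bigl(\tfrac{2LR_{2}C}{\beta_{3}\epsilon_{\theta}\lambda}\Bigr),
\]
and then substituting the upper bound $K \leq (L/\lambda)\log(\Vert\theta_{0}-\theta\Vert_{2}/(\beta_{1}\epsilon_{\theta}))$ inside the $\log(K/\delta)$ term yields the statement.

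The only non-mechanical step is confirming that the strongly convex convergence bound really does match the template in the parameterized schedule theorem with the identifications above; once that is pinned down, the remainder is just the two logarithmic inequalities plus a product. I do not expect any real obstacle beyond bookkeeping, since all the hard probabilistic work (Hoeffding concentration, union bound, Schmidt et al.\ inexact gradient recursion) is already packaged into the preceding theorem.
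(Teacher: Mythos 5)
Your proposal matches the paper's own proof essentially step for step: the same identifications $\gamma=1-\lambda/L$, $a=\Vert\theta_{0}-\theta\Vert_{2}$, $b=(L/\lambda)\sqrt{R_{2}/2}$, $c=2LR_{2}C/\lambda$, the same invocation of the parameterized schedule theorem, the same use of $1/(-\log\gamma)\leq L/\lambda$ and $1/(-\log\alpha)\leq 1/(1-\alpha)$, and the same final product. It is correct and takes the same route as the paper.
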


\section{Discussion}

\vspace{-5pt}

An important detail in the previous results is that the convex analysis
gives convergence in terms of the regularized log-likelihood, while
the strongly-convex analysis gives convergence in terms of the parameter
distance. If we drop logarithmic factors, the amount of work necessary
for $\epsilon_{f}$ - optimality in the log-likelihood using the convex
algorithm is of the order $1/\epsilon_{f}^{3}$, while the amount
of work necessary for $\epsilon_{\theta}$ - optimality using the
strongly convex analysis is of the order $1/\epsilon_{\theta}^{2}$.
Though these quantities are not directly comparable, the standard
bounds on sub-optimality for $\lambda$-strongly convex functions
with $L$-Lipschitz gradients are that $\lambda\epsilon_{\theta}^{2}/2\leq\epsilon_{f}\leq L\epsilon_{\theta}^{2}/2.$
Thus, roughly speaking, when regularized for the strongly-convex analysis
shows that $\epsilon_{f}$ optimality in the log-likelihood can be
achieved with an amount of work only linear in $1/\epsilon_{f}$.

\vspace{-15pt}

\section{Example}

\begin{figure}
\includegraphics[scale=0.24]{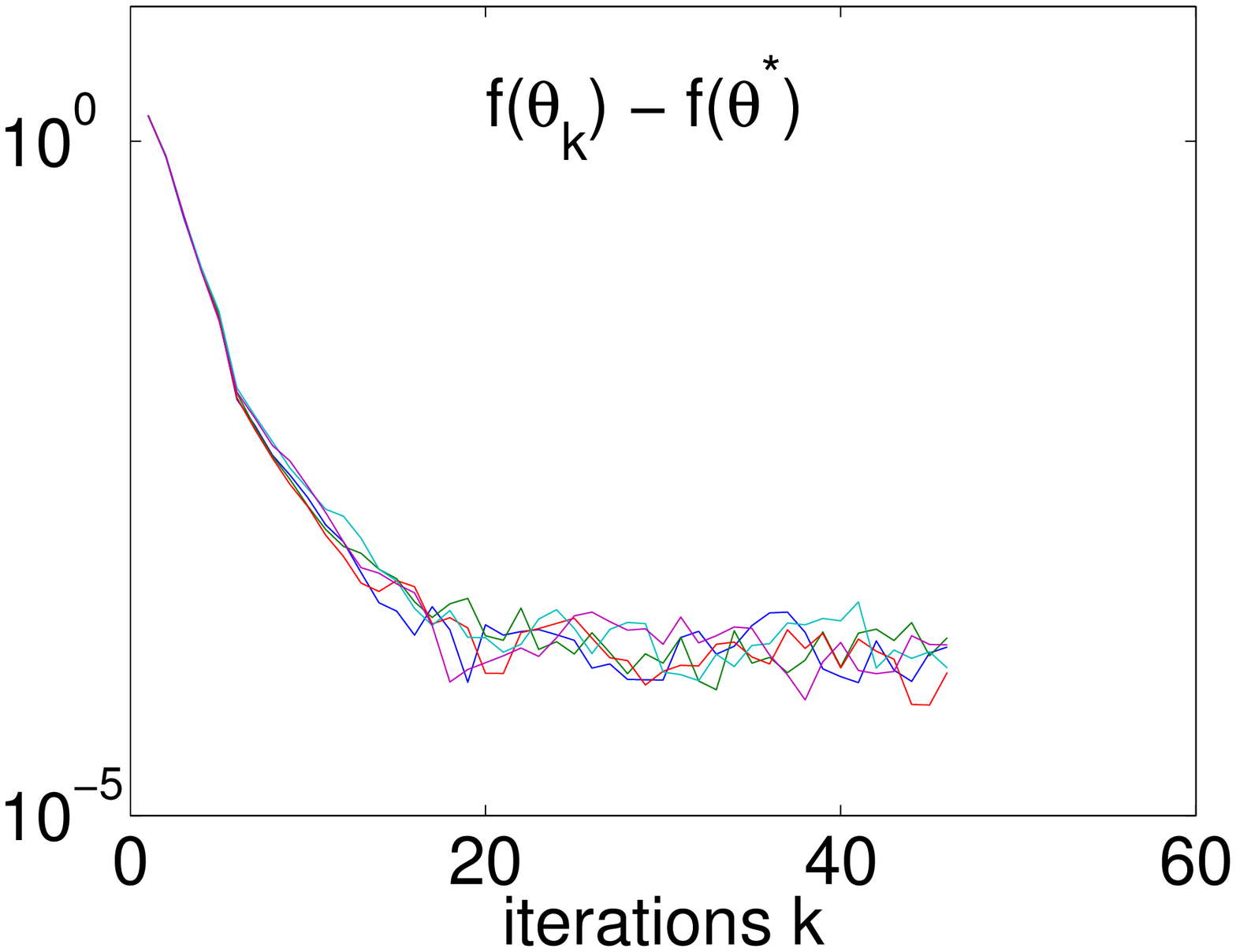}\includegraphics[scale=0.24]{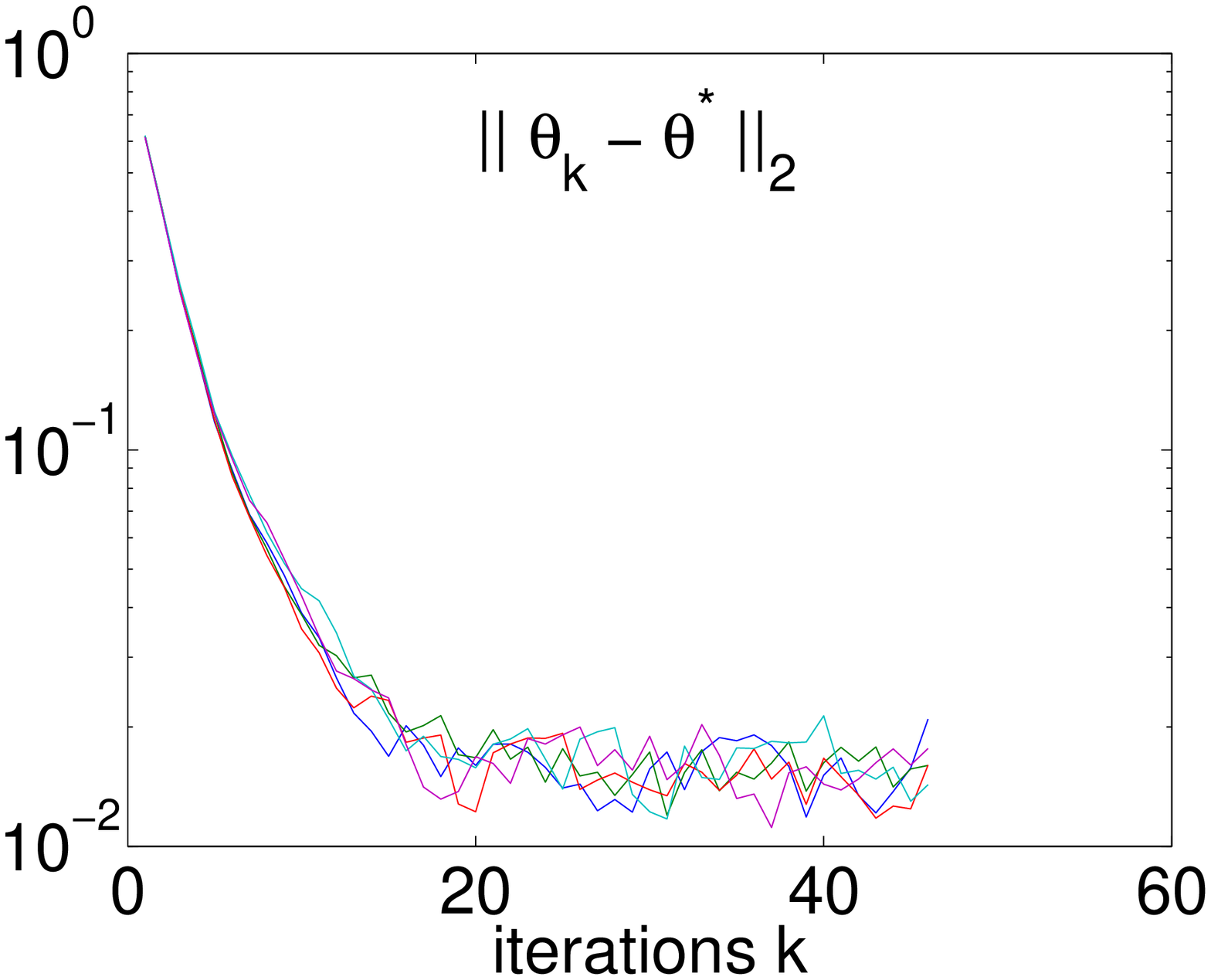}\includegraphics[bb=16bp 210bp 550bp 630bp,clip,scale=0.24]{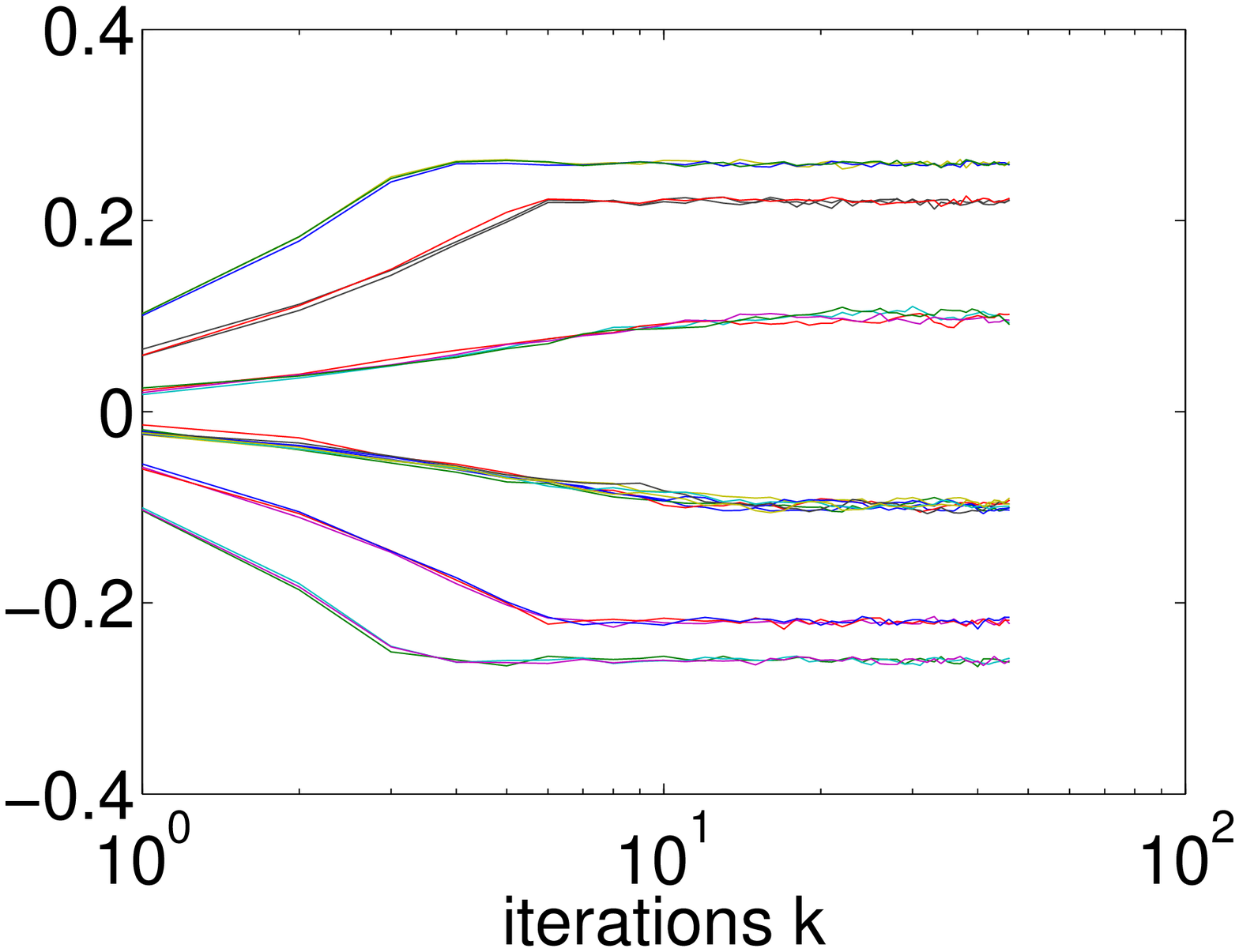}\includegraphics[bb=280bp 205bp 360bp 630bp,clip,scale=0.24]{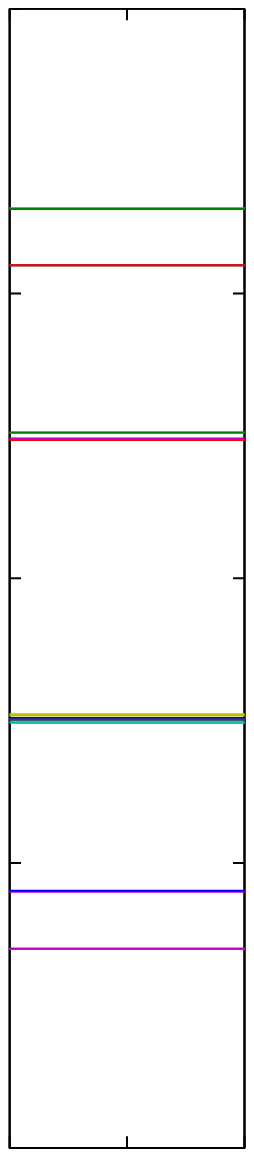}

\vspace{-5pt}\caption{Ising Model Example. Left: The difference of the current test log-likelihood
from the optimal log-likelihood on 5 random runs. Center: The distance
of the current estimated parameters from the optimal parameters on
5 random runs. Right: The current estimated parameters on one run,
as compared to the optimal parameters (far right).\label{fig:Ising-Model-Example}}
\end{figure}
While this paper claims no significant practical contribution,
it is useful to visualize an example. Take an Ising model $p(x)\propto\exp(\sum_{(i,j)\in\text{Pairs}}\theta_{ij}x_{i}x_{j})$
for $x_{i}\in\{-1,1\}$ on a $4\times4$ grid with 5 random vectors
as training data. The sufficient statistics are $t(x)=\{x_{i}x_{j}\vert(i,j)\in\text{Pairs}\}$,
and with 24 pairs, $\Vert t(x)\Vert_{2}\leq R_{2}=\sqrt{24}$. For
a fast-mixing set, constrain $\vert\theta_{ij}\vert\leq.2$ for all
pairs. Since the maximum degree is 4, $\tau(\epsilon)\leq\lceil\frac{N\log(N/\epsilon)}{1-4\tanh(.2)}\rceil$
. Fix $\lambda=1,$ $\epsilon_{\theta}=2$ and $\delta=0.1$. Though
the theory above suggests the Lipschitz constant $L=4R_{2}^{2}+\lambda=97$,
a lower value of $L=10$ is used, which converged faster in practice
(with exact or approximate gradients). Now, one can derive that $\Vert\theta_{0}-\theta^{*}\Vert_{2}\leq D=\sqrt{24\times(2\times.2)^{2}}$,
$C=\log(16)$ and $\alpha=\exp(-(1-4\tanh.2)/16)$. Applying Corollary
\ref{cor:explicit-construction-strongly-convex} with $\beta_{1}=.01$,
$\beta_{2}=.9$ and $\beta_{3}=.1$ gives $K=46$, $M=1533$ and $v=561$.
Fig. \ref{fig:Ising-Model-Example} shows the results. In practice,
the algorithm finds a solution tighter than the specified $\epsilon_{\theta}$,
indicating a degree of conservatism in the theoretical bound.

\vspace{-8pt}

\section{Conclusions}

\vspace{-8pt}

This section discusses some weaknesses of the above analysis, and
possible directions for future work. Analyzing complexity in terms
of the total sampling effort ignores the complexity of projection
itself. Since projection only needs to be done $K$ times, this time
will often be very small in comparison to sampling time. (This is
certainly true in the above example.) However, this might not be the
case if the projection algorithm scales super-linearly in the size
of the model.

Another issue to consider is how the samples are initialized. As far
as the proof of correctness goes, the initial distribution $r$ is
arbitrary. In the above example, a simple uniform distribution was
used. However, one might use the empirical distribution of the training
data, which is equivalent to contrastive divergence \citep{nan2005ContrastiveDivergenceLearning}.
It is reasonable to think that this will tend to reduce the mixing
time when the $p_{\theta}$ is close to the model generating the data.
However, the number of Markov chain transitions $v$ prescribed above
is larger than typically used with contrastive divergence, and Algorithm
\ref{fig:outline-and-cartoon} does not reduce the step size over
time. While it is common to regularize to encourage fast mixing with
contrastive divergence \citep[Section 10]{Hinton2010PracticalGuideto},
this is typically done with simple heuristic penalties. Further, contrastive
divergence is often used with hidden variables. Still, this provides
a bound for how closely a variant of contrastive divergence could
approximate the maximum likelihood solution.

The above analysis does not encompass the common strategy for maximum
likelihood learning where one maintains a ``pool'' of samples between
iterations, and initializes one Markov chain at each iteration from
each element of the pool. The idea is that if the samples at the previous
iteration were close to $p_{k-1}$ and $p_{k-1}$ is close to $p_{k}$,
then this provides an initialization close to the current solution.
However, the proof technique used here is based on the assumption
that the samples $x_{i}^{k}$ at each iteration are independent, and
so cannot be applied to this strategy.

\vspace{-5pt}

\subsubsection*{Acknowledgements}

\vspace{-6pt}

Thanks to Ivona Bez\'{a}kov\'{a}, Aaron Defazio, Nishant Mehta,
Aditya Menon, Cheng Soon Ong and Christfried Webers. NICTA is funded
by the Australian Government through the Dept. of Communications and
the Australian Research Council through the ICT Centre of Excellence
Program.

\clearpage{}

\let\oldbibliography\thebibliography
\renewcommand{\thebibliography}[1]{\oldbibliography{#1}
\setlength{\itemsep}{1.05pt}}

{\small{}\bibliographystyle{myplainnat}
\bibliography{/Users/jdomke/Dropbox/Papers/Bibliography/justindomke}
}{\small \par}

\clearpage{}

\section*{Appendix}

\section{Background}

\subsection{Optimization}

The main results in this paper rely strongly on the work of Schmidt
et al. \citep{Schmidt2011ConvergenceRatesInexact} on the convergence
of proximal gradient methods with errors in estimated gradients. The
first result used is the following theorem for the convergence of
gradient descent on convex functions with errors in the estimated
gradients.
\begin{thm}
\label{thm:opt-result-convex}(Special case of \citep[Proposition 1]{Schmidt2011ConvergenceRatesInexact})
Suppose that a function $f$ is convex with an $L$-Lipshitz gradient
(meaning $\Vert f'(\phi)-f'(\theta)\Vert_{2}\leq L\Vert\phi-\theta\Vert_{2})$.
If $\Theta$ is a closed convex set and one iterates
\[
\theta_{k}\leftarrow\Pi_{\Theta}\left[\theta_{k-1}-\frac{1}{L}\left(f'(\theta_{k-1})+e_{k}\right)\right],
\]
then, defining $\theta^{*}\in\arg\min_{\theta\in\Theta}f(\theta),$
for all $K\geq1$, we have, for $A_{K}:=\sum_{k=1}^{K}\frac{\Vert e_{k}\Vert}{L},$
that 
\begin{eqnarray*}
f\left(\frac{1}{K}\sum_{k=1}^{K}\theta_{k}\right)-f(\theta^{*}) & \leq & \frac{L}{2K}\left(\Vert\theta_{0}-\theta^{*}\Vert_{2}+2A_{K}\right)^{2}.
\end{eqnarray*}

\end{thm}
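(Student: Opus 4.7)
The plan is to combine the $L$-smoothness descent lemma with the first-order optimality of the projection, telescope the one-step inequality, and then close the argument with a Gronwall-style recursion that handles the gradient errors. First, by $L$-smoothness of $f$ and the fact that $\theta_k$ is the projection of $\theta_{k-1} - (f'(\theta_{k-1}) + e_k)/L$ onto the convex set $\Theta$, the variational characterization of the projection tested against $\theta^* \in \Theta$, combined with convexity $f(\theta^*) \geq f(\theta_{k-1}) + \langle f'(\theta_{k-1}), \theta^* - \theta_{k-1}\rangle$, should produce a one-step bound of the form
\[
f(\theta_k) - f(\theta^*) \leq \frac{L}{2}\bigl(\|\theta_{k-1} - \theta^*\|_2^2 - \|\theta_k - \theta^*\|_2^2\bigr) + \langle e_k, \theta^* - \theta_k\rangle,
\]
with Cauchy--Schwarz turning the last inner product into $\|e_k\|_2 \cdot \|\theta_k - \theta^*\|_2$.

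Next I would sum from $k=1$ to $K$, discard the non-positive telescoping residual $-\|\theta_K - \theta^*\|_2^2$, and apply Jensen's inequality through convexity of $f$ to pull the average inside. This yields
\[
f\Bigl(\frac{1}{K}\sum_{k=1}^K \theta_k\Bigr) - f(\theta^*) \leq \frac{L}{2K}\|\theta_0 - \theta^*\|_2^2 + \frac{L}{K}\sum_{k=1}^K \frac{\|e_k\|_2}{L}\,\|\theta_k - \theta^*\|_2,
\]
which reduces the problem to controlling the iterates' distances $\|\theta_k - \theta^*\|_2$.

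The main obstacle is this distance control, since the errors couple $\|\theta_k - \theta^*\|_2$ to previous iterates nonlinearly. A parallel one-step argument using nonexpansiveness of the projection and the same smoothness/variational inequality, but retaining the squared distance rather than the function value, should yield a recursive inequality of the form $u_k^2 \leq u_0^2 + 2\sum_{i=1}^{k} \lambda_i u_i$ with $u_k = \|\theta_k - \theta^*\|_2$ and $\lambda_i = \|e_i\|_2/L$. Invoking the elementary Schmidt--Roux--Bach lemma (if $u_k^2 \leq S_k + \sum_{i \leq k} \lambda_i u_i$ with $S_k$ nondecreasing, then $u_k \leq \sqrt{S_k} + \sum_i \lambda_i$) then gives $u_k \leq \|\theta_0 - \theta^*\|_2 + 2 A_K$.

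Substituting this uniform bound into the error sum from the telescoping step gives $\frac{1}{K}\sum_k \|e_k\|_2 \|\theta_k - \theta^*\|_2 \leq A_K(\|\theta_0-\theta^*\|_2 + 2A_K)$, and combining with the $\frac{L}{2K}\|\theta_0-\theta^*\|_2^2$ term and grouping them as a completed square yields precisely $\frac{L}{2K}(\|\theta_0-\theta^*\|_2 + 2A_K)^2$. Everything outside the Gronwall-type lemma is standard convex-analysis bookkeeping; verifying that lemma and the quadratic recursion for $u_k$ is the one step that genuinely requires care.
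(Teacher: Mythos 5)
Your proposal is correct, but it takes a genuinely different route from the paper. The paper does not prove the convergence bound at all: it restates Proposition 1 of Schmidt, Le Roux and Bach verbatim (for inexact proximal-gradient methods with a general convex $h$ and inexact proximal steps $\epsilon_k$), and its ``proof'' consists only of verifying that taking $h$ to be the indicator function of the closed convex set $\Theta$ and $\epsilon_k=0$ turns the proximal step into the Euclidean projection $\Pi_{\Theta}$, so the theorem is a literal specialization of the cited result. You instead reprove the bound from first principles, and your chain of steps checks out: the descent lemma plus the variational inequality for the projection plus convexity does give the one-step bound $f(\theta_k)-f(\theta^*)\leq\frac{L}{2}\left(\Vert\theta_{k-1}-\theta^*\Vert_2^2-\Vert\theta_k-\theta^*\Vert_2^2\right)+\langle e_k,\theta^*-\theta_k\rangle$ (the cross term $-L\langle\theta_{k-1}-\theta_k,\theta^*-\theta_k\rangle+\frac{L}{2}\Vert\theta_k-\theta_{k-1}\Vert^2$ collapses exactly to the telescoping difference); dropping $f(\theta_k)-f(\theta^*)\geq0$ from the same inequality gives the quadratic recursion $u_k^2\leq u_0^2+2\sum_{i\leq k}\lambda_i u_i$; the Schmidt--Le Roux--Bach lemma then gives $u_k\leq u_0+2A_K$; and substituting back and completing the square yields $\frac{L}{2K}\left(u_0^2+2A_Ku_0+4A_K^2\right)=\frac{L}{2K}\left(u_0+2A_K\right)^2$ exactly. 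In effect you have reconstructed the proof of the cited proposition in the special case needed here. What the paper's approach buys is brevity and the full generality of the proximal setting (inexact proximal evaluations, arbitrary $h$); what yours buys is a self-contained argument that makes visible where each hypothesis (closedness and convexity of $\Theta$, $\theta_k\in\Theta$ so that $f(\theta_k)\geq f(\theta^*)$, nonnegativity of the $\lambda_i$) is actually used. The only step that genuinely requires care is, as you note, the Gronwall-type lemma; your stated form $u_k\leq\sqrt{S_k}+\sum_i\lambda_i$ is a slight loosening of the sharp version $u_k\leq\frac{1}{2}\sum_i\lambda_i+\left(S_k+(\frac{1}{2}\sum_i\lambda_i)^2\right)^{1/2}$, but it suffices for the final bound.
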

This section will show that this is indeed a special case of .\citep{Schmidt2011ConvergenceRatesInexact}
To start with, we simply restate exactly the previous result \citep[Proposition 1]{Schmidt2011ConvergenceRatesInexact},
with only trivial changes in notation.\begin{thm}
Assume that:
\begin{itemize}
\item $f$ is convex and has $L$-Lipschitz continuous gradient
\item $h$ is a lower semi-continuous proper convex function.
\item The function $r=f+h$ attains it's minimum at a certain $\theta^{*}\in\mathbb{R}^{n}$.
\item $\theta_{k}$ is an $\epsilon_{k}$-optimal solution, i.e. that
\[
\frac{L}{2}\Vert\theta_{k}-y\Vert^{2}+h(\theta_{k})\leq\epsilon_{k}+\min_{\theta\in\mathbb{R}^{n}}\frac{L}{2}\Vert\theta-y\Vert^{2}+h(\theta)
\]
where
\[
y=\theta_{k-1}-\frac{1}{L}\left(f'(\theta_{k-1})+e_{k}\right).
\]

\end{itemize}
Then, for all $K\geq1$, one has that
\[
r\left(\frac{1}{K}\sum_{k=1}^{K}\theta_{k}\right)-r(\theta^{*})\leq\frac{L}{2K}\left(\Vert\theta_{0}-\theta^{*}\Vert+2A_{K}+\sqrt{2B_{K}}\right)^{2}
\]
with
\[
A_{K}=\sum_{k=1}^{K}\left(\frac{\Vert e_{k}\Vert}{L}+\sqrt{\frac{2\epsilon_{k}}{L}}\right),~~~~B_{K}=\sum_{k=1}^{K}\frac{\epsilon_{k}}{K}.
\]

\end{thm}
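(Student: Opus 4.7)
The plan is to follow the standard template for analyzing inexact proximal gradient methods. First I would introduce the exact proximal operator $p_L(y) = \arg\min_\theta \tfrac{L}{2}\Vert\theta-y\Vert^2 + h(\theta)$. For each iteration I would set $y_k = \theta_{k-1} - \tfrac{1}{L}(f'(\theta_{k-1}) + e_k)$, so that $\theta_k$ is an $\epsilon_k$-optimal minimizer of the $L$-strongly convex function $\phi_k(\theta) = \tfrac{L}{2}\Vert\theta - y_k\Vert^2 + h(\theta)$. By strong convexity, $\epsilon_k$-optimality converts directly into a distance bound $\Vert\theta_k - p_L(y_k)\Vert \leq \sqrt{2\epsilon_k/L}$, and, crucially, into the existence of an $\epsilon_k$-subgradient $g_k \in \partial_{\epsilon_k} h(\theta_k)$ with $g_k = L(y_k - \theta_k)$. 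This is the technical device that lets us treat the approximate proximal step almost as if it were exact.

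Second I would derive the one-step recursion. Combining the $L$-Lipschitz descent lemma for $f$ at $\theta_k$ with the $\epsilon_k$-subgradient inequality for $h$ gives
\[
r(\theta_k) - r(\theta^*) \leq \tfrac{L}{2}\bigl(\Vert\theta_{k-1}-\theta^*\Vert^2 - \Vert\theta_k - \theta^*\Vert^2\bigr) + \langle e_k,\theta^* - \theta_k\rangle + \epsilon_k,
\]
after using convexity of $f$ to replace $f(\theta_{k-1}) - f(\theta^*)$ by an inner-product term and collecting cross terms. The inner product is bounded by $\Vert e_k\Vert\,\Vert \theta_k - \theta^*\Vert$, and the $\sqrt{2\epsilon_k/L}$ gap between $\theta_k$ and $p_L(y_k)$ contributes an additional $\sqrt{2\epsilon_k/L}\,\Vert\theta_k-\theta^*\Vert$ term, which is why $A_K$ carries both $\Vert e_k\Vert/L$ and $\sqrt{2\epsilon_k/L}$.

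Third I would telescope this inequality from $k=1$ to $K$. Since $\Vert\theta_K-\theta^*\Vert^2 \geq 0$, and since by Jensen's inequality $r(\tfrac{1}{K}\sum_k \theta_k) - r(\theta^*) \leq \tfrac{1}{K}\sum_k (r(\theta_k)-r(\theta^*))$, one obtains
\[
\tfrac{2K}{L}\bigl(r(\tfrac{1}{K}\textstyle\sum_k\theta_k) - r(\theta^*)\bigr) \leq \Vert\theta_0-\theta^*\Vert^2 + 2B_K + 2\sum_{k=1}^K \bigl(\tfrac{\Vert e_k\Vert}{L} + \sqrt{\tfrac{2\epsilon_k}{L}}\bigr)\Vert\theta_k-\theta^*\Vert.
\]
The same telescoping, applied without discarding $\Vert\theta_K-\theta^*\Vert^2$, yields an analogous bound on $\Vert\theta_k-\theta^*\Vert^2$ itself in terms of the prior distances.

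The main obstacle is that the last display is implicit: the right-hand side involves the iterate distances we are trying to bound. This is resolved by the recursive lemma of Schmidt et al.\ (essentially a discrete Gronwall bound): any nonnegative sequence satisfying $u_k^2 \leq S + \sum_{i\leq k}\lambda_i u_i$ obeys $u_k \leq \tfrac{1}{2}\sum_i \lambda_i + \sqrt{S + (\tfrac{1}{2}\sum_i \lambda_i)^2}$. Applying this with $u_k = \Vert\theta_k - \theta^*\Vert$, $\lambda_k = 2(\Vert e_k\Vert/L + \sqrt{2\epsilon_k/L})$, and $S = \Vert\theta_0-\theta^*\Vert^2 + 2B_K$ produces the bound $\Vert\theta_k - \theta^*\Vert \leq \Vert\theta_0 - \theta^*\Vert + 2A_K + \sqrt{2B_K}$ for all $k$. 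Substituting this uniform distance bound back into the telescoped inequality and dividing by $2K/L$ gives the claimed quadratic bound $\tfrac{L}{2K}(\Vert\theta_0-\theta^*\Vert + 2A_K + \sqrt{2B_K})^2$. I expect the bookkeeping around the $\sqrt{2\epsilon_k/L}$ perturbation of the proximal step, and verifying the $\epsilon_k$-subdifferential calculus rigorously, to be the finicky part; the rest is algebra once the recursion lemma is in hand.
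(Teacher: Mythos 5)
The paper offers no proof of this statement: it is a verbatim restatement of Proposition~1 of Schmidt et al.\ (2011), imported as a black box, and the only thing the paper actually verifies is that its own projected-gradient theorem is the special case obtained by taking $h$ to be the indicator function of $\Theta$ and $\epsilon_{k}=0$. Your proposal therefore cannot be compared to an in-paper argument; what it does is reconstruct the proof from the cited reference, and it does so correctly in outline. The key steps you identify are the right ones: converting $\epsilon_{k}$-optimality of the $L$-strongly convex prox subproblem into both the distance bound $\Vert\theta_{k}-p_{L}(y_{k})\Vert\leq\sqrt{2\epsilon_{k}/L}$ and an $\epsilon_{k}$-subgradient of $h$ at $\theta_{k}$ (more precisely $L(y_{k}-\theta_{k}-v_{k})\in\partial_{\epsilon_{k}}h(\theta_{k})$ for some $\Vert v_{k}\Vert\leq\sqrt{2\epsilon_{k}/L}$, which is exactly where the second summand of $A_{K}$ originates --- your display of the one-step bound omits the $v_{k}$ term but your prose accounts for it); the one-step estimate from the descent lemma, convexity of $f$, and the $\epsilon$-subgradient inequality for $h$; telescoping plus Jensen; and the discrete Gronwall-type lemma $u_{K}^{2}\leq S+\sum_{k}\lambda_{k}u_{k}\Rightarrow u_{K}\leq\frac{1}{2}\sum_{k}\lambda_{k}+\bigl(S+(\frac{1}{2}\sum_{k}\lambda_{k})^{2}\bigr)^{1/2}$ to close the implicit inequality. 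With $\sum_{k}\lambda_{k}=2A_{K}$ and $\sqrt{S}\leq\Vert\theta_{0}-\theta^{*}\Vert+\sqrt{2B_{K}}$ this gives precisely the claimed quadratic. (Incidentally, your bookkeeping requires $B_{K}=\sum_{k}\epsilon_{k}/L$, which is what the original reference states; the $\sum_{k}\epsilon_{k}/K$ appearing in the paper's restatement is a transcription slip, harmless here since the paper only uses $\epsilon_{k}=0$.)
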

The first theorem follows from this one by setting $h$ to be the
indicator function for the set $\Theta$, i.e.

\[
h(\theta)=\begin{cases}
0 & \theta\in\Theta\\
\infty & \theta\not\in\Theta
\end{cases}
\]
and assuming that $\epsilon_{k}=0$. By the convexity of $\Theta$,
$h$ will be a lower semi-continuous proper convex function. Further,
from the fact that $\Theta$ is closed, $r$ will attain its minimum.
Now, we verify that this results in the theorem statement at the start
of this section. $\theta_{k}$ takes the form 
\begin{eqnarray*}
\theta_{k} & = & \arg\min_{\theta\in\mathbb{R}^{n}}\frac{L}{2}\Vert\theta-y\Vert^{2}+h(\theta)\\
 & = & \arg\min_{\theta\in\Theta}\Vert\theta-y\Vert\\
 & = & \arg\min_{\theta\in\Theta}\Vert\theta-\theta_{k-1}+\frac{1}{L}\left(f'(\theta_{k-1})+e_{k}\right)\Vert\\
 & = & \Pi_{\Theta}\left[\theta_{k-1}-\frac{1}{L}\left(f'(\theta_{k-1})+e_{k}\right)\right].
\end{eqnarray*}

We will also use the following result for strongly-convex optimzation.
The special case follows from the same construction used above.

Next, consider the following result on optimization of strongly convex
functions, which follows from \citep{Schmidt2011ConvergenceRatesInexact}
by a very similar argument.
\begin{thm}
\label{thm:optimization-strongly-convex}(Special case of \citep[Proposition 3]{Schmidt2011ConvergenceRatesInexact})
Suppose that a function $f$ is $\lambda$-strongly convex with an
$L$-Lipshitz gradient (meaning $\Vert f'(\phi)-f'(\theta)\Vert_{2}\leq L\Vert\phi-\theta\Vert_{2})$.
If $\Theta$ is a closed convex set and one iterates
\[
\theta_{k}\leftarrow\Pi_{\Theta}\left[\theta_{k-1}-\frac{1}{L}\left(f'(\theta_{k-1})+e_{k}\right)\right],
\]
Then, defining $\theta^{*}=\arg\min_{\theta\in\Theta}f(\theta),$
for all $K\geq1$, we have, for $\bar{A}_{k}=\sum_{k=1}^{K}(1-\frac{\lambda}{L})^{-k}\frac{\Vert e_{k}\Vert}{L}$
that 
\begin{eqnarray*}
\Vert\theta_{K}-\theta^{*}\Vert_{2} & \leq & (1-\frac{\lambda}{L})^{K}\left(\Vert\theta_{0}-\theta^{*}\Vert_{2}+\bar{A}_{k}\right)
\end{eqnarray*}
\end{thm}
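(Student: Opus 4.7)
The plan is to derive Theorem \ref{thm:optimization-strongly-convex} from the general strongly-convex convergence result of Schmidt et al. (Proposition 3 in \citep{Schmidt2011ConvergenceRatesInexact}) by exactly the same specialization argument already used for the convex case at the start of this appendix section. That general result handles composite objectives $r = f + h$ where $f$ is $\lambda$-strongly convex with $L$-Lipschitz gradient and $h$ is a lower semi-continuous proper convex function, and where each iterate $\theta_k$ is an $\epsilon_k$-optimal minimizer of the proximal subproblem $\tfrac{L}{2}\Vert\theta - y\Vert^2 + h(\theta)$ with $y = \theta_{k-1} - \tfrac{1}{L}(f'(\theta_{k-1}) + e_k)$.

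First I would instantiate the general theorem with $h$ equal to the indicator function of $\Theta$, namely $h(\theta) = 0$ for $\theta \in \Theta$ and $h(\theta) = \infty$ otherwise, and take $\epsilon_k = 0$ for every iteration. Since $\Theta$ is assumed convex and closed, $h$ is lower semi-continuous, proper, and convex, so the hypotheses on $h$ are satisfied. Since $f$ is $\lambda$-strongly convex on the closed convex set $\Theta$, the composite $r = f + h$ attains its minimum at a unique $\theta^* = \arg\min_{\theta \in \Theta} f(\theta)$. Taking $\epsilon_k = 0$ collapses the $\epsilon_k$-optimality condition to exact minimization of the proximal subproblem, which by the same chain of equalities displayed for the convex case reduces to $\theta_k = \Pi_\Theta[\theta_{k-1} - \tfrac{1}{L}(f'(\theta_{k-1}) + e_k)]$. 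Thus the iteration matches Algorithm 1's update rule.

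Next I would substitute these choices into the general bound. With $\epsilon_k = 0$ the auxiliary sequences $B_K$ and the $\sqrt{2\epsilon_k/L}$ contribution to $A_K$ vanish, leaving only the weighted error sum $\bar{A}_K = \sum_{k=1}^K (1 - \lambda/L)^{-k} \Vert e_k\Vert / L$. The general strongly-convex bound in \citep{Schmidt2011ConvergenceRatesInexact} has the form $\Vert \theta_K - \theta^*\Vert_2 \leq (1 - \lambda/L)^K (\Vert \theta_0 - \theta^*\Vert_2 + \bar{A}_K)$, which is exactly the statement of the theorem. On the $r = f + h$ side, since $\theta_k \in \Theta$ for all $k$ (the projection guarantees this) and $\theta^* \in \Theta$, the indicator terms vanish and $r$ reduces to $f$.

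The hard part is essentially nothing, since the derivation is a mechanical repeat of the convex specialization already carried out in the appendix; the only care needed is to check that Schmidt et al.'s Proposition 3 indeed produces the non-squared contraction form with the inverse-geometric weighting $(1-\lambda/L)^{-k}$ in $\bar{A}_K$, and that the $\sqrt{2 B_K}$ term in the composite result disappears when $\epsilon_k = 0$. Once this bookkeeping is done, the specialization is immediate.
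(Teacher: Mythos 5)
Your proposal matches the paper's own treatment: the paper proves this theorem only by remarking that it ``follows from the same construction used above,'' i.e.\ the specialization of Schmidt et al.'s composite result with $h$ taken to be the indicator function of the closed convex set $\Theta$ and $\epsilon_k=0$, which is exactly the argument you spell out. Your version is, if anything, more explicit than the paper's, and the one point you flag as needing care (that Proposition 3 yields the non-squared contraction form with the $(1-\lambda/L)^{-k}$ weighting and that the $\epsilon_k$-dependent terms vanish) is indeed the only bookkeeping required.
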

\begin{cor}
\label{cor:optimization-strongly-convex}Under the same conditions,
if $\Vert e_{k}\Vert\leq r$ for all $k$, then
\[
\Vert\theta_{K}-\theta^{*}\Vert_{2}\leq(1-\frac{\lambda}{L})^{K}\Vert\theta_{0}-\theta^{*}\Vert_{2}+\frac{rL}{\lambda}
\]
\end{cor}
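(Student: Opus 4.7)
The plan is to derive this corollary directly from Theorem~\ref{thm:optimization-strongly-convex} by collapsing the weighted-error sum $\bar{A}_K$ under the uniform bound $\|e_k\|\leq r$. From that theorem, we already have
\[
\Vert\theta_{K}-\theta^{*}\Vert_{2} \leq (1-\lambda/L)^{K}\bigl(\Vert\theta_{0}-\theta^{*}\Vert_{2} + \bar{A}_{K}\bigr),
\]
where $\bar{A}_K = \sum_{k=1}^{K}(1-\lambda/L)^{-k}\|e_k\|/L$. The only work left is to control the quantity $(1-\lambda/L)^K \bar{A}_K$ using the hypothesis that every $\|e_k\|$ is at most $r$.

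First I would factor the uniform bound out to get $\bar{A}_K \leq (r/L)\sum_{k=1}^{K}(1-\lambda/L)^{-k}$. Writing $\gamma := 1 - \lambda/L$, the inner sum is a standard geometric series equal to $(\gamma^{-K}-1)/(1-\gamma)$, and since $1-\gamma = \lambda/L$ this simplifies to $(L/\lambda)(\gamma^{-K}-1)$. Thus $\bar{A}_K \leq (r/\lambda)(\gamma^{-K}-1)$, and multiplying by $\gamma^K$ gives $\gamma^K \bar{A}_K \leq (r/\lambda)(1-\gamma^K)$. Since $\gamma \in [0,1)$, the factor $1-\gamma^K$ is at most $1$, producing an additive residual of $r/\lambda$.

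Combining this with the $\gamma^K\Vert\theta_0-\theta^*\Vert_2$ term from Theorem~\ref{thm:optimization-strongly-convex} yields the inequality, in fact with the sharper constant $r/\lambda$ in place of the stated $rL/\lambda$; the stated form then follows trivially from $L \geq \lambda$, which holds for any $\lambda$-strongly-convex function with $L$-Lipschitz gradient. I do not anticipate any real obstacle, since the corollary is essentially a bookkeeping exercise of summing one geometric series. The only minor subtlety is whether to report the tighter $r/\lambda$ constant or the slightly looser $rL/\lambda$ form quoted in the statement, and the looser form is presumably chosen for visual symmetry with the $L/\lambda$ prefactor appearing in the corresponding MCMC result earlier in the paper.
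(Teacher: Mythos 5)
Your route is the same as the paper's: invoke Theorem~\ref{thm:optimization-strongly-convex}, pull the uniform bound $\Vert e_k\Vert\leq r$ out of $\bar{A}_K$, and sum the geometric series $\sum_{k=1}^{K}(1-\lambda/L)^{-k}$; the paper bounds this sum by $\frac{L}{\lambda}(1-\lambda/L)^{-K}$ where you evaluate it exactly, which is an immaterial difference. Your arithmetic up through $\gamma^K\bar{A}_K\leq (r/\lambda)(1-\gamma^K)\leq r/\lambda$ is correct and in fact tighter than the stated corollary, because you keep the $1/L$ inside the definition $\bar{A}_K=\sum_k(1-\lambda/L)^{-k}\Vert e_k\Vert/L$ (the paper's own proof silently drops it, writing $\bar{A}_K\leq r\sum_k(1-\lambda/L)^{-k}$, which presupposes $\Vert e_k\Vert/L\leq r$, i.e.\ $L\geq 1$). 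The one genuine slip is your last step: passing from $r/\lambda$ to the stated $rL/\lambda$ requires $L\geq 1$, not $L\geq\lambda$, since $r/\lambda\leq rL/\lambda$ iff $L\geq 1$. This is harmless here ($L=4R_2^2+\lambda\geq 1$ in any regime the paper cares about, and your bound is the stronger statement anyway), but the justification as written is a non sequitur; state the $L\geq 1$ condition, or simply report the sharper $r/\lambda$ bound.
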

\begin{proof}
Using the fact that $\sum_{k=1}^{K}a^{-k}=a^{-K}\sum_{k=0}^{K-1}a^{k}\leq a^{-K}\sum_{k=0}^{\infty}a^{k}=\frac{a^{-K}}{1-a}$,
we get that 
\[
\bar{A}_{K}\leq r\sum_{k=1}^{K}(1-\frac{\lambda}{L})^{-k}\leq r\frac{L}{\lambda}(1-\frac{\lambda}{L})^{-K},
\]
 and therefore that
\begin{eqnarray*}
\Vert\theta_{K}-\theta^{*}\Vert_{2} & \leq & (1-\frac{\lambda}{L})^{K}\left(\Vert\theta_{0}-\theta^{*}\Vert_{2}+r\frac{L}{\lambda}(1-\frac{\lambda}{L})^{-K}\right).
\end{eqnarray*}

\end{proof}

\subsection{Concentration Results}

Three concentration inequalities, are stated here for reference. The
first is Bernstein's inequality.
\begin{thm}
(Bernstein's inequality) Suppose $Z_{1},...,Z_{K}$ are independent
with mean $0$, that $\vert Z_{k}\vert\leq c$ and that $\sigma_{i}^{2}=\mathbb{V}[Z_{i}]$.
Then, if we define $\sigma^{2}=\frac{1}{K}\sum_{k=1}^{K}\sigma_{k}^{2}$,
\[
\mathbb{P}\left[\frac{1}{K}\sum_{k=1}^{K}Z_{k}>\epsilon\right]\leq\exp\left(-\frac{K\epsilon^{2}}{2\sigma^{2}+2c\epsilon/3}\right).
\]

\end{thm}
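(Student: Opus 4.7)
The plan is to prove Bernstein's inequality by the standard Chernoff / exponential moment method. First I would pass to moment generating functions: for any $t>0$, Markov's inequality applied to $\exp(t\sum_k Z_k)$ together with independence gives
\[
\mathbb{P}\!\left[\tfrac{1}{K}\sum_{k=1}^{K}Z_{k}>\epsilon\right]
\;\leq\; e^{-tK\epsilon}\prod_{k=1}^{K}\mathbb{E}\!\left[e^{tZ_{k}}\right].
\]
This reduces the task to a one-variable MGF bound followed by optimization over $t$.

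The core step, and the main technical obstacle, is controlling $\mathbb{E}[e^{tZ_k}]$ using only the two hypotheses $\mathbb{E}[Z_k]=0$ and $|Z_k|\leq c$, plus the variance $\sigma_k^2$. I would Taylor-expand the exponential and use $|Z_k|^m \leq c^{m-2}Z_k^2$ for $m\geq 2$ to obtain
\[
\mathbb{E}[e^{tZ_k}] \;\leq\; 1 + \sum_{m\geq 2}\frac{t^m c^{m-2}\sigma_k^2}{m!}.
\]
The goal is to sum this series in closed form. The key combinatorial inequality is $m!\geq 2\cdot 3^{m-2}$ for all $m\geq 2$, checked at $m=2,3$ and then extended by induction via $m!=m\cdot(m-1)!$. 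This replaces $1/m!$ by a geometric factor, yielding $\sum_{m\geq 2}(ct/3)^{m-2}(\sigma_k^2 t^2/2)= \frac{\sigma_k^2 t^2/2}{1-ct/3}$ valid for $ct<3$. Applying $1+x\leq e^x$ then gives
\[
\mathbb{E}[e^{tZ_k}] \;\leq\; \exp\!\left(\frac{\sigma_k^2 t^2/2}{1-ct/3}\right).
\]

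Combining over $k$ and using $\sigma^2=\tfrac{1}{K}\sum_k \sigma_k^2$, the product collapses and the overall bound becomes $\exp\!\bigl(-tK\epsilon + \tfrac{K\sigma^2 t^2/2}{1-ct/3}\bigr)$. Finally I would optimize in $t$ by setting $t=\epsilon/(\sigma^{2}+c\epsilon/3)$, which is guaranteed to lie in $(0,3/c)$. A direct substitution shows the exponent simplifies to $-K\epsilon^{2}/(2\sigma^{2}+2c\epsilon/3)$, giving exactly the stated bound. The only nontrivial part of the argument is the MGF bound; everything else is a mechanical Chernoff computation.
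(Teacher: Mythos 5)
Your argument is correct: it is the standard exponential-moment (Chernoff) proof of Bernstein's inequality, and every step checks out — the bound $\mathbb{E}[|Z_k|^m]\leq c^{m-2}\sigma_k^2$, the inequality $m!\geq 2\cdot 3^{m-2}$, the geometric summation valid for $ct<3$, and the choice $t=\epsilon/(\sigma^{2}+c\epsilon/3)$ do yield exactly the exponent $-K\epsilon^{2}/(2\sigma^{2}+2c\epsilon/3)$. The paper itself offers no proof to compare against: this theorem is stated in the appendix purely as a background concentration result, so your derivation simply supplies the standard argument the paper takes for granted. The only cosmetic point is the degenerate case $\sigma^{2}=0$, where your chosen $t$ sits at the boundary $3/c$; there the variables vanish almost surely and the claim is trivial, so nothing is lost.
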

The second is the following Hoeffding-type bound to control the difference
between the expected value of $t(X)$ and the estimated value using
$M$ samples.
\begin{thm}
\label{thm:Hoeffding-fixed-delta-form}If $X_{1},...,X_{M}$ are independent
variables with mean $\mu$, and $\Vert X_{i}-\mu\Vert\leq c$, then
for all $\epsilon\geq0$, with probability at least $1-\delta$,
\[
\Vert\bar{X}-\mu\Vert\leq\sqrt{\frac{c}{4M}}\left(1+\sqrt{2\log\frac{1}{\delta}}\right).
\]
\end{thm}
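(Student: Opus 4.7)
The plan is to combine a Jensen-based bound on $\mathbb{E}\Vert\bar{X}-\mu\Vert$ with a concentration argument (McDiarmid's bounded differences inequality) applied to the scalar function $g(X_1,\ldots,X_M):=\Vert\bar{X}-\mu\Vert$. Since the $X_i$ are independent and bounded in a way that limits how much $g$ can move when any single $X_i$ is changed, this is exactly the setting McDiarmid is built for; the only other ingredient we need is a bound on the mean of $g$, and for that the classical variance-plus-Jensen trick suffices.

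First, I would bound the expectation. By Jensen's inequality, $\mathbb{E}\, g \leq \sqrt{\mathbb{E}\Vert\bar{X}-\mu\Vert^{2}}$. Expanding the square, the cross terms $\mathbb{E}\langle X_i-\mu,\,X_j-\mu\rangle$ vanish for $i\neq j$ by independence (both factors have zero mean), leaving $\mathbb{E}\Vert\bar{X}-\mu\Vert^{2} = \frac{1}{M^{2}}\sum_{i=1}^{M}\mathbb{E}\Vert X_i-\mu\Vert^{2}$, which is at most $c^{2}/M$ by the norm bound on $X_i - \mu$. Thus $\mathbb{E}\, g \leq c/\sqrt{M}$, i.e., a ``free'' factor of $1$ inside the parenthesized expression in the statement.

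Second, I would verify that $g$ has bounded differences. Replacing $X_i$ by some alternate $X_i'$ (with $\Vert X_i'-\mu\Vert\leq c$) shifts $\bar{X}$ by $(X_i'-X_i)/M$; by the reverse triangle inequality applied to the outer norm, $g$ changes by at most $\Vert X_i'-X_i\Vert/M \leq 2c/M$. McDiarmid then yields $P(g-\mathbb{E}\, g\geq t)\leq\exp(-2t^{2}/\sum_{i}(2c/M)^{2})=\exp(-Mt^{2}/(2c^{2}))$; setting the right-hand side equal to $\delta$ gives $t=c\sqrt{2\log(1/\delta)/M}$. Combining with the expectation bound yields, with probability at least $1-\delta$, $\Vert\bar{X}-\mu\Vert \leq (c/\sqrt{M})\bigl(1+\sqrt{2\log(1/\delta)}\bigr)$, which matches the stated bound up to the scalar prefactor convention (the displayed $\sqrt{c/(4M)}$ is consistent with this derivation under a constant-factor reinterpretation of $c$, e.g.\ when the quantity playing the role of $c$ is already a squared-norm bound).

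There is no real obstacle here: both ingredients are standard and the function $g$ is manifestly $1$-Lipschitz in each argument on the scale $1/M$. The one step that requires small care is tracking the bounded-differences constant: one must apply the triangle inequality twice (once to bring the perturbation inside $\Vert\cdot\Vert$, and once to combine the bounds $\Vert X_i-\mu\Vert\leq c$ and $\Vert X_i'-\mu\Vert\leq c$), and then plug the resulting $2c/M$ carefully into McDiarmid so that the $M$'s cancel correctly to produce the $1/\sqrt{M}$ scaling.
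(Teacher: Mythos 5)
Your argument is correct and is in substance the same as the paper's: the paper simply cites Boucheron et al., Exercise 6.3, for the tail bound $\mathbb{P}\left[\Vert\bar{X}-\mu\Vert>s/M\right]\leq\exp\left(-(s-\sqrt{v})^{2}/(2v)\right)$ and then solves for $s$ in terms of $\delta$; that cited exercise is itself proved by exactly the two ingredients you supply, namely the Jensen/variance bound on $\mathbb{E}\Vert\bar{X}-\mu\Vert$ (cross terms vanishing by independence, which does require the norm to be Euclidean, as it is throughout the paper) and the bounded-differences inequality for the fluctuation. The one substantive point is the constant, and here you should not concede the ``reinterpretation of $c$'': under the stated hypothesis $\Vert X_{i}-\mu\Vert\leq c$ the bounded difference of $\Vert\sum_{i}(X_{i}-\mu)\Vert$ is $2c$ and the correct variance proxy is $v=Mc^{2}$, giving $\frac{c}{\sqrt{M}}\left(1+\sqrt{2\log\frac{1}{\delta}}\right)$ exactly as you derive. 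The paper instead substitutes $v=cM/4$, which is dimensionally a norm bound rather than a variance, and this is what produces the displayed $\sqrt{c/(4M)}$; as literally stated the theorem does not follow from its hypotheses (for $c>1/4$ your bound is strictly larger than the claimed one), and the inconsistency propagates downstream, where the same quantity appears once as $R_{2}/\sqrt{M}$ and once as $\sqrt{R_{2}/(2M)}$ after setting $c=2R_{2}$. So your proof is sound, it is essentially the paper's proof written out in full rather than by citation, and the constant mismatch you flagged is a genuine error in the paper's statement rather than a gap in your argument.
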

\begin{proof}
Boucheron et al. \citep[Ex. 6.3]{Boucheron2013ConcentrationInequalities:Nonasymptotic}
show that, under the same conditions as stated, for all $s\geq\sqrt{v}$,
\[
\mathbb{P}\left[\Vert\bar{X}-\mu\Vert>\frac{s}{M}\right]\leq\exp\left(-\frac{(s-\sqrt{v})^{2}}{2v}\right),
\]
where $v=\frac{cM}{4}.$ We will fix $\delta$, and solve for the
appropriate $s$. If we set $\delta=\exp(-\frac{(s-\sqrt{v})^{2}}{2v}),$
then we have that $s=\sqrt{2v\log\frac{1}{\delta}}+\sqrt{v},$ meaning
that, with probability at least $1-\delta$,
\begin{eqnarray*}
\Vert\bar{X}-\mu\Vert & \leq & \frac{1}{M}\left(\sqrt{2\frac{cM}{4}\log\frac{1}{\delta}}+\sqrt{\frac{cM}{4}}\right),
\end{eqnarray*}
which is equivalent to the result with a small amount of manipulation.
\end{proof}
The third is the Efron-Stein inequality \citep[Theorem 3.1]{Boucheron2013ConcentrationInequalities:Nonasymptotic}.
\begin{thm}
If $X=(X_{1},...,X_{m})$ is a vector of independent random variables
and $f(X)$ is a square-integrable function, then

\[
\mathbb{V}[f(X)]\leq\frac{1}{2}\sum_{i=1}^{M}\mathbb{E}\left[\left((f(X)-f(X^{(i)})\right)^{2}\right],
\]
where $X^{(i)}$ is $X$ with $X_{i}$ independently re-drawn, i.e.
\[
X^{(i)}=(X_{1},...,X_{i-1},X'_{i'},X_{i+1},...,X_{m}).
\]

\end{thm}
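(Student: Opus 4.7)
The plan is to prove the Efron-Stein inequality by the standard three-ingredient recipe: (i) the Doob martingale decomposition of $f(X)-\mathbb{E}[f(X)]$, (ii) the elementary identity $\mathbb{V}[Y]=\tfrac{1}{2}\mathbb{E}[(Y-Y')^{2}]$ for an independent copy $Y'$ of a real-valued $Y$, and (iii) Jensen's inequality to pull a squared conditional expectation through. Each ingredient handles one piece of the target inequality: (i) separates the $m$ coordinates, (iii) eliminates the nested expectation that arises, and (ii) supplies the factor of $\tfrac{1}{2}$.

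First I would set $\mathcal{F}_{i}=\sigma(X_{1},\ldots,X_{i})$, $M_{0}=\mathbb{E}[f(X)]$, $M_{i}=\mathbb{E}[f(X)\mid\mathcal{F}_{i}]$, and $\Delta_{i}=M_{i}-M_{i-1}$. Then $f(X)-\mathbb{E}[f(X)]=\sum_{i=1}^{m}\Delta_{i}$, and the tower property gives $\mathbb{E}[\Delta_{j}\mid\mathcal{F}_{j-1}]=0$. Since $\Delta_{i}\in\mathcal{F}_{i}\subset\mathcal{F}_{j-1}$ for $i<j$, the cross terms $\mathbb{E}[\Delta_{i}\Delta_{j}]$ vanish, and square-integrability (which is exactly the hypothesis) justifies expanding the square termwise to get
\[
\mathbb{V}[f(X)]=\sum_{i=1}^{m}\mathbb{E}[\Delta_{i}^{2}].
\]

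Next I would rewrite each $\mathbb{E}[\Delta_{i}^{2}]$ as a variance over $X_{i}$ alone. By independence of the $X_{j}$, $M_{i}$ is a deterministic function $g_{i}(X_{1},\ldots,X_{i})$, and integrating only over $X_{i}$ gives $M_{i-1}=\mathbb{E}_{X_{i}}[g_{i}(X_{1},\ldots,X_{i})\mid\mathcal{F}_{i-1}]$. Hence $\mathbb{E}[\Delta_{i}^{2}\mid\mathcal{F}_{i-1}]$ equals the conditional variance of $g_{i}(X_{1},\ldots,X_{i})$ in $X_{i}$ with $X_{1},\ldots,X_{i-1}$ held fixed. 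Introducing an independent copy $X_{i}'$ of $X_{i}$ and applying $\mathbb{V}[Y]=\tfrac{1}{2}\mathbb{E}[(Y-Y')^{2}]$ conditionally yields
\[
\mathbb{E}[\Delta_{i}^{2}\mid\mathcal{F}_{i-1}]=\tfrac{1}{2}\,\mathbb{E}\!\left[\bigl(g_{i}(X_{1},\ldots,X_{i})-g_{i}(X_{1},\ldots,X_{i-1},X_{i}')\bigr)^{2}\,\Big|\,\mathcal{F}_{i-1}\right].
\]

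Finally, since $g_{i}(x_{1},\ldots,x_{i})=\mathbb{E}_{X_{i+1},\ldots,X_{m}}[f(x_{1},\ldots,x_{i},X_{i+1},\ldots,X_{m})]$, the \emph{same} copies of $X_{i+1},\ldots,X_{m}$ appear in both terms of the above difference, so I can write that difference as $\mathbb{E}_{X_{i+1},\ldots,X_{m}}[f(X)-f(X^{(i)})]$ and apply conditional Jensen's inequality to the square to obtain $(g_{i}(\cdots X_{i})-g_{i}(\cdots X_{i}'))^{2}\le\mathbb{E}_{X_{i+1},\ldots,X_{m}}[(f(X)-f(X^{(i)}))^{2}]$. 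Substituting, taking full expectation, and summing over $i$ yields the claimed bound. The main obstacle is purely bookkeeping: the factor of $\tfrac{1}{2}$ only appears if $X_{i}'$ is a fresh independent copy, and Jensen's step only collapses cleanly because $X_{i+1},\ldots,X_{m}$ are \emph{coupled} (i.e., shared) between $f(X)$ and $f(X^{(i)})$ rather than independently redrawn; keeping that coupling straight is really the content of the argument.
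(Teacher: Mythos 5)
Your proof is correct. Note that the paper itself does not prove this statement at all---it is quoted verbatim as background and attributed to Boucheron, Lugosi and Massart (Theorem 3.1 of the cited monograph). Your argument is essentially the standard textbook proof from that reference: the Doob martingale decomposition $f(X)-\mathbb{E}[f(X)]=\sum_{i}\Delta_{i}$ with vanishing cross terms gives $\mathbb{V}[f(X)]=\sum_{i}\mathbb{E}[\Delta_{i}^{2}]$, the identity $\mathbb{V}[Y]=\tfrac{1}{2}\mathbb{E}[(Y-Y')^{2}]$ applied conditionally in the $i$-th coordinate produces the factor $\tfrac{1}{2}$, and conditional Jensen (with the tail variables $X_{i+1},\dots,X_{m}$ shared between $f(X)$ and $f(X^{(i)})$) completes the bound. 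All three steps are sound, and you correctly identify the one place where care is needed, namely that the coupling of the tail variables is what makes the Jensen step collapse onto $\mathbb{E}[(f(X)-f(X^{(i)}))^{2}]$.
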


\section{Preliminary Results}

A result that we will use several times below is that, for $0<\alpha<1$,
$-\frac{1}{\log(\alpha)}\leq\frac{1}{1-\alpha}$. This bound is tight
in the limit that $\alpha\rightarrow1$.
\begin{lem}
\label{lem:expectation-error-vs-tv-error}The difference of two estimated
mean vectors is bounded by 
\[
\Vert\mathbb{E}_{q}[t(X)]-\mathbb{E}_{p}[t(X)]\Vert_{2}\leq2R_{2}\Vert q-p\Vert_{TV}.
\]
\end{lem}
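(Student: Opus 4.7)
The plan is to argue directly from the definition of expectation. Writing
\[
\mathbb{E}_{q}[t(X)]-\mathbb{E}_{p}[t(X)]=\sum_{x}\bigl(q(x)-p(x)\bigr)\,t(x),
\]
I would pull the Euclidean norm inside via the triangle inequality, obtaining
\[
\Vert\mathbb{E}_{q}[t(X)]-\mathbb{E}_{p}[t(X)]\Vert_{2}\leq\sum_{x}\vert q(x)-p(x)\vert\,\Vert t(x)\Vert_{2}.
\]
Then I would invoke the sufficient statistics norm bound $\Vert t(x)\Vert_{2}\leq R_{2}$ to factor out the constant, leaving $R_{2}\sum_{x}\vert q(x)-p(x)\vert$. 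Finally, I would recall the identity $\Vert q-p\Vert_{TV}=\frac{1}{2}\sum_{x}\vert q(x)-p(x)\vert$ stated in Section \ref{sub:Mixing-times-and} to rewrite this as $2R_{2}\Vert q-p\Vert_{TV}$, yielding the claim.

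For continuous sample spaces, the same argument goes through with sums replaced by integrals against a dominating measure; the triangle inequality becomes the Minkowski/Jensen-type inequality $\Vert\int f\,d\mu\Vert\leq\int\Vert f\Vert\,d\mu$ applied to the signed measure $q-p$, and the TV-distance identity generalizes accordingly. There is no real obstacle in the argument — it is a one-line manipulation — the only minor subtlety is keeping track of the factor of $2$ coming from the $\frac{1}{2}$ in the definition of total variation in the discrete form, and being explicit that the bound $\Vert t(x)\Vert_{2}\leq R_{2}$ is a pointwise (deterministic) bound rather than one in expectation, so that it may indeed be pulled outside the sum of absolute differences.
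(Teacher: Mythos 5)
Your proposal is correct and follows essentially the same route as the paper's proof: write the difference of expectations as an integral (or sum) of $t(x)$ against the signed measure $q-p$, pull the norm inside, apply the pointwise bound $\Vert t(x)\Vert_{2}\leq R_{2}$, and identify $\sum_{x}\vert q(x)-p(x)\vert$ with $2\Vert q-p\Vert_{TV}$. Your explicit remarks about the factor of $2$ and the continuous case are just slightly more detailed versions of what the paper leaves implicit.
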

\begin{proof}
Let the distribution functions of $p$ and $q$ be $P$ and $Q$,
respectively. Then, we have that
\begin{alignat*}{1}
\Vert\underset{p}{\mathbb{E}}[t(X)]-\underset{q}{\mathbb{E}}[t(X)]\Vert_{2} & =\left\Vert \int_{x}t(x)\left(dP(x)-dQ(x)\right)\right\Vert _{2}\\
 & \leq\int_{x}\left\vert dP(x)-dQ(x)\right\vert \cdot\Vert t(x)\Vert_{2}.
\end{alignat*}
Using the definition of total-variation distance, and the bound that
$\Vert t(x)\Vert_{2}\leq R_{2}$ gives the result.\end{proof}
\begin{lem}
\label{lem:partition-function-diff}If $1/a+1/b=1$, then the difference
of two log-partition functions is bounded by
\[
\vert A(\theta)-A(\phi)\vert\leq R_{a}\Vert\theta-\phi\Vert_{b}.
\]
\end{lem}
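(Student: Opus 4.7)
The plan is to use the fundamental theorem of calculus along the line segment from $\phi$ to $\theta$, together with the identity $\nabla A(\theta) = \mathbb{E}_{p_\theta}[t(X)]$, and then apply H\"older's inequality pointwise.

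First I would parameterize the segment by $\psi(s) = \phi + s(\theta - \phi)$ for $s \in [0,1]$ and write
\[
A(\theta) - A(\phi) = \int_0^1 \nabla A(\psi(s)) \cdot (\theta - \phi) \, ds = \int_0^1 \mathbb{E}_{p_{\psi(s)}}[t(X)] \cdot (\theta - \phi) \, ds,
\]
using the standard exponential-family identity that the gradient of the log-partition function equals the mean of the sufficient statistics.

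Next I would bound the integrand. Pulling the inner product inside the expectation and applying H\"older's inequality with conjugate exponents $a,b$,
\[
\bigl|\mathbb{E}_{p_{\psi(s)}}[t(X)] \cdot (\theta-\phi)\bigr| \leq \mathbb{E}_{p_{\psi(s)}}\bigl[|t(X) \cdot (\theta-\phi)|\bigr] \leq \mathbb{E}_{p_{\psi(s)}}\bigl[\Vert t(X)\Vert_a\bigr] \, \Vert\theta-\phi\Vert_b \leq R_a \, \Vert\theta-\phi\Vert_b,
\]
where the last step uses the sufficient-statistics norm bound $\Vert t(x)\Vert_a \leq R_a$ (which is the natural $a$-norm generalization of the bound $R_2$ introduced earlier in the paper). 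Since this bound is uniform in $s$, integrating over $[0,1]$ and taking absolute values on both sides yields $|A(\theta) - A(\phi)| \leq R_a \Vert\theta - \phi\Vert_b$, as desired.

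No step is really an obstacle here: the only subtlety is confirming the conjugate-exponent application of H\"older (which is routine) and noting the gradient formula for $A$. If one preferred to avoid calculus, an alternative would be a direct log-sum-exp argument showing $A(\theta) \leq A(\phi) + R_a \Vert\theta-\phi\Vert_b$ by absorbing the difference $(\theta-\phi)\cdot t(x)$ into the exponent and bounding it by $\Vert\theta-\phi\Vert_b \cdot \Vert t(x)\Vert_a$, but the integral form above is cleaner and reuses the existing exponential-family machinery already invoked throughout the paper.
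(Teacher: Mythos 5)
Your proof is correct and follows essentially the same route as the paper's: both exploit the identity $\nabla A = \mathbb{E}_{p}[t(X)]$ along the segment from $\phi$ to $\theta$ and then apply H\"older's inequality with the bound $\Vert t(x)\Vert_a \leq R_a$. The only cosmetic difference is that you use the integral form of the fundamental theorem of calculus where the paper invokes the Lagrange (mean value) form to get a single intermediate point $\gamma$; the two are interchangeable here.
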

\begin{proof}
By the Lagrange remainder theorem, there must exist some $\gamma$
on the line segment between $\theta$ and $\phi$ such that $A(\phi)=A(\theta)+(\phi-\theta)^{T}\nabla_{\gamma}A(\gamma).$
Thus, applying Hölder's inequality, we have that
\[
\vert A(\phi)-A(\theta)\vert=\vert(\phi-\theta)^{T}\nabla_{\gamma}A(\gamma)\vert\leq\Vert\phi-\theta\Vert_{b}\cdot\Vert\nabla_{\gamma}A(\gamma)\Vert_{a}.
\]
The result follows from the fact that $\Vert\nabla_{\gamma}A(\gamma)\Vert_{a}=\Vert\mathbb{E}_{p_{\gamma}}t(X)\Vert_{a}\leq R_{a}.$
\end{proof}
Next, we observe that the total variation distance between $p_{\theta}$
and $p_{\phi}$ is bounded by the distance between $\theta$ and $\phi$.
\begin{thm}
\label{thm:tv_in_terms_of_parameter_diff}If $1/a+1/b=1$, then the
difference of distributions is bounded by
\[
\Vert p_{\theta}-p_{\phi}\Vert_{TV}\leq2R_{a}\Vert\theta-\phi\Vert_{b}.
\]
\end{thm}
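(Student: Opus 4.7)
The plan is to reduce the TV bound to a uniform pointwise bound on the log-density difference, which is easy to control in an exponential family. For any $x$, taking logs gives
\[
\log p_\theta(x) - \log p_\phi(x) = (\theta - \phi) \cdot t(x) - \bigl(A(\theta) - A(\phi)\bigr).
\]
H\"older's inequality bounds $|(\theta - \phi) \cdot t(x)|$ by $R_a \Vert \theta - \phi \Vert_b$ using the assumption $\Vert t(x) \Vert_a \leq R_a$, and Lemma \ref{lem:partition-function-diff} bounds $|A(\theta) - A(\phi)|$ by the same quantity. The triangle inequality then yields the uniform-in-$x$ bound $|\log p_\theta(x) - \log p_\phi(x)| \leq 2 R_a \Vert \theta - \phi \Vert_b$.

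Next, to convert this into a density bound, I would apply the elementary mean-value inequality $|e^u - e^w| \leq \max(e^u, e^w) \cdot |u - w|$ with $u = \log p_\theta(x)$ and $w = \log p_\phi(x)$, which gives
\[
|p_\theta(x) - p_\phi(x)| \leq 2 R_a \Vert \theta - \phi \Vert_b \cdot \max\bigl(p_\theta(x), p_\phi(x)\bigr).
\]
Integrating over $x$ and using $\int \max(p_\theta, p_\phi)\, dx \leq \int (p_\theta + p_\phi)\, dx = 2$ gives $\int |p_\theta - p_\phi|\, dx \leq 4 R_a \Vert \theta - \phi \Vert_b$; halving per the TV-normalization convention recalled in Section \ref{sub:Mixing-times-and} closes the argument.

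No step is a real obstacle; the argument is essentially bookkeeping on top of Lemma \ref{lem:partition-function-diff}. The only thing to watch is the TV normalization factor of $\frac{1}{2}$, which absorbs one factor of two and leaves the stated constant $2 R_a$ (in fact with a factor of two to spare, since $\int \max(p_\theta,p_\phi)$ is crudely bounded). A slightly tighter but more technical alternative would be a path-integral approach: write $p_\theta(x) - p_\phi(x) = \int_0^1 \frac{d}{ds} p_{\gamma(s)}(x)\, ds$ along $\gamma(s) = (1-s)\phi + s\theta$, differentiate inside the exponential family to produce a centered statistic $t(x) - \mathbb{E}_{p_{\gamma(s)}}[t(X)]$, then apply H\"older and a triangle-inequality bound of $2R_a$ on the centered term before integrating in $x$. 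This reaches the same constant without invoking Lemma \ref{lem:partition-function-diff}.
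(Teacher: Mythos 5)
Your proof is correct, and it reaches the stated bound by a route whose second half genuinely differs from the paper's. The first half is the same: both arguments control the log-density difference $\vert\log p_{\theta}(x)-\log p_{\phi}(x)\vert=\vert(\theta-\phi)\cdot t(x)-A(\theta)+A(\phi)\vert\leq2R_{a}\Vert\theta-\phi\Vert_{b}$ uniformly in $x$, via H\"older on the linear term and Lemma \ref{lem:partition-function-diff} on the log-partition term. The divergence is in converting this to a total-variation bound. The paper factors out $p_{\theta}(x)$, writes $\Vert p_{\theta}-p_{\phi}\Vert_{TV}=\frac{1}{2}\int_{x}p_{\theta}(x)\vert1-p_{\phi}(x)/p_{\theta}(x)\vert$, pulls the uniform bound outside the integral to obtain $\frac{1}{2}\vert1-\exp(2R_{a}\Vert\theta-\phi\Vert_{b})\vert$, and then needs a case split: when $2R_{a}\Vert\theta-\phi\Vert_{b}>1$ the claim is trivial because $\Vert\cdot\Vert_{TV}\leq1$, and otherwise the elementary bound $\frac{1}{2}\vert1-e^{c}\vert\leq c\frac{e-1}{2}$ for $c\in[0,1]$ finishes with constant $e-1<2$. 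Your symmetric mean-value inequality $\vert e^{u}-e^{w}\vert\leq\max(e^{u},e^{w})\vert u-w\vert$, applied pointwise and then integrated using $\int\max(p_{\theta},p_{\phi})\leq2$ together with the factor $\frac{1}{2}$ in the TV normalization recalled in Section \ref{sub:Mixing-times-and}, reaches the constant $2R_{a}$ in one shot with no case analysis and no appeal to $\Vert\cdot\Vert_{TV}\leq1$; this is arguably cleaner and slightly more uniform than the paper's argument. The only hypotheses to note are that $p_{\phi}(x)>0$ on the support (automatic for an exponential family, so the logarithm and the mean-value step are legitimate) and the boundedness $\Vert t(x)\Vert_{a}\leq R_{a}$, which is exactly the assumption the paper already uses in Lemma \ref{lem:partition-function-diff}. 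Your path-integral alternative at the end is also sound and would give the same constant, but it is not needed.
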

\begin{proof}
If we assume that $p_{\theta}$ is a density, we can decompose the
total-variation distance as
\begin{alignat*}{1}
 & ||p_{\theta}-p_{\phi}||_{TV}\\
 & =\frac{1}{2}\int_{x}p_{\theta}(x)|1-\frac{p{}_{\phi}(x)}{p_{\theta}(x)}|\\
 & =\frac{1}{2}\int_{x}p_{\theta}(x)\left|1-\exp\left((\phi-\theta)\cdot t(x)-A(\phi)+A(\theta)\right)\right|\\
 & \leq\frac{1}{2}\int_{x}p_{\theta}(x)\left|1-\exp\left|(\phi-\theta)\cdot t(x)-A(\phi)+A(\theta)\right|\right|.
\end{alignat*}

If $p_{\theta}$ is a distribution, the analogous expression is true,
replacing the integral over $x$ with a sum.

We can upper-bound the quantity inside $\exp$ by applying Hölder's
inequality and the previous Lemma as
\begin{align*}
 & \vert(\phi-\theta)\cdot t(x)-A(\phi)+A(\theta))\vert\\
 & \leq\vert(\phi-\theta)\cdot t(x)\vert+\vert A(\phi)-A(\theta))\vert\\
 & \leq2R_{a}\Vert\theta-\phi\Vert_{b}.
\end{align*}
 From which we have that
\[
\Vert p_{\theta}-p_{\phi}\Vert_{TV}\leq\frac{1}{2}\left\vert 1-\exp\left(2R_{a}\Vert\theta-\phi\Vert_{b}\right)\right\vert .
\]

If $2R_{a}\Vert\theta-\phi\Vert_{b}>1$, the theorem is obviously
true, since $\Vert\cdot\Vert_{TV}\leq1$. Suppose instead that that
$2R_{a}\Vert\theta-\phi\Vert_{b}\leq1$. If $0\leq c\leq1$, then
$\frac{1}{2}\vert1-\exp(c)\vert\leq c\frac{e-1}{2}$. Applying this
with $c=2R_{a}\Vert\theta-\phi\Vert_{b}$ gives that $||p_{\theta}-p_{\phi}||_{TV}\leq(e-1)R_{2}||\theta-\phi||_{b}$.
The result follows from the fact that $2>(e-1).$
\end{proof}

\section{Lipschitz Continuity}

This section shows that the ridge-regularized empirical log-likelihood
does indeed have a Lipschitz continuous gradient.
\begin{thm}
\label{thm:Lipschitz-constant-1}The regularized log-likelihood function
is $L$-Lipschitz with $L=4R_{2}^{2}+\lambda$, i.e. 
\[
\Vert f'(\theta)-f'(\phi)\Vert_{2}\leq(4R_{2}^{2}+\lambda)\Vert\theta-\phi\Vert_{2}.
\]
\end{thm}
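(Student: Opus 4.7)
The plan is to follow exactly the three-step structure hinted at in the earlier proof sketch, now using the machinery already proved in the appendix as black boxes. First I would write out $f'(\theta) - f'(\phi) = \nabla A(\theta) - \nabla A(\phi) + \lambda(\theta - \phi)$ using the definition of $f$ in Equation \ref{eq:likelihood}, then apply the triangle inequality to split the right-hand side into a ``partition function gradient'' piece and a ``regularization'' piece. The regularization piece is immediate: $\Vert \lambda(\theta-\phi)\Vert_2 = \lambda\Vert\theta-\phi\Vert_2$. So the work reduces to bounding $\Vert \nabla A(\theta) - \nabla A(\phi)\Vert_2$ by $4R_2^2 \Vert\theta-\phi\Vert_2$.

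Next I would use the standard exponential-family identity $\nabla A(\theta) = \mathbb{E}_{p_\theta}[t(X)]$, which rewrites the quantity of interest as $\Vert \mathbb{E}_{p_\theta}[t(X)] - \mathbb{E}_{p_\phi}[t(X)]\Vert_2$. Lemma \ref{lem:expectation-error-vs-tv-error} applied with $q = p_\phi$ and $p = p_\theta$ then bounds this by $2R_2 \Vert p_\theta - p_\phi\Vert_{TV}$, converting a gradient difference into a total-variation distance between the associated distributions.

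Finally, to convert the total-variation distance back into a parameter-space distance, I would invoke Theorem \ref{thm:tv_in_terms_of_parameter_diff} with the conjugate pair $a = b = 2$, yielding $\Vert p_\theta - p_\phi\Vert_{TV} \leq 2R_2 \Vert\theta-\phi\Vert_2$. Chaining this with the previous bound gives $\Vert \nabla A(\theta) - \nabla A(\phi)\Vert_2 \leq 4R_2^2 \Vert\theta-\phi\Vert_2$, and adding back the regularization term completes the proof with Lipschitz constant $L = 4R_2^2 + \lambda$.

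Because the two appendix results do all the heavy lifting, I do not expect a genuine obstacle; the proof is essentially a bookkeeping chain of inequalities. The only mildly delicate point is choosing the right H\"older exponents when invoking Theorem \ref{thm:tv_in_terms_of_parameter_diff}: since the target bound is stated in the Euclidean norm on both sides, I must use $a = b = 2$, which matches the assumption $\Vert t(x)\Vert_2 \leq R_2$ and gives the self-dual pairing that yields the clean $4R_2^2$ constant. Any other choice of $a,b$ would force a mismatch between the sufficient-statistic norm bound and the norm appearing in the theorem statement.
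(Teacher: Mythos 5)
Your proposal is correct and follows essentially the same route as the paper's own proof: decompose via the triangle inequality, bound the gradient-of-$A$ difference by $2R_2\Vert p_\theta - p_\phi\Vert_{TV}$ using Lemma \ref{lem:expectation-error-vs-tv-error}, and then apply Theorem \ref{thm:tv_in_terms_of_parameter_diff} with $a=b=2$ to convert back to parameter distance. Nothing to add.
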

\begin{proof}
We start by the definition of the gradient, with
\begin{align*}
\Vert f'(\theta)-f'(\phi)\Vert_{2} & =\left\Vert \left(\frac{dA}{d\theta}-\bar{t}+\lambda\theta\right)-\left(\frac{dA}{d\phi}-\bar{t}+\lambda\phi\right)\right\Vert _{2}\\
 & =\Vert\frac{dA}{d\theta}-\frac{dA}{d\phi}+\lambda(\theta-\phi)\Vert_{2}.\\
 & \leq\Vert\frac{dA}{d\theta}-\frac{dA}{d\phi}\Vert_{2}+\lambda\Vert\theta-\phi\Vert_{2}.
\end{align*}
Now, looking at the first two terms, we can apply Lemma \ref{lem:expectation-error-vs-tv-error}
to get that
\begin{align*}
\left\Vert \frac{dA}{d\theta}-\frac{dA}{d\phi}\right\Vert _{2} & =\left\Vert \mathbb{E}_{p_{\theta}}[t(X)]-\mathbb{E}_{p_{\phi}}[t(X)]\right\Vert _{2}\\
 & \leq2R_{2}\Vert p_{\theta}-p_{\phi}\Vert_{TV}.
\end{align*}

Observing by Theorem \ref{thm:tv_in_terms_of_parameter_diff} that
$\Vert p_{\theta}-p_{\phi}\Vert_{TV}\leq2R_{2}\Vert\theta-\phi\Vert_{2}$
gives that
\[
\Vert f'(\theta)-f'(\phi)\Vert_{2}\leq4R_{2}^{2}\Vert\theta-\phi\Vert_{2}+\lambda\Vert\theta-\phi\Vert_{2}
\]

\end{proof}

\section{Convex Convergence}

This section gives the main result for convergence this is true both
in the regularized case where $\lambda>0$ and the unregularized case
where $\lambda=0$. The main difficulty in this proof is showing that
the sum of the norms of the errors of estimated gradients is small.
\begin{thm}
\label{thm:single-step-estimation-expectation}Assuming that $X_{1},...,X_{M}$
are independent and identically distributed with mean $\mu$ and that
$\Vert X_{m}\Vert_{2}\leq R_{2}$, then

\[
\mathbb{E}\left[\Vert\frac{1}{M}\sum_{m=1}^{M}X_{m}-\mu\Vert_{2}\right]\leq\frac{2R_{2}}{\sqrt{M}}
\]
\end{thm}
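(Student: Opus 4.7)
The plan is to use the standard ``Jensen then variance'' trick. Since $\|\cdot\|_2$ is concave when composed with $\mathbb{E}$, I would first apply Jensen's inequality to get
\[
\mathbb{E}\Bigl\|\tfrac{1}{M}\sum_{m=1}^{M}X_{m}-\mu\Bigr\|_{2}\leq\sqrt{\mathbb{E}\Bigl\|\tfrac{1}{M}\sum_{m=1}^{M}X_{m}-\mu\Bigr\|_{2}^{2}},
\]
reducing the problem to controlling a second moment.

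Next I would expand the squared norm coordinate by coordinate (equivalently, write it as an inner product). Using that the $X_m$ are i.i.d.\ with mean $\mu$, all cross terms of the form $\mathbb{E}[(X_m-\mu)^\top (X_{m'}-\mu)]$ with $m\neq m'$ vanish, leaving
\[
\mathbb{E}\Bigl\|\tfrac{1}{M}\sum_{m=1}^{M}X_{m}-\mu\Bigr\|_{2}^{2}=\frac{1}{M^{2}}\sum_{m=1}^{M}\mathbb{E}\|X_{m}-\mu\|_{2}^{2}=\frac{1}{M}\mathbb{E}\|X_{1}-\mu\|_{2}^{2}.
\]

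For the per-sample second moment I would use the triangle inequality $\|X_1-\mu\|_2\leq\|X_1\|_2+\|\mu\|_2$, together with $\|X_1\|_2\leq R_2$ and the consequence $\|\mu\|_2=\|\mathbb{E}X_1\|_2\leq\mathbb{E}\|X_1\|_2\leq R_2$, to conclude $\|X_1-\mu\|_2\leq 2R_2$, hence $\mathbb{E}\|X_1-\mu\|_2^2\leq 4R_2^2$. Combining with the previous display gives $\mathbb{E}\|\bar X-\mu\|_2^2\leq 4R_2^2/M$, and taking square roots yields the stated $2R_2/\sqrt{M}$ bound.

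There is no real obstacle here; the only minor choice is where to absorb the factor of $2$. One could in fact get the tighter constant $R_2$ by noting $\mathbb{E}\|X_1-\mu\|_2^2=\mathbb{E}\|X_1\|_2^2-\|\mu\|_2^2\leq R_2^2$, but the looser triangle-inequality route is more transparent and already matches the claimed bound, so I would present that version.
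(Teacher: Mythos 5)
Your proof is correct and follows essentially the same route as the paper: Jensen (equivalently, non-negativity of the variance) to pass to the second moment, independence to kill the cross terms and reduce to $\frac{1}{M}\mathbb{E}\Vert X_1-\mu\Vert_2^2$, and the crude bound $\Vert X_1-\mu\Vert_2\leq 2R_2$. Your closing observation that the constant could be sharpened to $R_2$ is a valid aside, but the paper likewise settles for the $2R_2$ version.
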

\begin{proof}
Using that $\mathbb{E}\left[Z^{2}\right]=\mathbb{V}\left[Z\right]+\mathbb{E}\left[Z\right]^{2}$and
the fact that the variance is non-negative (Or simply Jensen's inequality),
we have  
\begin{eqnarray*}
\mathbb{E}\left[\Vert\frac{1}{M}\sum_{m=1}^{M}X_{m}-\mu\Vert_{2}\right]^{2} & \leq & \mathbb{E}\left[\Vert\frac{1}{M}\sum_{m=1}^{M}X_{m}-\mu\Vert_{2}^{2}\right]\\
 & = & \frac{1}{M}\mathbb{E}\left[\Vert X_{m}-\mu\Vert_{2}^{2}\right]\\
 & \le & \frac{1}{M}(2R_{2})^{2}\\
 & = & \frac{4R_{2}^{2}}{M}.
\end{eqnarray*}
Taking the square-root gives the result.\end{proof}
\begin{thm}
\label{thm:single-step-estimation-variance}Assuming that $X_{1},...,X_{M}$
are iid with mean $\mu$ and that $\Vert X_{m}\Vert\leq R_{2}$, then
\[
\mathbb{V}\left[\Vert\frac{1}{M}\sum_{m=1}^{M}X_{m}-\mu\Vert\right]\leq\frac{2R_{2}^{2}}{M}.
\]
\end{thm}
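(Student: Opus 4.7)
The plan is to apply the Efron-Stein inequality stated just above to the function $f(X_1,\dots,X_M) = \Vert \bar X - \mu\Vert$, where $\bar X = \frac{1}{M}\sum_{m=1}^M X_m$. Efron-Stein is the natural tool here because the quantity being controlled is the variance of a symmetric, square-integrable function of independent variables.

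First I would compute the one-coordinate perturbation. Let $X^{(i)}$ denote $X$ with $X_i$ replaced by an independent copy $X_i'$, and let $\bar X^{(i)}$ denote the corresponding sample mean. By the reverse triangle inequality,
\[
\bigl|\Vert \bar X - \mu\Vert - \Vert \bar X^{(i)} - \mu\Vert\bigr| \leq \Vert \bar X - \bar X^{(i)}\Vert = \frac{1}{M}\Vert X_i - X_i'\Vert.
\]
Then the triangle inequality combined with the assumption $\Vert X_m\Vert \le R_2$ (and hence $\Vert\mu\Vert \le R_2$, so $\Vert X_i - X_i'\Vert \le 2R_2$) gives
\[
\bigl(f(X) - f(X^{(i)})\bigr)^2 \leq \frac{4R_2^2}{M^2}.
\]

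Plugging this deterministic bound into Efron-Stein,
\[
\mathbb{V}\!\left[\Vert \bar X - \mu\Vert\right] \leq \frac{1}{2}\sum_{i=1}^M \mathbb{E}\!\left[(f(X)-f(X^{(i)}))^2\right] \leq \frac{1}{2}\cdot M \cdot \frac{4R_2^2}{M^2} = \frac{2R_2^2}{M},
\]
which is exactly the claim. There is no real obstacle: the only slightly delicate point is the reverse triangle inequality step, which is what makes the perturbation bound hold pointwise (as opposed to only in expectation), so that the expectations in Efron-Stein trivially inherit the $4R_2^2/M^2$ bound. Everything else is bookkeeping.
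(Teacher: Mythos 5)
Your proof is correct and follows essentially the same route as the paper's: apply Efron--Stein to the norm of the centered sample mean, control the one-coordinate perturbation via the reverse triangle inequality, and bound $\Vert X_i - X_i'\Vert \leq 2R_2$ (the paper merely factors out the $1/M^2$ first and applies Efron--Stein to the unnormalized sum, which is an immaterial difference). The only cosmetic remark is that you do not need $\Vert\mu\Vert\leq R_2$ for the bound $\Vert X_i - X_i'\Vert\leq 2R_2$; it follows directly from $\Vert X_i\Vert,\Vert X_i'\Vert\leq R_2$.
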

\begin{proof}
\begin{eqnarray*}
\mathbb{V}\left[\Vert\frac{1}{M}\sum_{m=1}^{M}X_{m}-\mu\Vert\right] & = & \mathbb{V}\left[\Vert\frac{1}{M}\sum_{m=1}^{M}(X_{m}-\mu)\Vert\right]\\
 & = & \frac{1}{M^{2}}\mathbb{V}\left[\Vert\sum_{m=1}^{M}(X_{m}-\mu)\Vert\right]
\end{eqnarray*}
Now, the Efron-Stein inequality tells us that
\[
\mathbb{V}[f(X_{1},...,X_{m})]\leq\frac{1}{2}\sum_{m'=1}^{M}\mathbb{E}\left[\left((f(X)-f(X^{(m')})\right)^{2}\right]
\]
where $X^{(m')}$ is $X$ with $X_{m'}$ independently re-drawn. Now,
we identify $f(X_{1},...,X_{m})=\Vert\sum_{m=1}^{M}(X_{m}-\mu)\Vert$
to obtain that
\[
\mathbb{V}\left[\Vert\sum_{m=1}^{M}(X_{m}-\mu)\Vert\right]\leq\frac{1}{2}\sum_{m'=1}^{M}\mathbb{E}\left[\left(\Vert\sum_{m=1}^{M}(X_{m}-\mu)\Vert-\Vert\sum_{m=1}^{M}(X_{m}^{(m')}-\mu)\Vert\right)^{2}\right].
\]
 Further, since we know that
\[
\sum_{m=1}^{M}(X_{m}^{(m')}-\mu)=\sum_{m=1}^{M}(X_{m}-\mu)+X_{m'}^{(m')}-X_{m'},
\]
we can apply that that $(\Vert a+b\Vert-\Vert a\Vert)^{2}\leq\Vert b\Vert^{2}$
to obtain that
\[
\left(\Vert\sum_{m=1}^{M}(X_{m}-\mu)\Vert-\Vert\sum_{m=1}^{M}(X_{m}^{(m')}-\mu)\Vert\right)^{2}=\Vert X_{m'}^{(m')}-X_{m'}\Vert^{2},
\]
 and so
\[
\mathbb{V}\left[\Vert\sum_{m=1}^{M}(X_{m}-\mu)\Vert\right]\leq\frac{1}{2}\sum_{m'=1}^{M}\mathbb{E}\left[\Vert X_{m'}^{(m')}-X_{m'}\Vert^{2}\right].
\]
And, since we assume that $\Vert X_{m}\Vert\leq R_{2},$ $\Vert X_{m'}^{(m')}-X_{m'}\Vert\leq2R_{2}$,
which leads to
\[
\mathbb{V}\left[\Vert\sum_{m=1}^{M}(X_{m}-\mu)\Vert\right]\leq2MR_{2}^{2},
\]
from which it follows that
\[
\mathbb{V}\left[\Vert\frac{1}{M}\sum_{m=1}^{M}X_{m}-\mu\Vert\right]\leq\frac{2R_{2}^{2}}{M}.
\]
\end{proof}
\begin{thm}
\label{thm:sum_of_gradient_error_convex}With probability at least
$1-\delta$,
\[
\sum_{k=1}^{K}\Vert\frac{1}{M}\sum_{i=1}^{M}t(x_{i}^{k})-\mathbb{E}_{q_{k}}[t(X)]\Vert_{2}\leq K\epsilon(\delta)+\frac{2R_{2}K}{\sqrt{M}},
\]
where $\epsilon(\delta)$ is the solution to
\begin{equation}
\delta=\exp\left(-\frac{K\epsilon^{2}}{4R_{2}^{2}/M+4R_{2}\epsilon/3}\right).\label{eq:delta_convex}
\end{equation}
\end{thm}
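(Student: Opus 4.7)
The plan is to reduce this to a direct application of Bernstein's inequality, using the earlier Theorems \ref{thm:single-step-estimation-expectation} and \ref{thm:single-step-estimation-variance} to control the mean and variance of the per-iteration norm deviations, and then adding back the expected deviations to obtain the stated bound.

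First, for each $k$ let $\mu_k = \mathbb{E}_{q_k}[t(X)]$ and set $W_k = \Vert \frac{1}{M}\sum_{i=1}^{M} t(x_i^k) - \mu_k\Vert_2$. The key observation is that, conditional on $\theta_{k-1}$ (and hence on $q_k$), the samples $x_1^k,\dots,x_M^k$ are i.i.d. with mean $\mu_k$ and $\Vert t(x_i^k)\Vert_2 \le R_2$. So the hypotheses of Theorems \ref{thm:single-step-estimation-expectation} and \ref{thm:single-step-estimation-variance} apply \emph{conditionally}, giving $\mathbb{E}[W_k \mid \theta_{k-1}] \le 2R_2/\sqrt{M}$ and $\mathbb{V}[W_k \mid \theta_{k-1}] \le 2R_2^2/M$. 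Since these bounds hold uniformly in $\theta_{k-1}$, they also hold marginally after taking expectations. Moreover the triangle inequality gives $0 \le W_k \le 2R_2$ almost surely, so the centered variables $Z_k = W_k - \mathbb{E}[W_k]$ satisfy $|Z_k|\le 2R_2$.

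Next, I would apply Bernstein's inequality to $Z_1,\dots,Z_K$ with $c = 2R_2$ and $\sigma^2 = 2R_2^2/M$. Plugging in yields
\[
\mathbb{P}\!\left[\frac{1}{K}\sum_{k=1}^{K} Z_k > \epsilon\right] \le \exp\!\left(-\frac{K\epsilon^2}{4R_2^2/M + 4R_2\epsilon/3}\right),
\]
exactly the expression defining $\epsilon(\delta)$ in Equation \ref{eq:delta_convex}. Setting the right-hand side equal to $\delta$ and solving for $\epsilon$ shows that with probability at least $1-\delta$, $\sum_{k=1}^{K} Z_k \le K\epsilon(\delta)$. Substituting $W_k = Z_k + \mathbb{E}[W_k]$ and using the mean bound $\mathbb{E}[W_k]\le 2R_2/\sqrt{M}$ term-by-term gives
\[
\sum_{k=1}^{K} W_k \le K\epsilon(\delta) + \frac{2R_2 K}{\sqrt{M}},
\]
which is the claim.

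The main obstacle is the independence hypothesis for Bernstein: the $W_k$ are not independent because $q_k$ depends on $\theta_{k-1}$, which in turn depends on the samples drawn in earlier iterations. I would address this by noting that $Z_k$ forms a martingale difference sequence with respect to the natural filtration $\mathcal{F}_{k-1} = \sigma(\theta_0,\dots,\theta_{k-1})$, with a uniform almost-sure bound on $|Z_k|$ and a uniform conditional variance bound $\mathbb{V}[Z_k \mid \mathcal{F}_{k-1}] \le 2R_2^2/M$. Bernstein's inequality has a standard martingale-version (Freedman-type) extension that applies under exactly these conditions and produces the same tail bound, so the stated conclusion goes through without modification.
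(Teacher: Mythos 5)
Your proposal follows essentially the same route as the paper: define the per-iteration norm deviations, bound their means by Theorem \ref{thm:single-step-estimation-expectation} and their variances by Theorem \ref{thm:single-step-estimation-variance}, apply Bernstein with $c=2R_2$ and $\sigma^2=2R_2^2/M$ to the centered variables, and add the mean bound back in; the algebra matches Equation \ref{eq:delta_convex} exactly. The one place you genuinely diverge is in the last paragraph, and there you are more careful than the paper: the paper applies the i.i.d.\ form of Bernstein's inequality directly to $\Vert d_1\Vert,\dots,\Vert d_K\Vert$, even though these are not independent across iterations ($q_k$ depends on $\theta_{k-1}$, which depends on all earlier samples). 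Your observation that one should instead center conditionally, i.e.\ take $Z_k = W_k - \mathbb{E}[W_k\mid\mathcal{F}_{k-1}]$ so that $(Z_k)$ is a martingale difference sequence with $|Z_k|\le 2R_2$ and conditional variance at most $2R_2^2/M$, and then invoke a Freedman-type inequality, yields the identical tail bound and is the correct way to justify the step; since the conditional mean bound $\mathbb{E}[W_k\mid\mathcal{F}_{k-1}]\le 2R_2/\sqrt{M}$ also holds uniformly, the final inequality is unchanged. So your argument is correct, and it patches a gap that the paper's own proof leaves open.
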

\begin{proof}
Let $d_{k}=\frac{1}{M}\sum_{i=1}^{M}t(x_{i}^{k})-\mathbb{E}_{q_{k}}[t(X)]$.
Applying Bernstein's inequality immediately gives us that 
\[
\mathbb{P}\left[\frac{1}{K}\sum_{k=1}^{K}\left(\Vert d_{k}\Vert_{2}-\mathbb{E}\Vert d_{k}\Vert_{2}\right)>\epsilon\right]\leq\exp\left(-\frac{K\epsilon^{2}}{2\sigma^{2}+2c\epsilon/3}\right).
\]
Here, we can bound $\sigma^{2}$ by
\[
\sigma^{2}=\frac{1}{K}\sum_{k=1}^{K}\sigma_{k}^{2}=\frac{1}{K}\sum_{k=1}^{K}\mathbb{V}\left[\Vert d_{k}\Vert_{2}-\mathbb{E}\Vert d_{k}\Vert_{2}\right]=\frac{1}{K}\sum_{k=1}^{K}\mathbb{V}\left[\Vert d_{k}\Vert_{2}\right]\leq\frac{2R_{2}^{2}}{M},
\]
where the final inequality follows from Theorem \ref{thm:single-step-estimation-variance}.
We also know that $\Vert d_{k}\Vert\leq2R_{2}=c$, from which we get
that
\begin{eqnarray*}
\mathbb{P}\left[\frac{1}{K}\sum_{k=1}^{K}\Vert d_{k}\Vert_{2}-\mathbb{E}[\Vert d_{k}\Vert_{2}]>\epsilon\right] & \leq & \exp\left(-\frac{K\epsilon^{2}}{4R_{2}^{2}/M+4R_{2}\epsilon/3}\right).
\end{eqnarray*}
So we have that, with probability $1-\delta$
\begin{eqnarray*}
\frac{1}{K}\sum_{k=1}^{K}\Vert d_{k}\Vert_{2}-\mathbb{E}[\Vert d_{k}\Vert_{2}] & \leq & \epsilon(\delta)\\
\frac{1}{K}\sum_{k=1}^{K}\Vert d_{k}\Vert_{2} & \leq & \epsilon(\delta)+\mathbb{E}[\Vert d_{k}\Vert_{2}]\\
 & \leq & \epsilon(\delta)+\frac{2R_{2}}{\sqrt{M}},
\end{eqnarray*}
where the final inequality follows from Theorem \ref{thm:single-step-estimation-expectation}.\end{proof}
\begin{cor}
If $M\geq3K/\log(\frac{1}{\delta})$, then with probability at least
$1-\delta$,
\[
\sum_{k=1}^{K}\Vert\frac{1}{M}\sum_{i=1}^{M}t(x_{i}^{k})-\mathbb{E}_{q_{k}}[t(X)]\Vert_{2}\leq2R_{2}\left(\frac{K}{\sqrt{M}}+\log\frac{1}{\delta}\right).
\]
\end{cor}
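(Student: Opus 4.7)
The corollary is an algebraic simplification of Theorem \ref{thm:sum_of_gradient_error_convex}, so the plan is to show that under the hypothesis $M \geq 3K/\log(1/\delta)$, the implicit quantity $\epsilon(\delta)$ defined by Equation \ref{eq:delta_convex} satisfies the explicit bound $\epsilon(\delta)\leq \frac{2R_2}{K}\log\frac{1}{\delta}$. Plugging this into the conclusion of Theorem \ref{thm:sum_of_gradient_error_convex} then immediately yields the claim.

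The approach is to treat the right-hand side of Equation \ref{eq:delta_convex} as a function of $\epsilon$. Writing
\[
g(\epsilon) := \frac{K\epsilon^{2}}{4R_{2}^{2}/M + 4R_{2}\epsilon/3},
\]
one checks by an easy derivative calculation (the numerator grows like $\epsilon^2$ while the denominator grows only linearly in $\epsilon$) that $g$ is strictly increasing on $\epsilon>0$. Since $\epsilon(\delta)$ is defined by $g(\epsilon(\delta))=\log(1/\delta)$, it suffices to exhibit a candidate $\epsilon_\star$ with $g(\epsilon_\star)\geq \log(1/\delta)$, from which monotonicity gives $\epsilon(\delta)\leq \epsilon_\star$.

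I will take $\epsilon_\star = \frac{2R_{2}}{K}\log\frac{1}{\delta}$. Substituting and simplifying, one finds
\[
g(\epsilon_\star) = \frac{\log^2(1/\delta)}{\tfrac{K}{M}+\tfrac{2}{3}\log(1/\delta)}.
\]
The inequality $g(\epsilon_\star)\geq \log(1/\delta)$ then reduces to $\log(1/\delta)\geq \frac{K}{M}+\frac{2}{3}\log(1/\delta)$, i.e. $M\geq \frac{3K}{\log(1/\delta)}$, which is precisely the hypothesis. Hence $\epsilon(\delta)\leq \epsilon_\star$.

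Combining this with Theorem \ref{thm:sum_of_gradient_error_convex} gives $\sum_{k=1}^{K}\|d_k\|_2 \leq K\epsilon(\delta)+\frac{2R_2 K}{\sqrt M}\leq 2R_2\log\frac{1}{\delta}+\frac{2R_2 K}{\sqrt M}$, which is the stated bound. The only nontrivial step is the monotonicity plus substitution argument above; the rest is bookkeeping, so I do not expect serious obstacles. One mild pitfall to watch out for is whether the hypothesis requires $\delta<1$ (so that $\log(1/\delta)>0$ and the division is valid); this is implicit in speaking about a failure probability $\delta$ and should be stated for completeness.
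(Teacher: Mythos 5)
Your proof is correct, and it reaches the same key intermediate bound as the paper, namely $\epsilon(\delta)\leq\frac{2R_{2}}{K}\log\frac{1}{\delta}$ under the hypothesis $M\geq3K/\log(1/\delta)$, followed by substitution into Theorem \ref{thm:sum_of_gradient_error_convex}. The only difference is in how that intermediate bound is obtained. The paper solves the defining quadratic in $\epsilon$ explicitly, getting $\epsilon(\delta)=\frac{2R_{2}}{3K}\bigl(\log\frac{1}{\delta}+\sqrt{(\log\frac{1}{\delta})^{2}+\frac{9K\log(1/\delta)}{M}}\bigr)$, and then uses $9K/M\leq3\log\frac{1}{\delta}$ to bound the square root by $2\log\frac{1}{\delta}$. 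You instead avoid the quadratic formula entirely: you observe that $g(\epsilon)=K\epsilon^{2}/(4R_{2}^{2}/M+4R_{2}\epsilon/3)$ is strictly increasing on $\epsilon>0$, evaluate it at the candidate $\epsilon_{\star}=\frac{2R_{2}}{K}\log\frac{1}{\delta}$, and check that $g(\epsilon_{\star})\geq\log\frac{1}{\delta}$ reduces exactly to the hypothesis on $M$. Your computation of $g(\epsilon_{\star})$ checks out, and the monotonicity claim is immediate from $g'(\epsilon)=K\epsilon(2A+B\epsilon)/(A+B\epsilon)^{2}>0$. The two arguments are logically equivalent in strength; yours is slightly cleaner in that it sidesteps the explicit root extraction, while the paper's explicit formula makes it transparent where the constant $2$ (and hence the factor $3$ in the hypothesis) comes from. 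Your remark that $\delta<1$ is needed for $\log(1/\delta)>0$ is a fair point of hygiene that the paper also leaves implicit.
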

\begin{proof}
Solving Equation \ref{eq:delta_convex} for $\epsilon$ yields that
\[
\epsilon(\delta)=\frac{2R_{2}}{3K}\left(\log\frac{1}{\delta}+\sqrt{\left(\log\frac{1}{\delta}\right)^{2}+\frac{9K\log\frac{1}{\delta}}{M}}\right).
\]
Now, suppose that $\frac{3K}{M}\leq\log\frac{1}{\delta}$, as assumed
here. Then,
\begin{eqnarray*}
\epsilon(\delta) & \leq & \frac{2R_{2}}{3K}\left(\log\frac{1}{\delta}+\sqrt{\left(\log\frac{1}{\delta}\right)^{2}+3(\log\frac{1}{\delta})^{2}}\right)\\
 & \leq & \frac{2R_{2}}{3K}\left(\log\frac{1}{\delta}+2\log(\frac{1}{\delta})\right)\\
 & = & \frac{2R_{2}}{K}\log\frac{1}{\delta}.
\end{eqnarray*}

Substituting this bound into the result of Theorem \ref{thm:sum_of_gradient_error_convex}
gives the result.
\end{proof}
Now, we can prove the main result.
\begin{thm}
With probability at least $1-\delta$, at long as $M\geq3K/\log(\frac{1}{\delta})$,
\[
f\left(\frac{1}{K}\sum_{k=1}^{K}\theta_{k}\right)-f(\theta^{*})\leq\frac{8R_{2}^{2}}{KL}\left(\frac{L\Vert\theta_{0}-\theta^{*}\Vert_{2}}{4R_{2}}+\log\frac{1}{\delta}+\frac{K}{\sqrt{M}}+KC\alpha^{v}\right)^{2}.
\]
\end{thm}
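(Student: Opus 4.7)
The plan is to follow the proof sketch given when this theorem was stated earlier, assembling three already-proven ingredients. The central task is to bound $\sum_{k=1}^{K}\Vert e_{k}\Vert_{2}$ and then plug the bound into Theorem \ref{thm:opt-result-convex}.

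First I would decompose the gradient error at each step using the triangle inequality. Since $e_{k}=\frac{1}{M}\sum_{i=1}^{M}t(x_{i}^{k-1})-\mathbb{E}_{p_{\theta_{k-1}}}[t(X)]$ and the samples $x_{i}^{k-1}$ come from $q_{k-1}=\mathbb{M}_{\theta_{k-1}}^{v}r$, we have
\[
\Vert e_{k}\Vert_{2}\leq\Bigl\Vert\tfrac{1}{M}\textstyle\sum_{i=1}^{M}t(x_{i}^{k-1})-\mathbb{E}_{q_{k-1}}[t(X)]\Bigr\Vert_{2}+\Vert\mathbb{E}_{q_{k-1}}[t(X)]-\mathbb{E}_{p_{\theta_{k-1}}}[t(X)]\Vert_{2}.
\]
The second term is a deterministic bias due to finite mixing and can be handled uniformly: by Lemma \ref{lem:expectation-error-vs-tv-error} it is at most $2R_{2}\Vert q_{k-1}-p_{\theta_{k-1}}\Vert_{TV}$, and the mixing assumption \eqref{eq:mixing-assumption} bounds the TV distance by $C\alpha^{v}$, giving a summed contribution of at most $2KR_{2}C\alpha^{v}$.

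Next I would sum the first (stochastic) term over $k$ and apply the Corollary following Theorem \ref{thm:sum_of_gradient_error_convex}. Under the hypothesis $M\geq 3K/\log(1/\delta)$, that corollary yields, with probability at least $1-\delta$,
\[
\sum_{k=1}^{K}\Bigl\Vert\tfrac{1}{M}\textstyle\sum_{i=1}^{M}t(x_{i}^{k-1})-\mathbb{E}_{q_{k-1}}[t(X)]\Bigr\Vert_{2}\;\leq\;2R_{2}\Bigl(\tfrac{K}{\sqrt{M}}+\log\tfrac{1}{\delta}\Bigr).
\]
Combining the two pieces, on the same high-probability event,
\[
\sum_{k=1}^{K}\Vert e_{k}\Vert_{2}\;\leq\;2R_{2}\Bigl(\tfrac{K}{\sqrt{M}}+\log\tfrac{1}{\delta}+KC\alpha^{v}\Bigr),
\]
so $A_{K}=\sum_{k=1}^{K}\Vert e_{k}\Vert_{2}/L\leq(2R_{2}/L)(K/\sqrt{M}+\log(1/\delta)+KC\alpha^{v})$.

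Finally I would invoke Theorem \ref{thm:opt-result-convex}, which requires only convexity of $f$ and the $L$-Lipschitz gradient guaranteed by Theorem \ref{thm:Lipschitz-constant-1} (note $f$ is convex even for $\lambda=0$ since its Hessian equals the covariance of $t(X)$ plus $\lambda I$). This gives
\[
f\Bigl(\tfrac{1}{K}\textstyle\sum_{k=1}^{K}\theta_{k}\Bigr)-f(\theta^{*})\;\leq\;\tfrac{L}{2K}\bigl(\Vert\theta_{0}-\theta^{*}\Vert_{2}+2A_{K}\bigr)^{2}.
\]
Substituting the bound on $A_{K}$ and factoring $4R_{2}/L$ out of the parenthesis converts the prefactor $L/(2K)$ into $(L/2K)\cdot(4R_{2}/L)^{2}=8R_{2}^{2}/(KL)$ and produces exactly the stated right-hand side. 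There is no real obstacle here: the work was all done in the earlier lemmas, and the only remaining care is the algebraic factorization, plus noting that the single high-probability event from the Bernstein step suffices since the mixing-bias term is deterministic.
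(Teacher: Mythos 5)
Your proposal is correct and follows essentially the same route as the paper's own proof: the same triangle-inequality decomposition of $e_{k}$ into a sampling term and a mixing-bias term, the same use of Lemma \ref{lem:expectation-error-vs-tv-error} with the mixing assumption for the bias, the same invocation of the corollary to Theorem \ref{thm:sum_of_gradient_error_convex} under $M\geq 3K/\log(1/\delta)$, and the same final application of Theorem \ref{thm:opt-result-convex} followed by factoring out $4R_{2}/L$. Nothing is missing.
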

\begin{proof}
Applying Theorem \ref{thm:opt-result-convex} gives that 
\begin{eqnarray*}
f\left(\frac{1}{K}\sum_{k=1}^{K}\theta_{k}\right)-f(\theta^{*}) & \leq & \frac{L}{2K}\left(\Vert\theta_{0}-\theta^{*}\Vert_{2}+2A_{K}\right)^{2},
\end{eqnarray*}
for $A_{K}=\frac{1}{L}\sum_{k=1}^{K}\Vert e_{k}\Vert,$ where
\begin{eqnarray*}
e_{k} & = & \frac{1}{M}\sum_{i=1}^{M}t(x_{i}^{k-1})-\bar{t}+\lambda\theta_{k-1}-f'(\theta_{k-1})\\
 & = & \frac{1}{M}\sum_{i=1}^{M}t(x_{i}^{k-1})-\mathbb{E}_{p_{k-1}}[t(X)].
\end{eqnarray*}

Now, we know that
\[
\sum_{k=1}^{K}\Vert e_{k}\Vert\leq\sum_{k=1}^{K}\Vert\frac{1}{M}\sum_{i=1}^{M}t(x_{i}^{k-1})-\mathbb{E}_{q_{k-1}}[t(X)]\Vert_{2}+\sum_{k=1}^{K}\Vert\mathbb{E}_{q_{k-1}}[t(X)]-\mathbb{E}_{p_{k-1}}[t(X)]\Vert_{2}.
\]
We have by Lemma \ref{lem:expectation-error-vs-tv-error} and the
assumption of mixing speed that 
\[
\Vert\mathbb{E}_{q_{k-1}}[t(X)]-\mathbb{E}_{p_{k-1}}[t(X)]\Vert_{2}\leq2R_{2}\Vert q_{k-1}-p_{k-1}\Vert_{TV}\leq2R_{2}C\alpha^{v}.
\]
Meanwhile, the previous Corollary tells us that, with probability
$1-\delta$,
\[
\sum_{k=1}^{K}\Vert\frac{1}{M}\sum_{i=1}^{M}t(x_{i}^{k-1})-\mathbb{E}_{q_{k-1}}[t(X)]\Vert_{2}\leq2R_{2}\left(\frac{K}{\sqrt{M}}+\log\frac{1}{\delta}\right)
\]
Thus, we have that 
\begin{eqnarray*}
f\left(\frac{1}{K}\sum_{k=1}^{K}\theta_{k}\right)-f(\theta^{*}) & \leq & \frac{L}{2K}\left(\Vert\theta_{0}-\theta^{*}\Vert_{2}+\frac{2}{L}\left(2R_{2}\left(\frac{K}{\sqrt{M}}+\log\frac{1}{\delta}\right)+2R_{2}KC\alpha^{v}\right)\right)^{2}\\
 & = & \frac{L}{2K}\left(\Vert\theta_{0}-\theta^{*}\Vert_{2}+\frac{4R_{2}}{L}\left(\frac{K}{\sqrt{M}}+\log\frac{1}{\delta}+KC\alpha^{v}\right)\right)^{2}\\
 & = & \frac{8R_{2}^{2}}{KL}\left(\frac{L\Vert\theta_{0}-\theta^{*}\Vert_{2}}{4R_{2}}+\log\frac{1}{\delta}+\frac{K}{\sqrt{M}}+KC\alpha^{v}\right)^{2}.
\end{eqnarray*}

\end{proof}
Now, what we really want to do is guarantee that $f\left(\frac{1}{K}\sum_{k=1}^{K}\theta_{k}\right)-f(\theta^{*})\leq\epsilon$,
while ensuring the the total work $MKv$ is not too large. Our analysis
will use the following theorem.
\begin{thm}
Suppose that $a,b,c,\alpha>0.$ If $\beta_{1}+\beta_{2}+\beta_{3}=1$,
$\beta_{1},\beta_{2},\beta_{3}>0$, then setting
\[
K=\frac{a^{2}}{\beta_{1}^{2}\epsilon},\,\,\,M=(\frac{ab}{\beta_{1}\beta_{2}\epsilon})^{2},\,\,v=\frac{\log\frac{ac}{\beta_{1}\beta_{3}\epsilon}}{(-\log\alpha)}
\]
is sufficient to guarantee that $\frac{1}{K}\left(a+b\frac{K}{\sqrt{M}}+Kc\alpha^{v}\right)^{2}\leq\epsilon$
with a total work of
\[
KMv=\frac{1}{\beta_{1}^{4}\beta_{2}^{2}}\frac{a^{4}b^{2}}{\epsilon^{3}}\frac{\log\frac{ac}{\beta_{1}\beta_{3}\epsilon}}{(-\log\alpha)}.
\]
\end{thm}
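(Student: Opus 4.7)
The plan is a straightforward algebraic verification. Taking square roots (all quantities are positive) reduces the target inequality $\frac{1}{K}(a + bK/\sqrt{M} + Kc\alpha^v)^2 \le \epsilon$ to the equivalent form
\[
\frac{a}{\sqrt{K}} \;+\; b\sqrt{\frac{K}{M}} \;+\; \sqrt{K}\,c\,\alpha^v \;\le\; \sqrt{\epsilon}.
\]
The strategy is to show that the prescribed schedule was engineered so that each of the three summands on the left equals exactly $\beta_i\sqrt{\epsilon}$, so that their sum is $(\beta_1 + \beta_2 + \beta_3)\sqrt{\epsilon} = \sqrt{\epsilon}$, giving the bound with equality.

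First I would handle the three terms in order. Substituting $K = a^2/(\beta_1^2\epsilon)$ into $a/\sqrt{K}$ gives $\beta_1\sqrt{\epsilon}$ immediately. For the middle term, the ratio $K/M$ simplifies to $\beta_2^2\epsilon/b^2$ after substituting both $K$ and $M = (ab/(\beta_1\beta_2\epsilon))^2$, so $b\sqrt{K/M} = \beta_2\sqrt{\epsilon}$. For the third, the choice $v = \log(ac/(\beta_1\beta_3\epsilon))/(-\log\alpha)$ gives $\alpha^v = \beta_1\beta_3\epsilon/(ac)$, which combined with $\sqrt{K} = a/(\beta_1\sqrt{\epsilon})$ yields $\sqrt{K}\,c\,\alpha^v = \beta_3\sqrt{\epsilon}$. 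Summing the three contributions verifies the inequality.

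Having verified the inequality, the total-work claim is obtained by direct multiplication of the three prescribed quantities: $K \cdot M \cdot v$ rearranges into $\frac{a^4 b^2}{\beta_1^4\beta_2^2 \epsilon^3}\cdot \frac{\log(ac/(\beta_1\beta_3\epsilon))}{-\log\alpha}$, exactly matching the stated expression. There is no substantive obstacle to this proof; the only genuine content is recognizing that the right way to set $K$, $M$, and $v$ is to inspect the sum $a/\sqrt{K} + b\sqrt{K/M} + \sqrt{K}\,c\,\alpha^v$ and allocate each of the three error sources a share $\beta_i\sqrt{\epsilon}$ of the total budget. The constraint $\beta_1 + \beta_2 + \beta_3 = 1$ is precisely what balances the allocation, and the proof just makes this splitting visible.
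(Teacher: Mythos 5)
Your proposal is correct and follows essentially the same route as the paper: a direct substitution of the prescribed $K$, $M$, $v$ showing the three error terms are balanced so that the sum collapses via $\beta_1+\beta_2+\beta_3=1$, followed by multiplying the three quantities to get the total work. The only cosmetic difference is that you verify the square-rooted form $a/\sqrt{K}+b\sqrt{K/M}+\sqrt{K}c\alpha^{v}\leq\sqrt{\epsilon}$ term by term, whereas the paper evaluates $a+bK/\sqrt{M}+Kc\alpha^{v}=a/\beta_{1}$ and then divides by $K$; both yield the bound with equality.
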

\begin{proof}
Firstly, we should verify the $\epsilon$ bound. We have that
\begin{eqnarray*}
a+b\frac{K}{\sqrt{M}}+Kc\alpha^{v} & = & a+b\frac{a^{2}}{\beta_{1}^{2}\epsilon}\frac{\beta_{1}\beta_{2}\epsilon}{ab}+\frac{a^{2}}{\beta_{1}^{2}\epsilon}c\frac{\beta_{1}\beta_{3}\epsilon}{ac}\\
 & = & a+a\frac{\beta_{2}}{\beta_{1}}+a\frac{\beta_{3}}{\beta_{1}},
\end{eqnarray*}
and hence that
\begin{eqnarray*}
\frac{1}{K}\left(a+b\frac{K}{\sqrt{M}}+Kc\alpha^{v}\right)^{2} & = & \frac{a^{2}}{K}\left(1+\frac{\beta_{2}}{\beta_{1}}+\frac{\beta_{3}}{\beta_{1}}\right)^{2}\\
 & = & \frac{1}{K}\frac{a^{2}}{\beta_{1}^{2}}\left(\beta_{1}+\beta_{2}+\beta_{3}\right)^{2}\\
 & \leq & \epsilon.
\end{eqnarray*}
Multiplying together th terms gives the second part of the result.
\end{proof}
We can also show that this solution is not too sub-optimal.
\begin{thm}
Suppose that $a,b,c,\alpha>0.$ If $K,M,v>0$ are set so that $\frac{1}{K}\left(a+b\frac{K}{\sqrt{M}}+Kc\alpha^{v}\right)^{2}\leq\epsilon$,
then
\[
KMv\geq\frac{a^{4}b^{2}}{\epsilon^{3}}\frac{\log\frac{ac}{\epsilon}}{(-\log\alpha)}.
\]
\end{thm}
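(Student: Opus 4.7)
The plan is to reduce the joint condition on $K$, $M$, $v$ to three separate lower bounds, one on each variable, and then multiply them together.

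First I would rewrite the hypothesis in a form that isolates the three contributions. Since all of $a$, $b$, $c$, $\alpha^v$, $K$, $M$, $v$ are positive, the inequality $\frac{1}{K}\bigl(a + b\tfrac{K}{\sqrt{M}} + Kc\alpha^v\bigr)^2 \le \epsilon$ is equivalent to $\frac{a}{\sqrt{K}} + b\sqrt{\tfrac{K}{M}} + \sqrt{K}\,c\alpha^v \le \sqrt{\epsilon}$ (after taking the square root and dividing by $\sqrt{K}$). Since the left-hand side is a sum of three nonnegative terms, each of them is individually at most $\sqrt{\epsilon}$.

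Next, I would convert each of those three bounds into an explicit lower bound on one of $K$, $M$, $v$. From $a/\sqrt{K}\le\sqrt{\epsilon}$ we get $K \ge a^2/\epsilon$. From $b\sqrt{K/M}\le\sqrt{\epsilon}$ we get $M \ge b^2 K/\epsilon$. From $\sqrt{K}\,c\alpha^v \le \sqrt{\epsilon}$ we get, after taking logarithms and using $-\log\alpha>0$, that $v \ge \log(\sqrt{K}\,c/\sqrt{\epsilon})/(-\log\alpha)$.

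Finally I would combine these bounds, substituting the first one into the second and third so that all three lower bounds are expressed in terms of $a,b,c,\epsilon,\alpha$ alone. Plugging $K\ge a^2/\epsilon$ into $M\ge b^2K/\epsilon$ yields $M \ge a^2 b^2/\epsilon^2$, and plugging it into the bound on $v$ yields $v \ge \log(ac/\epsilon)/(-\log\alpha)$. Multiplying the three lower bounds gives $KMv \ge (a^2/\epsilon)\cdot(a^2 b^2/\epsilon^2)\cdot\log(ac/\epsilon)/(-\log\alpha) = \frac{a^4 b^2}{\epsilon^3}\cdot\frac{\log(ac/\epsilon)}{-\log\alpha}$, which is exactly the claimed inequality. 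There is no real obstacle here; the only subtlety is being careful that $K \ge a^2/\epsilon$ can be substituted into both of the other bounds without losing any factors, which works because both bounds are monotone increasing in $K$.
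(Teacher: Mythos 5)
Your proposal is correct and follows essentially the same route as the paper: rewrite the hypothesis as $\frac{a}{\sqrt{K}}+b\sqrt{K/M}+\sqrt{K}c\alpha^{v}\leq\sqrt{\epsilon}$, bound each nonnegative term by $\sqrt{\epsilon}$ to obtain $K\geq a^{2}/\epsilon$, $M\geq a^{2}b^{2}/\epsilon^{2}$, $v\geq\log(ac/\epsilon)/(-\log\alpha)$, and multiply. Your write-up is in fact slightly more explicit than the paper's about how the lower bound on $K$ feeds into the bounds on $M$ and $v$.
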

\begin{proof}
The starting condition is equivalent to stating that
\[
\frac{a}{\sqrt{K}}+b\sqrt{\frac{K}{M}}+\sqrt{K}c\alpha^{v}\leq\sqrt{\epsilon}.
\]
Since all terms are positive, clearly each is less than $\sqrt{\epsilon}$.
From this follows that
\begin{eqnarray*}
K & \geq & \frac{a^{2}}{\epsilon}\\
M & \geq & \frac{b^{2}a^{2}}{\epsilon^{2}}\\
v & \geq & \frac{\log\frac{ac}{\epsilon}}{(-\log\alpha)}.
\end{eqnarray*}
Multiplying these together gives the result.\end{proof}
\begin{thm}
If $D\geq\max\left(\Vert\theta_{0}-\theta^{*}\Vert_{2},\frac{4R_{2}}{L}\log\frac{1}{\delta}\right)$,
then for all $\epsilon$ there is a setting of $KMv$ such that $f\left(\frac{1}{K}\sum_{k=1}^{K}\theta_{k}\right)-f(\theta^{*})\leq\epsilon_{f}$
with probability $1-\delta$ and 
\begin{eqnarray*}
KMv & \leq & \frac{32LR_{2}^{2}D^{4}}{\beta_{1}^{4}\beta_{2}^{2}\epsilon_{f}^{3}(1-\alpha)}\log\frac{4DR_{2}C}{\beta_{1}\beta_{3}\epsilon_{f}}\\
 & = & \mathcal{O}\left(\frac{LR_{2}^{2}D^{4}}{\epsilon_{f}^{3}(1-\alpha)}\log\frac{1}{\epsilon_{f}}\right)\\
 & = & \tilde{\mathcal{O}}\left(\frac{LR_{2}^{2}D^{4}}{\epsilon_{f}^{3}(1-\alpha)}\right).
\end{eqnarray*}
\end{thm}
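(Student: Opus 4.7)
The plan is to reduce the statement directly to Theorem \ref{thm:parameterized-convex-ub}, using Theorem \ref{thm:convergence-convex} to select the reduction parameters. Starting from the convex convergence guarantee, we have with probability at least $1-\delta$ (assuming $M \geq 3K/\log(1/\delta)$, which I would verify holds under the chosen schedule) that
\[
f\!\left(\tfrac{1}{K}\textstyle\sum_{k=1}^{K}\theta_{k}\right)-f(\theta^{*}) \leq \frac{8R_{2}^{2}}{KL}\Bigl(\tfrac{L\Vert\theta_{0}-\theta^{*}\Vert_{2}}{4R_{2}}+\log\tfrac{1}{\delta}+\tfrac{K}{\sqrt{M}}+KC\alpha^{v}\Bigr)^{2}.
\]
Therefore it suffices to drive the quantity $\tfrac{1}{K}(a+bK/\sqrt{M}+Kc\alpha^{v})^{2}$ below $\epsilon := \epsilon_{f} L/(8R_{2}^{2})$ with the identifications $a = L\Vert\theta_{0}-\theta^{*}\Vert_{2}/(4R_{2})+\log(1/\delta)$, $b=1$, and $c=C$.

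Next I would plug these values into the explicit schedule from Theorem \ref{thm:parameterized-convex-ub}, which guarantees that this quadratic is at most $\epsilon$ while giving a closed-form bound on $KMv$. The main simplification is to absorb the dependence on $\Vert\theta_{0}-\theta^{*}\Vert_{2}$ and $\log(1/\delta)$ into the single quantity $D$: by the hypothesis $D\geq\max(\Vert\theta_{0}-\theta^{*}\Vert_{2},\,4R_{2}\log(1/\delta)/L)$, both $L\Vert\theta_{0}-\theta^{*}\Vert_{2}/(4R_{2})$ and $\log(1/\delta)$ are at most $LD/(4R_{2})$, so $a\leq LD/(2R_{2})$.

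With this bound on $a$, a direct substitution into the upper bound $KMv = \frac{1}{\beta_{1}^{4}\beta_{2}^{2}}\cdot \frac{a^{4}b^{2}}{\epsilon^{3}}\cdot\frac{\log(ac/(\beta_{1}\beta_{3}\epsilon))}{(-\log\alpha)}$ yields the prefactor $32LR_{2}^{2}D^{4}/\epsilon_{f}^{3}$ after the $R_{2}$ and $L$ powers combine, and the logarithmic factor simplifies to $\log(4DR_{2}C/(\beta_{1}\beta_{3}\epsilon_{f}))$. The final cosmetic step is to replace $1/(-\log\alpha)$ by $1/(1-\alpha)$ using the standard inequality stated at the top of the ``Preliminary Results'' section.

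There is no genuine obstacle here: the proof is essentially arithmetic bookkeeping, since all the real work lives in Theorem \ref{thm:convergence-convex} (the probabilistic convergence guarantee) and Theorem \ref{thm:parameterized-convex-ub} (the optimized parameter schedule). The only thing to watch carefully is keeping the constants straight during the substitution $\epsilon = \epsilon_{f}L/(8R_{2}^{2})$ and verifying that the side condition $M \geq 3K/\log(1/\delta)$ required by Theorem \ref{thm:convergence-convex} is automatically satisfied by the schedule produced by Theorem \ref{thm:parameterized-convex-ub} (which it is, since $M$ is of order $K^{2}$ in that schedule while $K$ is merely of order $1/\epsilon$).
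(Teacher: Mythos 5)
Your proposal is correct and follows essentially the same route as the paper's own proof: the same identifications $a=L\Vert\theta_{0}-\theta^{*}\Vert_{2}/(4R_{2})+\log(1/\delta)$, $b=1$, $c=C$, $\epsilon=\epsilon_{f}L/(8R_{2}^{2})$, the same bound $a\leq LD/(2R_{2})$ from the hypothesis on $D$, the same substitution into the explicit schedule, and the same final replacement of $1/(-\log\alpha)$ by $1/(1-\alpha)$. Your extra remark about checking the side condition $M\geq3K/\log(1/\delta)$ is a small point of care the paper leaves implicit, not a different argument.
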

\begin{proof}
So, we apply this to the original theorem. Our settings are

\[
f\left(\frac{1}{K}\sum_{k=1}^{K}\theta_{k}\right)-f(\theta^{*})\leq\frac{8R_{2}^{2}}{KL}\left(\frac{L\Vert\theta_{0}-\theta^{*}\Vert_{2}}{4R_{2}}+\log\frac{1}{\delta}+\frac{K}{\sqrt{M}}+KC\alpha^{v}\right)^{2}.
\]
\begin{eqnarray*}
a & = & \frac{L\Vert\theta_{0}-\theta^{*}\Vert_{2}}{4R_{2}}+\log\frac{1}{\delta}\\
b & = & 1\\
c & = & C\\
\epsilon & = & \frac{\epsilon_{f}L}{8R_{2}^{2}}
\end{eqnarray*}

Note that, by the definition of $D$, $a\leq\frac{LD}{2R_{2}}$ and
so $ac\leq\frac{LDC}{2R_{2}}$. Thus, the total amount of work is

\begin{eqnarray*}
KMv & = & \frac{1}{\beta_{1}^{4}\beta_{2}^{2}}\frac{a^{4}b^{2}}{\epsilon^{3}}\frac{\log\frac{\beta_{1}\beta_{3}\epsilon}{ac}}{\log\alpha}\\
 & = & \frac{1}{\beta_{1}^{4}\beta_{2}^{2}}\frac{a^{4}b^{2}}{\epsilon^{3}}\frac{\log\frac{ac}{\beta_{1}\beta_{3}\epsilon}}{-\log\alpha}\\
 & \leq & \frac{1}{\beta_{1}^{4}\beta_{2}^{2}}\frac{1}{\epsilon^{3}}\left(\frac{LD}{2R_{2}}\right)^{4}\frac{\log\frac{LDC}{\beta_{1}\beta_{3}2R_{2}\epsilon}}{\log\alpha}\\
 & = & \frac{1}{\beta_{1}^{4}\beta_{2}^{2}}\frac{8^{3}R_{2}^{6}}{\epsilon_{f}^{3}L^{3}}\left(\frac{LD}{2R_{2}}\right)^{4}\frac{\log\frac{LDC8R_{2}^{2}}{\beta_{1}\beta_{3}2R_{2}\epsilon_{f}L}}{\log\alpha}\\
 & = & \frac{1}{\beta_{1}^{4}\beta_{2}^{2}}\frac{32LD^{4}R_{2}^{2}}{\epsilon_{f}^{3}}\frac{\log\frac{4DR_{2}C}{\beta_{1}\beta_{3}\epsilon_{f}}}{\log\alpha}\\
 & \leq & \frac{32LD^{4}R_{2}^{2}}{\beta_{1}^{4}\beta_{2}^{2}\epsilon^{3}(1-\alpha)}\log\frac{4DR_{2}C}{\beta_{1}\beta_{3}\epsilon}.
\end{eqnarray*}

\end{proof}

\section{Strongly Convex Convergence}

This section gives the main result for convergence this is true both
only in the regularized case where $\lambda>0.$ Again, the main difficulty
in this proof is showing that the sum of the norms of the errors of
estimated gradients is small. This proof is relatively easier, as
we simply bound all errors to be small with high probability, rather
than jointly bounding the sum of errors.
\begin{lem}
With probability at least $1-\delta$,
\[
\Vert e_{k+1}\Vert_{2}\leq\frac{R_{2}}{\sqrt{M}}\left(1+\sqrt{2\log\frac{1}{\delta}}\right)+2R_{2}C\alpha^{v}
\]
\end{lem}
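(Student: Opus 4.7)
The plan is to apply a single triangle-inequality decomposition that isolates the only random quantity (the Monte Carlo mean) and then control the resulting two pieces with tools already prepared in the appendix. Recalling from Algorithm 1 that
\[
e_{k+1}=\frac{1}{M}\sum_{i=1}^{M}t(x_i^k)-\mathbb{E}_{p_{\theta_k}}[t(X)]
\]
(the $\bar t$ and $\lambda\theta_k$ contributions in the estimated gradient cancel against the corresponding terms in the true gradient), the first step is the bound
\[
\Vert e_{k+1}\Vert_2 \;\le\; \left\Vert \frac{1}{M}\sum_{i=1}^M t(x_i^k) - \mathbb{E}_{q_k}[t(X)]\right\Vert_2 + \left\Vert \mathbb{E}_{q_k}[t(X)] - \mathbb{E}_{p_{\theta_k}}[t(X)]\right\Vert_2.
\]

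The rightmost summand is deterministic. Lemma \ref{lem:expectation-error-vs-tv-error} dominates it by $2R_2\Vert q_k - p_{\theta_k}\Vert_{TV}$, and since $q_k = \mathbb{M}_{\theta_k}^v r$ the fast-mixing assumption (\ref{eq:mixing-assumption}) forces $\Vert q_k - p_{\theta_k}\Vert_{TV}\le C\alpha^v$, producing the $2R_2 C\alpha^v$ piece of the stated bound with no use of randomness.

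For the leftmost summand, conditional on $\theta_k$ the samples $x_i^k$ are i.i.d.\ draws from $q_k$, so $t(x_i^k)$ are i.i.d.\ Hilbert-space vectors with common mean $\mathbb{E}_{q_k}[t(X)]$, and $\Vert t(x)\Vert_2\le R_2$ gives $\Vert t(x_i^k) - \mathbb{E}_{q_k}[t(X)]\Vert_2 \le 2R_2$. A single application of Theorem \ref{thm:Hoeffding-fixed-delta-form}, with its squared-norm budget chosen so that the prefactor $\sqrt{c/(4M)}$ collapses to $R_2/\sqrt{M}$, bounds this term with probability at least $1-\delta$ by $(R_2/\sqrt{M})(1+\sqrt{2\log(1/\delta)})$. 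Adding the deterministic bias yields the stated inequality. The only step requiring care is the constant bookkeeping inside the Hilbert-space Hoeffding bound so that the variance proxy reflects the squared-norm budget $(2R_2)^2 = 4R_2^2$ and the prefactor comes out exactly as $R_2/\sqrt{M}$; no union bound over iterations is invoked at this stage since only a single high-probability event enters, with the iteration-level union bound deferred to the main strongly-convex theorem.
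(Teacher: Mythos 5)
Your proof is correct and follows essentially the same route as the paper's: the identical triangle-inequality split of $e_{k+1}$ into the Monte Carlo error against $\mathbb{E}_{q_k}[t(X)]$ and the bias $\Vert\mathbb{E}_{q_k}[t(X)]-\mathbb{E}_{p_{\theta_k}}[t(X)]\Vert_2$, with the bias handled by Lemma \ref{lem:expectation-error-vs-tv-error} plus the mixing assumption and the sampling error by Theorem \ref{thm:Hoeffding-fixed-delta-form}. The constant bookkeeping you flag is a genuine ambiguity shared with the paper itself: applying Theorem \ref{thm:Hoeffding-fixed-delta-form} literally with $c=2R_2$ yields a prefactor $\sqrt{R_2/(2M)}$ rather than $R_2/\sqrt{M}$, and it is the former constant that the paper actually carries into the subsequent strongly convex convergence theorem.
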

\begin{proof}
Once we have the difference of the distributions, we can go after
the error in the gradient estimate. By definition,
\begin{align*}
\Vert e_{k+1}\Vert_{2} & =\Vert\frac{1}{M}\sum_{i=1}^{M}t(x_{i}^{k})-\mathbb{E}_{p_{\theta_{k}}}[t(X)]\Vert_{2}\\
 & \leq\Vert\frac{1}{M}\sum_{i=1}^{M}t(x_{i}^{k})-\mathbb{E}_{q_{k}}[t(X)]\Vert_{2}\\
 & ~+\Vert\mathbb{E}_{q_{k}}[t(X)]-\mathbb{E}_{p_{\theta_{k}}}[t(X)]\Vert_{2}.
\end{align*}
 Consider the second term. We know by Lemma \ref{lem:expectation-error-vs-tv-error}
and the assumption of mixing speed 
\begin{equation}
\Vert\mathbb{E}_{q_{k}}[t(X)]-\mathbb{E}_{p_{k}}[t(X)]\Vert_{2}\leq2R_{2}\Vert q_{k}-p_{k}\Vert_{TV}\leq2R_{2}C\alpha^{v}.\label{eq:second-term-bound}
\end{equation}
Now, consider the first term. We know that $\mathbb{E}_{q_{k}}[t(X)]$
is the expected value of $\frac{1}{M}\sum_{i=1}^{M}t(x_{i}^{k})$.
We also know that $||t(x_{i}^{k})-\mathbb{E}_{q_{k}}[t(X)]||\leq2R_{2}.$
Thus, we can apply Theorem \ref{thm:Hoeffding-fixed-delta-form} to
get that, with probability $1-\delta$,
\begin{equation}
\left\Vert \frac{1}{M}\sum_{i=1}^{M}t\left(x_{i}^{k}\right)-\mathbb{E}_{q_{k}}[t(X)]\right\Vert \leq\frac{R_{2}}{\sqrt{M}}\left(1+\sqrt{2\log\frac{1}{\delta}}\right).\label{eq:first-term-bound}
\end{equation}

Adding together Equations \ref{eq:second-term-bound} and \ref{eq:first-term-bound}
gives the result.\end{proof}
\begin{thm}
With probability at least $1-\delta$,
\[
\Vert\theta_{K}-\theta^{*}\Vert_{2}\leq(1-\frac{\lambda}{L})^{K}\Vert\theta_{0}-\theta^{*}\Vert_{2}+\frac{L}{\lambda}\left(\sqrt{\frac{R_{2}}{2M}}\left(1+\sqrt{2\log\frac{K}{\delta}}\right)+2R_{2}C\alpha^{v}\right)
\]
\end{thm}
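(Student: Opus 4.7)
The plan is to combine the single-step high-probability gradient error bound established in the immediately preceding lemma with Corollary \ref{cor:optimization-strongly-convex}, which bounds the iterate error of projected gradient descent on strongly convex functions under a uniform bound on $\Vert e_k\Vert_2$. The two ingredients I need are (i) a quantity $r$ such that $\Vert e_k\Vert_2\leq r$ for every $k=1,\dots,K$ simultaneously with probability at least $1-\delta$, and (ii) verification that the hypotheses of the corollary hold for our $f$ and the iterates of Algorithm 1.

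For ingredient (i), I would apply the preceding lemma with confidence parameter $\delta/K$ in place of $\delta$. This gives, for each fixed $k$, that with probability at least $1-\delta/K$,
\[
\Vert e_k\Vert_2 \;\leq\; r \;:=\; \frac{R_2}{\sqrt{M}}\left(1+\sqrt{2\log\tfrac{K}{\delta}}\right) + 2R_2 C\alpha^{v}.
\]
A union bound over $k=1,\dots,K$ then yields that, with probability at least $1-\delta$, this inequality holds simultaneously for every iteration. This is precisely where the $\log(1/\delta)$ of the single-step lemma is upgraded to $\log(K/\delta)$ in the statement being proved.

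For ingredient (ii), I would check that $f$ is $\lambda$-strongly convex with $L$-Lipschitz gradient: strong convexity follows from the $\frac{\lambda}{2}\Vert\theta\Vert_2^2$ term together with the fact that the Hessian of $A$ is the covariance of $t(X)$ (hence positive semidefinite), while the Lipschitz bound is exactly Theorem \ref{thm:Lipschitz-constant}. The update in Algorithm 1 matches the one in Corollary \ref{cor:optimization-strongly-convex} verbatim. Applied on the high-probability event of ingredient (i), the corollary gives
\[
\Vert\theta_K - \theta^*\Vert_2 \;\leq\; \left(1-\tfrac{\lambda}{L}\right)^{K}\Vert\theta_0-\theta^*\Vert_2 + \tfrac{rL}{\lambda},
\]
and substituting the definition of $r$ yields the claimed bound.

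The main obstacle is really only the bookkeeping around the union bound: one must commit to the uniform-in-$k$ perspective early (as the paper explains, Schmidt et al.'s strongly convex result weights gradient errors non-uniformly by $(1-\lambda/L)^{-k}$, so a summed Bernstein-style bound as in the convex case would not combine as cleanly). Once that design choice is made, the argument is essentially assembly: apply the lemma at scale $\delta/K$, take a union bound, and invoke the corollary. No further concentration machinery or mixing analysis is needed beyond what has already been developed.
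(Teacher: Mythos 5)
Your proposal is correct and follows exactly the paper's own route: apply the single-step error lemma at confidence level $\delta/K$, union bound over the $K$ iterations to get a uniform $r$, and plug into Corollary \ref{cor:optimization-strongly-convex}. The only discrepancy is cosmetic and inherited from the paper itself: the lemma states the sampling term as $\frac{R_{2}}{\sqrt{M}}$ while the theorem writes $\sqrt{\frac{R_{2}}{2M}}$ (the latter being what Theorem \ref{thm:Hoeffding-fixed-delta-form} actually gives with $c=2R_{2}$), so your substitution of $r$ reproduces the lemma's form rather than the theorem's, which is not a flaw in your reasoning.
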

\begin{proof}
Apply the previous Lemma to bound bound on $\Vert e_{k+1}\Vert_{2}$
with probability at least $1-\delta'$ where $\delta'=\delta/K$.
Then, plug this into the main optimization result in Corollary \ref{cor:optimization-strongly-convex}.\end{proof}
\begin{thm}
Suppose $a,b,c>0$. Then for any $K,M,v$ such that $\gamma^{K}a+\frac{b}{\sqrt{M}}\sqrt{\log\frac{K}{\delta}}+c\alpha^{v}\leq\epsilon.$
it must be the case that
\[
KMv\geq\frac{b^{2}}{\epsilon^{2}}\frac{\log\frac{a}{\epsilon}\log\frac{c}{\epsilon}}{(-\log\gamma)(-\log\alpha)}\log\left(\frac{\log\frac{a}{\epsilon}}{\delta(-\log\gamma)}\right)
\]
\end{thm}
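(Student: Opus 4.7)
The plan is to follow the sketch already given: since $a,b,c,\gamma^K,\alpha^v,\log(K/\delta)/M$ are all positive, the inequality
\[
\gamma^{K}a+\frac{b}{\sqrt{M}}\sqrt{\log\tfrac{K}{\delta}}+c\alpha^{v}\leq\epsilon
\]
forces each of the three summands individually to be at most $\epsilon$. From $\gamma^{K}a\leq\epsilon$ we get $K\geq\log(a/\epsilon)/(-\log\gamma)$, and from $c\alpha^{v}\leq\epsilon$ we get $v\geq\log(c/\epsilon)/(-\log\alpha)$. These two bounds are standard and immediate.

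The slightly subtle step is the lower bound on $M$, because the middle term couples $K$ and $M$ through $\log(K/\delta)$. First I would rearrange $\frac{b}{\sqrt{M}}\sqrt{\log(K/\delta)}\leq\epsilon$ into
\[
M\geq\frac{b^{2}\log(K/\delta)}{\epsilon^{2}}.
\]
Then, to remove the $K$-dependence on the right-hand side, I would substitute in the lower bound on $K$ derived above, yielding
\[
M\geq\frac{b^{2}}{\epsilon^{2}}\log\!\left(\frac{\log(a/\epsilon)}{\delta(-\log\gamma)}\right).
\]
Here I use that $\log$ is monotone so substituting a smaller value of $K$ on the right only weakens the bound, keeping it valid.

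Finally, I would multiply the three lower bounds for $K$, $M$, $v$ together to obtain the claimed product lower bound
\[
KMv\geq\frac{b^{2}}{\epsilon^{2}}\,\frac{\log(a/\epsilon)\log(c/\epsilon)}{(-\log\gamma)(-\log\alpha)}\,\log\!\left(\frac{\log(a/\epsilon)}{\delta(-\log\gamma)}\right).
\]
No concentration, optimization, or mixing-time machinery is needed; the entire argument is elementary manipulation of inequalities. The only conceptual care is in the coupling step above, where one must be sure that substituting the $K$-lower bound into $\log(K/\delta)$ produces a valid (not merely heuristic) lower bound on $M$; this is immediate from monotonicity of $\log$. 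So in truth there is no real obstacle — the argument is a matter-of-fact consequence of positivity and monotonicity.
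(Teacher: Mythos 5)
Your proposal is correct and follows essentially the same route as the paper's own proof: each of the three positive summands must individually be at most $\epsilon$, giving lower bounds on $K$ and $v$ directly and on $M$ via $M\geq\frac{b^{2}}{\epsilon^{2}}\log\frac{K}{\delta}\geq\frac{b^{2}}{\epsilon^{2}}\log\frac{\log(a/\epsilon)}{\delta(-\log\gamma)}$ by monotonicity of $\log$, and then multiplying the three bounds. No differences worth noting.
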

\begin{proof}
Clearly, we must have that each term is at most $\epsilon$, yielding
that
\begin{eqnarray*}
K & \geq & \frac{\log\frac{\epsilon}{a}}{\log\gamma}\\
M & \geq & \frac{b^{2}}{\epsilon^{2}}\log\frac{K}{\delta}\geq\frac{b^{2}}{\epsilon^{2}}\log\frac{\log\frac{\epsilon}{a}}{\delta\log\gamma}\\
v & \geq & \frac{\log(c/\epsilon)}{(-\log\alpha)}
\end{eqnarray*}
From this we obtain that
\begin{eqnarray*}
KMv & \geq & \frac{b^{2}}{\epsilon^{2}}\frac{\log\frac{a}{\epsilon}\log(c/\epsilon)}{(-\log\gamma)(-\log\alpha)}\log\left(\frac{\log\frac{a}{\epsilon}}{\delta(-\log\gamma)}\right).
\end{eqnarray*}
\end{proof}
\begin{thm}
Suppose that $a,b,c,\alpha>0.$ If $\beta_{1}+\beta_{2}+\beta_{3}=1$,
$\beta_{i}>0$, then setting
\begin{eqnarray*}
K & = & \log(\frac{a}{\beta_{1}\epsilon})/(-\log\gamma)\\
M & = & \frac{b^{2}}{\epsilon^{2}\beta_{2}^{2}}\left(1+\sqrt{2\log\frac{K}{\delta}}\right)^{2}\\
v & = & \log\left(\frac{c}{\beta_{3}\epsilon}\right)/(-\log\alpha)
\end{eqnarray*}
is sufficient to guarantee that $\gamma^{K}a+\frac{b}{\sqrt{M}}(1+\sqrt{2\log\frac{K}{\delta}})+c\alpha^{v}\leq\epsilon$
with a total work of at most 
\[
KMV\leq\frac{b^{2}}{\epsilon^{2}\beta_{2}^{2}}\frac{\log\left(\frac{a}{\beta_{1}\epsilon}\right)\log\left(\frac{c}{\beta_{3}\epsilon}\right)}{(-\log\gamma)(-\log\alpha)}\left(1+\sqrt{2\log\frac{\log(\frac{a}{\beta_{1}\epsilon})}{\delta(-\log\gamma)}}\right)^{2}
\]
\end{thm}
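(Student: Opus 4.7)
The plan is a direct verification: the chosen $K,M,v$ are exactly calibrated so that each of the three terms $\gamma^{K}a$, $\frac{b}{\sqrt{M}}(1+\sqrt{2\log(K/\delta)})$, and $c\alpha^{v}$ is at most $\beta_{i}\epsilon$, and then the product $KMv$ is obtained by straightforward multiplication, with one extra substitution to eliminate the implicit dependence on $K$ inside $\log(K/\delta)$.

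First I would handle the error bound. With $K=\log(a/(\beta_{1}\epsilon))/(-\log\gamma)$, one has $(-\log\gamma)K=\log(a/(\beta_{1}\epsilon))$, so $\gamma^{K}=\beta_{1}\epsilon/a$ and thus $\gamma^{K}a=\beta_{1}\epsilon$. Similarly $v=\log(c/(\beta_{3}\epsilon))/(-\log\alpha)$ gives $c\alpha^{v}=\beta_{3}\epsilon$. For the middle term, $M=(b/(\beta_{2}\epsilon))^{2}(1+\sqrt{2\log(K/\delta)})^{2}$ is solved precisely so that $\frac{b}{\sqrt{M}}(1+\sqrt{2\log(K/\delta)})=\beta_{2}\epsilon$. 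Summing and using $\beta_{1}+\beta_{2}+\beta_{3}=1$ gives the desired $\epsilon$ bound with equality.

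Next I would multiply $K$, $M$, and $v$. Directly,
\[
KMv = \frac{\log(a/(\beta_{1}\epsilon))}{-\log\gamma}\cdot\frac{b^{2}}{\epsilon^{2}\beta_{2}^{2}}\bigl(1+\sqrt{2\log(K/\delta)}\bigr)^{2}\cdot\frac{\log(c/(\beta_{3}\epsilon))}{-\log\alpha}.
\]
The only non-cosmetic step is to replace the $K$ that appears inside $\log(K/\delta)$ by its explicit value, giving $\log(K/\delta)=\log\!\bigl(\log(a/(\beta_{1}\epsilon))/(\delta(-\log\gamma))\bigr)$. Substituting this back yields precisely the stated bound.

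There is no real obstacle; the mild subtlety is just the substitution of $K$ into $\log(K/\delta)$, which does not require any inequality because the theorem is stated as an equality on $M$ (and hence on the resulting work product). If one preferred a sufficiency statement instead, every step would go through with ``$\geq$'' on $K,M,v$ and ``$\leq$'' on the terms, yielding the same inequality on $KMv$.
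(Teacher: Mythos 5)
Your proposal is correct and matches the paper's own proof: the paper likewise defines $K,M,v$ by setting each of the three error terms equal to $\beta_{i}\epsilon$, sums them using $\beta_{1}+\beta_{2}+\beta_{3}=1$, and obtains the work bound by multiplying the three quantities with $K$ substituted into $\log(K/\delta)$. No issues.
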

\begin{proof}
We define the errors so that
\begin{eqnarray*}
\gamma^{K}a & = & \epsilon\beta_{1}\\
\frac{b}{\sqrt{M}}(1+\sqrt{2\log\frac{K}{\delta}}) & = & \epsilon\beta_{2}\\
c\alpha^{v} & = & \epsilon\beta_{3}.
\end{eqnarray*}
Solving, we obtain that
\begin{eqnarray*}
K & = & \log(\frac{a}{\beta_{1}\epsilon})/(-\log\gamma)\\
M & = & \frac{b^{2}}{\epsilon^{2}\beta_{2}^{2}}\left(1+\sqrt{2\log\frac{K}{\delta}}\right)^{2}\\
v & = & \log\left(\frac{c}{\beta_{3}\epsilon}\right)/(-\log\alpha).
\end{eqnarray*}
This yields that the final amount of work is 
\begin{eqnarray*}
KMv & \leq & \frac{\log\left(\frac{a}{\beta_{1}\epsilon}\right)\log\left(\frac{c}{\beta_{3}\epsilon}\right)}{(-\log\gamma)(-\log\alpha)}\frac{b^{2}}{\epsilon^{2}\beta_{2}^{2}}\left(1+\sqrt{2\log\frac{\log(\frac{a}{\beta_{1}\epsilon})}{\delta(-\log\gamma)}}\right)^{2}
\end{eqnarray*}
\end{proof}
\begin{rem}
For example, you might choose $\beta_{2}=\frac{1}{2},\beta_{1}=\frac{1}{4}$
and $\beta_{3}=\frac{1}{4}$, in which case the total amount of work
is bounded by
\begin{eqnarray*}
KMv & \leq & \frac{4b^{2}}{\epsilon^{2}}\frac{\log\left(\frac{4a}{\epsilon}\right)\log\left(\frac{4c}{\epsilon}\right)}{(-\log\gamma)(-\log\alpha)}\left(1+\sqrt{2\log\frac{\log(\frac{4a}{\epsilon})}{\delta(-\log\gamma)}}\right)^{2}\\
 & = & \frac{4b^{2}}{\epsilon^{2}}\frac{\left(\log\left(\frac{a}{\epsilon}\right)+\log4\right)(\log\left(\frac{4c}{\epsilon}\right)+\log4)}{(-\log\gamma)(-\log\alpha)}\left(1+\sqrt{2\log\frac{\log(\frac{a}{\epsilon})+\log4}{\delta(-\log\gamma)}}\right)^{2}
\end{eqnarray*}

Or, if you choose $\beta_{2}=1/\sqrt{2}$ and $\beta_{1}=\beta_{3}=(1-1/\sqrt{2})/2\approx0.1464$,
then you get the bound of
\begin{eqnarray*}
KMV & \leq & \frac{2b^{2}}{\epsilon^{2}}\frac{(\log\left(\frac{a}{\epsilon}\right)+1.922)(\log\left(\frac{c}{\beta_{3}}\right)+1.922)}{(-\log\gamma)(-\log\alpha)}\left(1+\sqrt{2\log\frac{\log(\frac{a}{\epsilon})+1.922}{\delta(-\log\gamma)}}\right)^{2}
\end{eqnarray*}
which is not too much worse than the lower-bound.\end{rem}
\begin{cor}
If we choose \vspace{-10pt}

\begin{eqnarray*}
K & \geq & \frac{L}{\lambda}\log\left(\frac{\Vert\theta_{0}-\theta\Vert_{2}}{\beta_{1}\epsilon}\right)\\
M & \geq & \frac{L^{2}R_{2}}{2\epsilon^{2}\beta_{2}^{2}\lambda^{2}}\left(1+\sqrt{2\log\frac{K}{\delta}}\right)^{2}\\
v & \geq & \frac{1}{1-\alpha}\log\left(\frac{2LR_{2}C}{\beta_{3}\epsilon\lambda}\right)
\end{eqnarray*}
then $\Vert\theta_{K}-\theta^{*}\Vert_{2}\leq\epsilon$ with probability
at least $1-\delta$, and the total amount of work is bounded by
\[
KMv\leq\frac{1}{\epsilon^{2}}\left(\frac{L}{\lambda}\right)^{3}\frac{R_{2}}{2\beta_{2}^{2}(1-\alpha)}\log\left(\frac{\Vert\theta_{0}-\theta\Vert_{2}}{\beta_{1}\epsilon}\right)\left(1+\sqrt{2\log\left(\frac{L}{\lambda\delta}\log\left(\frac{\Vert\theta_{0}-\theta\Vert_{2}}{\beta_{1}\epsilon}\right)\right)}\right)^{2}
\]
\end{cor}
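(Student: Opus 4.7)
The plan is a direct substitution into the parameterized schedule theorem established earlier in this section. The strongly convex convergence theorem bounds $\Vert\theta_K - \theta^*\Vert_2$ by an expression of exactly the template form $\gamma^K a + (b/\sqrt{M})(1+\sqrt{2\log(K/\delta)}) + c\alpha^v$, under the identifications $\gamma = 1-\lambda/L$, $a = \Vert\theta_0-\theta^*\Vert_2$, $b = (L/\lambda)\sqrt{R_2/2}$, and $c = 2LR_2 C/\lambda$. Applying the schedule theorem with these values immediately produces explicit settings of $K$, $M$, and $v$ ensuring the template sum is at most $\epsilon$, hence $\Vert\theta_K-\theta^*\Vert_2 \leq \epsilon$ with probability at least $1-\delta$.

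To convert those settings into the cleaner sufficient conditions stated in the corollary, I would apply two elementary bounds recorded in the preliminary-results section: $-\log(1-x) \geq x$ for $x \in (0,1)$, giving $1/(-\log(1-\lambda/L)) \leq L/\lambda$; and $-\log\alpha \geq 1-\alpha$, giving $1/(-\log\alpha) \leq 1/(1-\alpha)$. Substituting the specific $a$, $b$, $c$ together with these two bounds converts $K = \log(a/(\beta_1\epsilon))/(-\log\gamma)$ into the sufficient condition $K \geq (L/\lambda)\log(\Vert\theta_0-\theta\Vert_2/(\beta_1\epsilon))$, converts $M = (b^2/(\epsilon^2\beta_2^2))(1+\sqrt{2\log(K/\delta)})^2$ into the stated expression, and converts $v = \log(c/(\beta_3\epsilon))/(-\log\alpha)$ into $v \geq (1/(1-\alpha))\log(2LR_2 C/(\beta_3\epsilon\lambda))$. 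Since each of the three terms in the template bound is monotone in its corresponding parameter, replacing the equalities by $\geq$ only tightens the guarantee, so $\Vert\theta_K - \theta^*\Vert_2 \leq \epsilon$ still holds.

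For the work bound I would multiply the three sufficient values. The prefactors combine to $L^3 R_2/(2\epsilon_\theta^2 \beta_2^2 \lambda^3(1-\alpha))$, the $K$-schedule contributes $\log(\Vert\theta_0-\theta\Vert_2/(\beta_1\epsilon_\theta))$, and $M$ contributes the $(1+\sqrt{2\log(K/\delta)})^2$ factor. Substituting the upper bound on $K$ in terms of $(L/\lambda)\log(\Vert\theta_0-\theta\Vert_2/(\beta_1\epsilon_\theta))$ inside the inner logarithm then produces the nested logarithm appearing in the stated bound.

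The main obstacle is purely bookkeeping: correctly identifying the template constants $a$, $b$, $c$, $\gamma$ from the prior convergence theorem, tracking how each $\beta_i$ propagates through the final prefactor, and making sure every application of $-\log(1-x) \geq x$ or $-\log\alpha \geq 1-\alpha$ moves in the direction that preserves the sufficient-condition role of the stated lower bounds on $K$, $M$, and $v$.
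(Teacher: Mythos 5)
Your proposal matches the paper's own proof: the same identification of the template constants $\gamma=1-\lambda/L$, $a=\Vert\theta_0-\theta\Vert_2$, $b=(L/\lambda)\sqrt{R_2/2}$, $c=2LR_2C/\lambda$, the same application of the explicit-schedule theorem, the same use of $-1/\log\gamma\leq L/\lambda$ and $-1/\log\alpha\leq 1/(1-\alpha)$ to clean up $K$ and $v$, and the same multiplication of the three settings to obtain the work bound. No gaps.
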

\begin{proof}
Apply the previous convergence theory to our setting. We equate
\[
(1-\frac{\lambda}{L})^{K}\Vert\theta_{0}-\theta^{*}\Vert_{2}+\frac{L}{\lambda}\left(\sqrt{\frac{R_{2}}{2M}}\left(1+\sqrt{2\log\frac{K}{\delta}}\right)+2R_{2}C\alpha^{v}\right)=\gamma^{K}a+\frac{b}{\sqrt{M}}(1+\sqrt{2\log\frac{K}{\delta}})+c\alpha^{v}.
\]
This requires the constants 
\begin{eqnarray*}
\gamma & = & 1-\frac{\lambda}{L}\\
a & = & \Vert\theta_{0}-\theta\Vert_{2}\\
b & = & \frac{L}{\lambda}\sqrt{\frac{R_{2}}{2}}\\
c & = & 2LR_{2}C/\lambda
\end{eqnarray*}
Thus, we will make the choices
\begin{eqnarray*}
K & = & \log(\frac{a}{\beta_{1}\epsilon})/(-\log\gamma)\\
 & \leq & \log(\frac{\Vert\theta_{0}-\theta\Vert_{2}}{\beta_{1}\epsilon})/(1-\gamma)\\
 & = & \frac{L}{\lambda}\log(\frac{\Vert\theta_{0}-\theta\Vert_{2}}{\beta_{1}\epsilon})\\
M & = & \frac{b^{2}}{\epsilon^{2}\beta_{2}^{2}}\left(1+\sqrt{2\log\frac{K}{\delta}}\right)^{2}\\
 & = & \frac{L^{2}R_{2}}{2\epsilon^{2}\beta_{2}^{2}\lambda^{2}}\left(1+\sqrt{2\log\frac{K}{\delta}}\right)^{2}\\
v & = & \log\left(\frac{c}{\beta_{3}\epsilon}\right)/(-\log\alpha)\\
 & = & \log\left(\frac{2LR_{2}C}{\beta_{3}\epsilon\lambda}\right)/(-\log\alpha)\\
 & \leq & \frac{1}{1-\alpha}\log\left(\frac{2LR_{2}C}{\beta_{3}\epsilon\lambda}\right)
\end{eqnarray*}
This means a total amount of work of
\begin{eqnarray*}
KMv & = & =\frac{L}{\lambda}\log(\frac{\Vert\theta_{0}-\theta\Vert_{2}}{\beta_{1}\epsilon})\frac{L^{2}R_{2}}{2\epsilon^{2}\beta_{2}^{2}\lambda^{2}(1-\alpha)}\left(1+\sqrt{2\log\left(\frac{L}{\lambda\delta}\log\left(\frac{\Vert\theta_{0}-\theta\Vert_{2}}{\beta_{1}\epsilon}\right)\right)}\right)^{2}\log\left(\frac{2LR_{2}C}{\beta_{3}\epsilon\lambda}\right)\\
 & = & \frac{1}{\epsilon^{2}}\left(\frac{L}{\lambda}\right)^{3}\frac{R_{2}}{2\beta_{2}^{2}(1-\alpha)}\log\left(\frac{\Vert\theta_{0}-\theta\Vert_{2}}{\beta_{1}\epsilon}\right)\left(1+\sqrt{2\log\left(\frac{L}{\lambda\delta}\log\left(\frac{\Vert\theta_{0}-\theta\Vert_{2}}{\beta_{1}\epsilon}\right)\right)}\right)^{2}.
\end{eqnarray*}
\end{proof}

\end{document}